\def\informs{}
\pgfplotsset{compat=1.16,}
\pgfplotsset{select coords between index/.style 2 args={
    x filter/.code={
        \ifnum\coordindex<#1\fi
        \ifnum\coordindex>#2\fi
    }
}}
\renewcommand{\P}{\mathcal{P}}  
\newcommand{\D}{\mathcal{D}}
\renewcommand{\H}{\mathcal{H}}
\newcommand{\mubar}{\overline{\mu}}
\newcommand{\basealgo}{\Pi}
\newcommand{\actionset}{\mathcal{A}}
\newcommand{\Exp}[1]{\mathbb E \left[ #1 \right]} 
\renewcommand{\Pr}{\mathbb{P}}
\newcommand{\OPT}{\textsc{Opt}}
\newcommand{\pfail}{\delta}
\newcommand{\B}{\mathcal{B}}
\newcommand{\eqdist}{\stackrel{d}{=}}
\newcommand{\point}{\mathbf{p}}
\newcommand{\pointi}[1]{\point^{(#1)}}
\newcommand{\pointset}{\mathcal{S}}
\newcommand{\budget}{B}
\newcommand{\points}{\vec{\point}}
\newcommand{\effortgeneric}{\beta}
\newcommand{\effort}{\vec{\beta}}
\newcommand{\efforti}[1]{\beta^{(#1)}}
\newcommand{\reg}{\mathcal{R}}
\newcommand{\effortset}{\mathcal{B}}
\newcommand{\regret}{\textsc{Reg}}
\newcommand{\A}{\mathcal{A}}
\newcommand{\action}{A} 
\newcommand{\actiont}[1]{\action_{#1}} 
\newcommand{\actionhist}{\actiongeneric^\mathcal{H}} 
\newcommand{\actionhistt}[1]{\actionhist_{#1}} 
\newcommand{\rewardhist}{R^\mathcal{H}} 
\newcommand{\rewardhistt}[1]{\rewardhist_{#1}} 
\newcommand{\actiongeneric}{a} 
\newcommand{\history}{\mathcal{H}^{\textit{hist}}}
\newcommand{\Hon}{\mathcal{H}^{\textit{used}}} 
\newcommand{\reward}{\mu}
\newcommand{\rewarddist}{\Re}
\newcommand{\obsreward}{R}
\newcommand{\obsrewardt}[1]{\obsreward_{#1}}
\newcommand{\dmax}{d_{\text{max}}}
\newcommand{\Optimal}{\textsf{Optimal}\xspace}
\newcommand{\Random}{\textsf{Random}\xspace}
\newcommand{\Full}{\textsf{Full Start}\xspace}
\newcommand{\Ignorant}{\textsf{Ignorant}\xspace}
\newcommand{\ArtReplay}{\textsf{Artificial Replay}\xspace}
\newcommand{\Regressor}{\textsf{Regressor}\xspace}
\newcommand{\CMABCRA}{CMAB-CRA\xspace}
\newcommand{\UCB}{\textsf{UCB}\xspace}
\newcommand{\MonUCB}{\textsf{MonUCB}\xspace}
\newcommand{\adaalgo}{\textsf{AdaMonUCB}\xspace}
\newcommand{\iidata}{\textsf{IIData}\xspace}
\newcommand{\rel}{\doteq}
\newcommand{\psiopt}{\psi^*}
\newcommand{\aopt}{\actiongeneric^{\star}}
\newcommand{\abs}[1]{\left\lvert #1 \right\rvert}
\mathchardef\mhyphen="2D 
\DeclareMathOperator*{\argmin}{arg\,min}
\DeclareMathOperator*{\argmax}{arg\,max}
\let\originalleft\left
\let\originalright\right
\renewcommand{\left}{\mathopen{}\mathclose\bgroup\originalleft}
\renewcommand{\right}{\aftergroup\egroup\originalright}
\renewcommand{\subparagraph}[1]{\noindent {\em #1.}~~}
\newenvironment{rproofof}[1]{ \ifdefined\informs \begin{proof}{\it Proof of #1.}
\else \begin{proof}[Proof of #1] \fi }{ \ifdefined\informs 
\Halmos \end{proof} \else \end{proof} \fi  }
\newenvironment{rproof}{
\noindent \ifdefined\informs \proof{\it Proof.}
\else \begin{proof} \fi }{ \ifdefined\informs 
\Halmos \endproof \else \end{proof} \fi}
\newcommand{\lily}[1]{\textcolor{blue}{[Lily: #1]}}
\definecolor{edits}{rgb}{1,0,0}
\def\eqref#1{equation~\ref{#1}}
\def\1{\bm{1}}
\DeclareMathAlphabet{\mathsfit}{\encodingdefault}{\sfdefault}{m}{sl}
\SetMathAlphabet{\mathsfit}{bold}{\encodingdefault}{\sfdefault}{bx}{n}
\begin{document}



\RUNAUTHOR{Banerjee et al.}

\RUNTITLE{Artificial Replay}

\TITLE{Artificial Replay: A Meta-Algorithm for Harnessing Historical Data in Bandits}

\ARTICLEAUTHORS{%
\AUTHOR{Siddhartha Banerjee}
\AFF{Cornell University, School of Operations Research and Information Engineering, \EMAIL{sbanerjee@cornell.edu}}

\AUTHOR{Sean R. Sinclair}
\AFF{Northwestern University, Department of Industrial Engineering and Management Sciences, \EMAIL{sean.sinclair@northwestern.edu}}

\AUTHOR{Milind Tambe}
\AFF{Harvard University, Department of Computer Science, \EMAIL{milind\_tambe@harvard.edu}}

\AUTHOR{Lily Xu}
\AFF{University of Oxford, Leverhulme Centre for Nature Recovery \& Department of Economics, \EMAIL{lily.x@columbia.edu}}

\AUTHOR{Christina Lee Yu}
\AFF{Cornell University, School of Operations Research and Information Engineering, \EMAIL{cleeyu@cornell.edu}}
} 

\ABSTRACT{Most real-world deployments of bandit algorithms exist somewhere in between the {\em offline} and {\em online} set-up, where some historical data is available upfront and additional data is collected dynamically online.  How best to incorporate historical data to ``warm start'' bandit algorithms is an open question: naively initializing reward estimates using all historical samples can suffer from spurious data and imbalanced data coverage, leading to data inefficiency (amount of historical data used) --- particularly for continuous action spaces. To address these challenges, we propose \ArtReplay, a meta-algorithm for incorporating historical data into \emph{any arbitrary base bandit algorithm}. We show that \ArtReplay uses only a fraction of the historical data compared to a full warm-start approach, while still achieving identical regret for base algorithms that satisfy \emph{independence of irrelevant data} (\iidata), a novel and broadly applicable property that we introduce. We complement these theoretical results with experiments on $K$-armed bandits and continuous combinatorial bandits, on which we model green security domains using real poaching data. Our results show the practical benefits of \ArtReplay for improving data efficiency, including for base algorithms that do not satisfy \iidata.
}%




\KEYWORDS{Multi-Armed Bandits, Historical Data, Regret Analysis, Combinatorial Bandits, Green Security}


\maketitle


\section{Introduction}

{Multi-armed bandits model decision-making settings with repeated choices between multiple actions (known as ``arms''), where the expected reward of each arm is unknown, and each arm pull yields a noisy sample from the true reward distribution.} Multi-armed bandits and their variants have been used effectively to model many real-world problems. Resulting algorithms have been applied to wireless networks~\citep{zuo2021combinatorial}, COVID testing regulations~\citep{bastani2021efficient}, and wildlife conservation to prevent poaching~\citep{xu2021dual}. Typical bandit algorithms are designed in one of two regimes: {\em offline} or {\em online}, depending on the assumptions on how data is collected.  In offline settings, the entire dataset is provided to the algorithm upfront, while in the online setting the algorithm starts from scratch and dynamically collects data online.  However, most practical deployments exist between these two extremes; in applying bandits to wildlife conservation, for example, we may have years of historical patrol records that could help learn poaching risk before initiating any bandit algorithm executed online to assign new patrol patterns.

A priori, it is not immediately clear how one should initialize an algorithm given historical data (especially when using more complex bandit algorithms).  As a warm-up, consider the well-studied UCB1 algorithm from \citet{lai1985asymptotically} for the $K$-armed bandit problem.  For a finite set of actions $\actiongeneric \in [K]$ with unknown mean reward $\mu(\actiongeneric)$, the algorithm maintains estimates $\mubar_t(\actiongeneric)$ for the mean reward of action~$\actiongeneric \in [K]$ and a count~$n_t(\actiongeneric)$ for the number of times the action~$\actiongeneric$ has been selected by the algorithm.  At time $t$ the algorithm then selects the action $A_t$ which maximizes the so-called upper confidence bound (UCB): \[\UCB_t(a) = \mubar_t( \actiongeneric) + O \left(1 / \sqrt{n_t(\actiongeneric)} \right) \ ,\]
where the $O(\cdot)$ drops polylogarithmic constants depending on the total number of timesteps~$T$.

A simple approach to incorporate historical data for this algorithm would be to initialize the mean $\mubar_t(\actiongeneric)$ and number of samples $n_t(\actiongeneric)$ at $t = 1$ based on the historical dataset, assuming it was collected by the bandit algorithm itself.  This approach uses that a ``sufficient statistic'' for UCB are the mean estimates and counts for each action.   We refer to this algorithm as \Full, which is practical and reasonable so long as the dataset is representative of the underlying model.  This algorithm has indeed been proposed for the simple $K$-armed bandit setting~\citep{shivaswamy2012multi}.

\subsection{Data Inefficiency of \Full}

While \Full is intuitive and simple, this naive approach can lead to excessive {\em data inefficiency} (the amount of historical data used by the algorithm), resulting in potentially unbounded computation and storage costs. We outline the challenges of \Full here, then in \cref{sec:benefits} we theoretically demonstrate that our approach can achieve arbitrarily better data efficiency than \Full, leading to significant reductions in computational and storage overhead.

In an extreme scenario, consider padding the historical data with samples from an arm $a$ whose mean reward $\mu(a)$ is lowest.  \Full will use the {\em entire} historical dataset regardless of its {\em inherent quality} at timestep $t = 1$.  This is even more surprising, since the celebrated result from \citet{lai1985asymptotically} shows that $O( \log(T) / \Delta(a)^2)$ samples are sufficient to discern that an action~$a$ is suboptimal (where $\Delta(a)$ is the gap between $\mu(a)$ and the reward of the optimal action). 	As the number of data points in the historical dataset from this suboptimal action tends towards infinity, \Full uses an unbounded amount of historical data, even though {\em only a fraction of the data} was necessary to appropriately rule out this action as being optimal. This inefficient use of data increases computational overhead and exacerbates the algorithm's data efficiency. Clearly, \Full is not an optimal approach for incorporating historical data, as it fails to selectively process only the most informative data points, leading to unnecessary computational and storage burdens.

\Full thus suffers from issues which we call {\em spurious data} and {\em imbalanced data coverage}, arising because the algorithm unnecessarily processes and stores data regardless of its quality.  These issues are particularly salient since many applications of bandit algorithms, such as online recommender systems, may have historical data with billions of past records; processing all data points would require exceptionally high upfront compute and storage costs.  Similarly, in the context of wildlife poaching, historical data takes on the form of past patrol data, which is guided by the knowledge of domain experts and geographic constraints.  As we will later see in \cref{sec:experiments}, the historical data is extremely geographically biased, so processing the entire historical dataset for regions with large amounts of historical data is costly and unnecessary.  Together, these challenges highlight a key aspect of incorporating historical data to warm-start any bandit algorithm: unless the historical data is collected by an optimal approach such as following a bandit algorithm, {\bf the value of information of the historical data may not be a direct function of its size.}  

Lastly, while we've so far discussed the stylized $K$-armed bandit setup as an example, many models of real-world systems require continuous actions over potentially combinatorial spaces with constraints.  For these more complex bandit settings, {it may not even be clear how to instantiate \Full, as the base algorithm may not clearly admit a low-dimensional ``sufficient statistic''}. Furthermore, the computation and storage costs increase dramatically with these more complex models, for example, when estimates of the underlying mean reward function are computed as a high-dimensional regression problem.  In such instances, the computational complexity for generating the estimate scales superlinearly with respect to the data efficiency (number of data points used).

Thus motivated, this paper seeks to answer the following questions:
\begin{itemize}
    \item {\em Is there a computationally efficient way to use only a subset of the historical data while maintaining the same regret guarantees of \Full?  Equivalently, is there a {\bf data efficient} way to incorporate historical data?
    \item Can we do so across different bandit set-ups, such as continuous or combinatorial bandits?
    \item How can we quantify the {\bf quality} or {\bf usefulness} of a given set of historical data?}
\end{itemize}
\noindent We will evaluate algorithms in terms of their performance (measured by regret), and data efficiency (measured by the number of historical datapoints accessed).  We emphasize that this later measure is correlated with the computation and storage costs for the algorithm.

\subsection{Our Contributions}

In our first contribution, we propose \ArtReplay, a meta-algorithm that modifies {\em any base bandit algorithm} to optimally harness historical data --- that is, using the minimal data required to achieve the highest possible performance. 
\ArtReplay improves data efficiency by using historical data \emph{as needed} --- specifically, only \emph{when recommended} by the base bandit algorithm.
Concretely, \ArtReplay uses the historical data as a replay buffer to artificially simulate online actions. When the base algorithm selects an action, we first check the historical data for any unused samples from the chosen action. If an unused sample exists, update the reward estimates using that historical data point and continue, \emph{without} advancing to the next timestep.  This allows the algorithm to obtain better estimates for the reward of that action without incurring additional regret. 
 Otherwise, take an online step by sampling from the environment (incurring regret if that action is non-optimal), and advance to the next timestep. We will demonstrate how this meta-algorithm can be applied to a wide set of practical models: standard $K$-armed bandits,
metric bandits with continuous actions, and models with semi-bandit feedback.


Given that \ArtReplay only uses a subset of the data, one might wonder if it incurs higher regret than \Full (a generalization of the algorithm described above for $K$-armed bandits). Surprisingly, in our second contribution we prove that under a widely applicable condition, the regret of \ArtReplay (as a random variable) is identical to that of \Full, while also guaranteeing significantly better data efficiency.
Specifically, we show a \emph{sample-path coupling}
between \ArtReplay and \Full with the same base algorithm, as long as the base algorithm satisfies a novel (and widely applicable) \emph{independence of irrelevant data} (\iidata) assumption. 
 We complement this result by also highlighting the regret improvement for \ArtReplay as a function of the used historical data. 
 Additionally, we prove that \ArtReplay can lead to arbitrary better computational complexity.  
These results highlight that \ArtReplay is a simple approach for incorporating historical data with identical regret to \Full and significantly better data efficiency.  To summarize, \ArtReplay Pareto-dominates \Full in terms of the two main metrics of interest: $(i)$~regret, $(ii)$~data efficiency.

Finally, we show the benefits of \ArtReplay by instantiating it for several classes of bandits and evaluating on real-world poaching data. To highlight the breadth of algorithms that satisfy the \iidata property, we show how standard UCB algorithms can be easily modified to be \iidata while still guaranteeing regret-optimality, such as for $K$-armed and continuous combinatorial bandits. While the results for $K$-armed bandits are fairly straightforward, the fact that these results extend to complex settings such as metric bandits with continuous actions or combinatorial bandits is not at all obvious. The surprising insight in our result is that a similarly simple approach for complex bandit settings can attain the same guarantees as full usage of the historical data with significant data efficiency gains.
Even for algorithms that do not satisfy \iidata, such as Thompson sampling and Information Directed Sampling (IDS), we demonstrate on real poaching data that \ArtReplay still achieves concrete gains in regret and data efficiency over a range of base algorithms.
We close with a case study of combinatorial bandit algorithms with continuous actions in the context of green security, using a novel adaptive discretization technique.

\subsection{Related Work}
\label{sec:related-work}

Multi-armed bandit problems and its sub-variants (including the finite-armed and \CMABCRA model to be discussed here) have a long history in the online learning and optimization literature.  We highlight the most closely related works below, but for more extensive references see~\cite{bubeck2012regret,aleks2019introduction,lattimore2020bandit}.

\paragraph{Multi-Armed Bandits.}\, The design and analysis of bandit algorithms have been considered under a wide range of models. \citet{lai1985asymptotically,auer2002finite} first studied the $K$-armed bandit, where the decision maker chooses between a finite set of $K$ actions at each timestep.  Numerous follow-up works have extended these approaches to handle continuous action spaces~\citep{kleinberg2019bandits} and combinatorial constraints~\citep{chen13a,xu2021dual}.
\citet{zuo2021combinatorial} propose a discrete model of the combinatorial multi-armed bandit that generalizes previous work \citep{lattimore2014optimal,lattimore2015linear,dagan2018better} to schedule a finite set of resources to maximize the expected number of jobs finished.  We extend their model to continuous actions, tailored to the green security setting from \citet{xu2021dual}.  Our work provides a framework to modify existing algorithms to efficiently incorporate historical data.  Moreover, we also propose and evaluate a novel algorithm to incorporate adaptive discretization for combinatorial multi-armed bandits for continuous resource allocation, extending the discrete model from~\citet{zuo2021combinatorial}.

\paragraph{Historical Data and Bandits.}\,  Several papers have started to investigate how to incorporate historical data into bandit algorithms, but current approaches in the literature only consider \Full. \citet{shivaswamy2012multi} employed this with UCB1 for $K$-armed bandits, then \citet{oetomo2021cutting} and \citet{wang17} used a similar approach in linear contextual bandits, regressing over the historical data to initialize the linear context vector. Similar techniques were used in models where there are pre-clustered arms, where the authors provide regret guarantees depending on the cluster quality of the fixed clusters~\citep{bouneffouf2019optimal}.  In contrast to these stylized approaches to a specific bandit model, our work provides a principled method for harnessing historical data across a variety of bandit models.  We show our meta-algorithm \ArtReplay can work with \emph{any} standard bandit framework and uses historical data \emph{as needed}, leading to improved data efficiency.  These techniques were also applied to Bayesian and frequentist linear contextual bandits where the linear feature vector is updated by standard regression over the historical data~\citep{oetomo2021cutting,wang17}.  The authors show empirically that these approaches perform better in early rounds, and applied the set-up to recommendation systems.  
A second line of work has considered warm-starting contextual bandit models with fully supervised historical data and online bandit interaction~\citep{swaminathan15a,zhang2019warm}.
In \citet{zuo2020observe} they consider augmented data collection schemes where the decision maker can ``pre sample'' some arms before decisions in the typical bandit set-up.  We further note that \citet{wagenmaker2022leveraging} considers how best to use offline data to minimize the number of online interactions necessary to learn a near-optimal policy in the linear RL setting.  Lastly, while not directly considering historical data, \citet{bayati2022speed} considers using a low-rank matrix model to reduce the exploration cost in two-sided bandit problems with many arms under short horizons.  We imagine that similar ideas can be incorporated to this work to study the impact of historical data on reducing the burn-in period for algorithms.

There is also recent work that has leveraged our model and algorithmic framework. \citet{cheung2024leveraging} considers leveraging offline data to facilitate online learning under distribution shift between offline data and online rewards.  They modify the UCB policy with a \Full approach that we study here but do not additionally consider optimally incorporating the historical data.  Additionally, \citet{agrawal2023optimal} builds on our work to consider best-arm identification in bandits with access to offline data. They instantiate \ArtReplay with a best-arm-identification base algorithm and show it yields a sub-optimal algorithm. They conjecture that a different base algorithm or tailored modifications to the \ArtReplay framework can be chosen to avoid this gap, due to the distinction between best-arm-identification and regret algorithms.

\paragraph{Bandit Algorithms for Green Security Domains.}
Green security focuses on allocating defender resources to conduct patrols across protected areas to prevent illegal logging, poaching, or overfishing \citep{fang2015security,plumptre2014efficiently}. 
These green security challenges have been addressed with game theoretic models \citep{yang2014adaptive,nguyen2016capture}; supervised machine learning \citep{kar2017cloudy,xu2020stay}; and multi-armed bandits, including restless \citep{qian2016restless}, recharging \citep{kleinberg2018recharging}, adversarial \citep{gholami2019don}, and combinatorial bandits \citep{xu2021dual}.
Related notions of the value of information have been studied in the context of ecological decision making to quantify how obtaining more information can help to better manage ecosystems \citep{canessa2015we}.  We conduct numerical simulations on this model in \cref{sec:experiments}.

\paragraph{Combinatorial Bandits for Resource Allocation.} The \CMABCRA model is a continuous extension of the combinatorial multi-armed bandit for discrete resource allocation (CMAB-DRA) problem studied in \cite{zuo2021combinatorial}. They propose two algorithms which both achieve logarithmic regret when the allocation space is finite or one-dimensional.  We extend their upper confidence bound algorithmic approach to consider both fixed and adaptive data-driven discretization of the continuous resource space, and additionally consider the impact of historical data in learning.  

\paragraph{Adaptive Discretization Algorithms.}
Discretization-based approaches to standard multi-armed bandits and reinforcement learning have been explored both heuristically and theoretically in different settings.  Adaptive discretization was first analyzed theoretically for the standard stochastic continuous multi-armed bandit model, where \cite{kleinberg2019bandits} developed an algorithm that achieves instance-dependent regret scaling with respect to the so-called ``zooming dimension'' of the action space.  This approach was later extended to contextual models in~\cite{slivkins2011contextual}.  In~\cite{elmachtoub2017practical} the authors improve on the practical performance and scalability by considering decision-tree instead of dyadic partitions of the action space.
Similar techniques have been applied to reinforcement learning, where again existing works have studied the theoretical challenges of designing discretization-based approaches with instance-specific regret guarantees~\citep{sinclair2021adaptive}, and heuristic performance under different tree structures~\citep{uther1998tree,pyeatt2001decision}.
However, none of these algorithms have been extensively studied within the concept of including historical data, or applied to the combinatorial bandit model, with the exception of~\cite{xu2021dual}.  Our work builds upon theirs through a novel method of incorporating historical data into an algorithm, and by additionally considering adaptive instead of fixed discretization.


\section{Preliminaries}
\label{sec:preliminary}

We start off by defining the general bandit model and assumptions we impose on the historical data, then in \cref{sec:bandit-models} we provide concrete examples with finite, continuous, and combinatorial actions, including the green-security domain we use in our simulations.

\subsection{General Stochastic Bandit Model} 
\label{sec:model_general}

We consider a stochastic bandit problem with a fixed feasible action set $\actionset$.  Let $\rewarddist : \actionset \rightarrow \Delta([0,1])$ be a collection of independent and  \emph{unknown} reward distributions over~$\actionset$. Our goal is to pick an action (also referred to as ``arm'') $\actiongeneric \in \actionset$ to maximize the expected reward~$\Exp{\rewarddist(\actiongeneric)}$, which we denote by $\reward(\actiongeneric)$. The optimal reward is $\OPT = \max_{\actiongeneric \in \actionset} \, \reward(\actiongeneric)$ under optimal action $\aopt = \argmax_{\actiongeneric \in \actionset} \mu(\actiongeneric)$.
For now, we do not impose any additional structure on~$\actionset$, but later in \cref{sec:bandit-models} we instantiate $\actionset$ to consider finite, continuous, and combinatorial constraints.

\paragraph{Historical Data.}  We assume that the algorithm designer has access to a historical dataset, which provides reward samples from a series of past actions. This historical dataset is comprised of $H_\actiongeneric \in \mathbb{Z}_{\geq 0}$ independent and identically distributed samples drawn from the reward~$\rewarddist(\actiongeneric)$ for each $\actiongeneric \in \actionset$.  Let $\history$ denote the set where each $\actiongeneric \in \actionset$ is repeated $H_\actiongeneric$ times, and (with slight abuse of notation) $\rewarddist(\history)$ the observed samples in the historical dataset.  Lastly, we use $H = \sum_{\actiongeneric \in \actionset} H_\actiongeneric$ to denote the total number of historical data points.

The assumption that $H_\actiongeneric$ is fixed and deterministic, as well as each sample is drawn according to $\rewarddist(\cdot)$, rules out several important tampering scenarios on the historical dataset.  For instance, consider an adversary who sorts the reward samples for each action $\actiongeneric$ to be in decreasing order, or an adversary that ``ignores'' samples smaller than a certain threshold value.  In both scenarios, the marginal reward distributions are biased for $\rewarddist(\cdot)$.
One could relax this assumption to assume that the historical data satisfies an {\em exchangeability} assumption, but we leave this for future work.  However, we note that as a follow-up to this work, \citet{cheung2024leveraging} consider the setting of a fixed bias in the sampling distribution of the historical dataset, and modify our \Full algorithms to account for this bias.

\paragraph{Online Structure.} Since the mean reward function~$\mu(\actiongeneric)$ is initially unknown, the algorithm must interact with the environment sequentially. At timestep $t \in [T]$, the decision maker picks an action $\actiont{t} \in \actionset$ according to their policy~$\pi$.  The environment then reveals a reward~$\obsrewardt{t}$ sampled from the distribution~$\rewarddist(\actiont{t})$. 
The optimal reward~$\OPT$ would be achieved using a policy with full knowledge of the true distribution.  We thus define \emph{regret} of a policy~$\pi$ as:
\begin{align}
\label{eq:regret}
    \regret(T, \pi, \history) = T \cdot \OPT - \textstyle\sum_{t=1}^T \reward(\actiont{t}) \ ,
\end{align}
where the dependence on $\history$ highlights that the selection of $\actiont{t}$ can depend on the historical dataset.  When taking expected regret, the expectation will be respect to the arm reward distribution for both {\em online} and {\em offline} data, but with $(H_\actiongeneric)_{\actiongeneric \in \actionset}$ fixed.  {As with standard bandit algorithms, our goal is to design algorithms with minimal regret, sublinear in both $T$ and $|\history|$.  However, departing from standard bandit literature, we will also measure the performance of algorithms by their data efficiency, measured in terms of the number of historical data points accessed by the algorithm.  We emphasize that data efficiency is correlated with the computation and storage complexity of the algorithms.  Hence, our primary goal will be to design algorithms with sublinear regret and data efficiency (both with respect to $T$ and $|\history|$).


\subsection{Finite, Continuous, and Combinatorial Action Spaces}
\label{sec:bandit-models}


The description in \cref{sec:model_general} {with an appropriate specification of} the feasible action set $\actionset$ {encompasses} most well-studied models in bandits.  Here we outline several examples, including the $K$-armed (\cref{sec:example_finite}) and continuous multi-armed bandit with combinatorial constraints (\cref{sec:def_cmab_cra}), which are used throughout the paper and in the simulation results of \cref{sec:experiments}. {With the exception of the $K$-armed and $K$-armed linear bandit models,} the question of how best to incorporate {historical} data {to warm start bandit algorithms} has not been previously considered in the literature {for any of the other bandits models we consider here}. {We present a single meta-algorithm that can be used} to naturally incorporate historical data across each of these different bandit models. In \cref{sec:algorithms} we {instantiate the algorithm for these models and show that this approach leads to} regret-optimal algorithms.

\subsubsection{$K$-armed Bandit}
\label{sec:example_finite}

The finite-armed bandit model can be viewed in this framework by considering $\actionset = [K] = \{1, \ldots, K\}$.  This recovers the classical model from \citet{lai1985asymptotically,auer2002finite}.  \citet{shivaswamy2012multi} started the study of incorporating historical data into these models through the use of the standard UCB algorithm and the \Full approach. We will later extend these ideas to cover a much wider class of base bandit algorithms, and additionally improve data efficiency by only using the historical data as needed.

\subsubsection{$K$-armed Linear Bandit}

The finite-armed linear bandit model can be viewed in this framework by again considering $\actionset = [K] = \{1, \ldots, K\}$.  However, here it is additionally assumed that each action $\actiongeneric$ has an associated known feature vector $\phi_\actiongeneric \in [0,1]^d$ and the mean reward $\mu(\actiongeneric)$ satisfies $\mu(\actiongeneric) = \phi_\actiongeneric^\top \theta$ where $\theta \in [0,1]^d$ is an unknown latent parameter~\citep{bastani2020online}.  Incorporating historical data in this model was first investigated in \citet{oetomo2021cutting,wang17} where the historical data is used through regression to estimate the latent parameter $\theta$.  We note that this is again an instantiation of the \Full approach to the algorithm developed in \citet{bastani2020online}.


\subsubsection{Metric Bandits}
\label{sec:def_metric_bandit}
The metric or continuous bandit model from \citet{kleinberg2019bandits} can be viewed in this framework where we assume that the action set $\actionset$ is a compact metric space with metric $d$.  The mean reward function $\mu(a)$ is then assumed to be Lipschitz with the metric $d$, in that there exists a constant $L$ such that:
\[
\abs{\mu(a) - \mu(a')} \leq L d(a,a') \quad\quad \forall a, a' \in \actionset \ .
\]

\subsubsection{Combinatorial Finite-Armed Bandits}
\label{sec:def_cmab}
Here we consider a central planner who has access to a finite set of $[K]$ actions with unknown mean $\mu(\actiongeneric)_{\actiongeneric \in [K]}$ and over each round is tasked with picking at most $B \leq K$ of them (see~\citet{chen13a} for a more general model).  We denote the feasible action space as the set of allocation vectors $\effort$ as follows:
\begin{align}
    \actionset = \left\{ \effort \in \{0,1\}^{K} \mid \sum\nolimits_{\actiongeneric \in [K]} \efforti{\actiongeneric} \leq B\right\} \ .
\end{align}
It is further assumed that the mean reward over $\actionset$ is additive, in that for any $\effort \in \actionset$ we have:
$\mu(\effort) = \sum_{\actiongeneric \in [K]} \efforti{\actiongeneric} \mu(\actiongeneric),$
and the algorithm receives feedback of the form of $\obsrewardt{} \sim \rewarddist(\actiongeneric)$ for each $\actiongeneric \in [K]$ with $\efforti{\actiongeneric} = 1$.
The goal of the algorithm is then to learn the top $B$ actions in order to maximize their cumulative mean reward.  Lastly, we consider the historical data $\history$ to also be decomposed, in that each element is a particular $(\actiongeneric, \obsrewardt{})$ pair with $\obsrewardt{} \sim \rewarddist(\actiongeneric)$.

\subsubsection{Combinatorial Multi-Armed Bandit for Continuous Resource Allocation (\CMABCRA)}
\label{sec:def_cmab_cra}

This model combines the metric and Lipschitz structure of \cref{sec:def_metric_bandit} with the combinatorial structure of \cref{sec:def_cmab}.  We consider a central planner who has access to a metric space~$\pointset$ of resources with metric~$d_\pointset$.  They are tasked with splitting a total amount of $\budget$ divisible budget across $N$~different resources within~$\pointset$.  For example, in wildlife conservation, the space~$\pointset$ can be considered as the protected area of a park, and the allocation budget corresponds to divisible effort, or the proportion of rangers allocated to patrol in the chosen area.  We denote the feasible space of budget allocations as~$\effortset$ and define the feasible action space as follows:
\begin{align}
\label{eq:action_set}
    \actionset = \left\{ (\points, \effort) \in \pointset^N \times \effortset^N \bigm\vert \sum_{i=1}^N \efforti{i} \leq \budget, \quad d_\pointset(\pointi{i},\pointi{j}) \geq \epsilon \quad \forall i \neq j \right\} \ .
\end{align}
Note that we require the chosen action to satisfy the budget constraint (i.e. $\sum_i \efforti{i} \leq \budget$), and that the chosen resources are distinct (e.g., $\epsilon$-away from each other according to $d_\pointset$).

We further assume that the reward distribution $\rewarddist$ decomposes additively over $\actionset$ as follows.  Indeed, overload notation and let $\rewarddist : \pointset \times \effortset \rightarrow \Delta([0,1])$ be the \emph{unknown} reward distribution over the resource and allocation space. The goal of the algorithm is to pick an action $\action = (\points, \effort) \in \actionset$ in a way that maximizes $\sum_{i=1}^N \Exp{\rewarddist(\pointi{i}, \efforti{i})}$, the expected total mean reward accumulated from the resources subject to the budget constraints. Denoting $\Exp{\rewarddist(\point, \effortgeneric)}$ as $\reward(\point, \effortgeneric)$, the optimization problem to determine the optimal $\actiongeneric = (\points, \effort) \in \actionset$ is formulated below:
\begin{align}
    \max_{\points, \effort} & \, \sum_{i=1}^N \reward(\pointi{i}, \efforti{i}) \label{eq:cmab_opt} \\
    \text{s.t. } & \sum_i \efforti{i} \leq B \nonumber\\
    & d_\pointset(\pointi{i}, \pointi{j}) \geq \epsilon \quad \forall i \neq j \ . \nonumber
\end{align}
\noindent Lastly, we consider the historical data $\history$ to also be decomposed, in that each element is a particular $(\point, \effortgeneric, \obsrewardt{})$ pair with $\obsrewardt{} \sim \rewarddist(\point, \effortgeneric)$.

In this domain, we make two assumptions on the underlying problem. The first combines the metric bandit assumption of \cref{sec:def_metric_bandit}, where we assume that the resource space~$\pointset$ is a compact metric space with metric $d_\pointset$ and that the true underlying reward function~$\reward(\point, \effortgeneric)$ is Lipschitz with respect to~$d_\pointset$.  The next assumption assumes access to an oracle for solving optimization problems of the form of \cref{eq:cmab_opt} for arbitrary choice of reward functions $r(\point, \effortgeneric)$.  We can relax this assumption to instead assume that there exists a randomized approximation oracle by appropriately shifting the regret benchmark.  However, in \cref{sec:experiments} we run experiments with exact solution oracles and omit this discussion from this work.

\paragraph{Mapping to Green Security Domains} The \CMABCRA model can be used to specify green security domains from \citet{xu2021dual}.  $\pointset$~is used to represent a protected area, which is a geographic space designated for wildlife conservation. $\effortset$~is the discrete set of potential patrol efforts to allocate, such as the number of ranger hours per week, with the total budget~$B$ being $40$~hours.  The reward function $\reward(\point, \effortgeneric)$ then models the probability of observing a snare at location $\point$ at effort level $\effortgeneric$.

In the simplest case, $\pointset$ is a two-dimensional space where every point corresponds to a GPS coordinate in the park. For more precision, we could consider planning over not just the physical space, but rather the feature space. In that case, $\pointset$ is a $k$-dimensional space over $k$ features in the park that influence green security threats (such as land cover, elevation, and distance to rivers that influence poaching). For purposes of exposition, in this paper we consider the former case, where $\pointset$ directly corresponds to GPS coordinates.  This formulation generalizes \citet{xu2021dual} to a more realistic continuous space model of the landscape, enabling greater geographic precision in resource allocation than the artificial fixed discretization that was considered in prior work (consisting of $1 \times 1$ sq.~km regions of the park).

\section{\ArtReplay for Harnessing Historical Data}
\label{sec:online_wrapper}

In this section we propose \ArtReplay, a simple approach to incorporate historical data into bandit algorithms.  We start off with a case study of UCB for $K$-armed bandits (\cref{sec:example_finite}) for which the instantiation is more straightforward.  

However, for algorithms in complex settings (such as the others discussed in \cref{sec:bandit-models}) it is not obvious how one should initialize the algorithm given historical data.  We first discuss \Full, which initializes an algorithm by using the entire historical data upfront.  We later develop \ArtReplay, a meta-algorithm to efficiently incorporate historical data which can be integrated with any base algorithm.  We highlight how \ArtReplay depends on an {\em action-similarity} relationship, which captures the notion of whether data from a given action can be used to inform the reward distribution of other actions. 

\subsection{Case study: \MonUCB for $K$-armed Bandits}
\label{sec:ar_case_study_mon_ucb}

\begin{algorithm}[tb]
   \caption{Monotone UCB (\MonUCB)}\label{alg:mon_ucb}
\begin{algorithmic}[1]
   \State Initialize $n_1(\actiongeneric) = 0$, $\mubar_1(\actiongeneric) = 1$, and $\UCB_1(\actiongeneric) = 1$ for each $\actiongeneric \in [K]$
   \For{$t = \{1, 2, \ldots\}$}
   \State Let $\actiont{t} = \argmax_{\actiongeneric \in [K]} \UCB_t(\actiongeneric)$
   \State Receive reward~$\obsrewardt{t}$ sampled from $\rewarddist(\actiont{t})$
   \For{all $\actiongeneric \neq \actiont{t}$}
   \State $n_{t+1}(\actiongeneric) = n_t(\actiongeneric)$, $\mubar_{t+1}(\actiongeneric) = \mubar_t(\actiongeneric)$, $\UCB_{t+1}(\actiongeneric) = \UCB_t(\actiongeneric)$
   \EndFor
   \State $n_{t+1}(\actiont{t}) = n_t(\actiont{t}) + 1$
   \State $\mubar_{t+1}(\actiont{t}) = (n_t(\actiont{t}) \mubar_t(\actiont{t}) + \obsrewardt{t}) / n_{t+1}(\actiont{t})$
   \State $\UCB_{t+1}(\actiont{t}) = \min\{\UCB_t(\actiont{t}), \mubar_{t+1}(\actiont{t}) + \sqrt{2 \log(T) / n_{t+1}(\actiont{t})}\}$
   \EndFor
\end{algorithmic}
\end{algorithm}

To first address the problem of incorporating historical data into a bandit algorithm, we consider a modification of the standard UCB1 algorithm from \citet{auer2002finite}.  We first ignore the historical data and propose \MonUCB, which is identical to standard UCB1 except for two small modifications. First, we adjust the confidence terms to depend on $\log(T)$ instead of $\log(t)$. Second, we force the $\UCB$ estimates to be monotone decreasing over timesteps~$t$. These modifications \emph{have no effect on the regret guarantee}. Under the ``good event'' analysis, if $\UCB_{t}(\actiongeneric) \geq \reward(\actiongeneric)$ with high probability, then the condition still holds at time~$t+1$, even after observing a new data point.  We then modify the \MonUCB algorithm to use either the full historical data upfront, or adaptively using the \ArtReplay framework.

\MonUCB tracks the same values as UCB1: \textit{(i)}~$\mubar_t(\actiongeneric)$ for the estimated mean reward of~$\actiongeneric$, \textit{(ii)}~$n_t(\actiongeneric)$ for the number of times~$\actiongeneric$ has been selected by the algorithm, and \textit{(iii)}~$\UCB_t(\actiongeneric)$ for an upper confidence bound estimate for the mean reward of~$\actiongeneric$. 
At every timestep~$t$, \MonUCB picks the action~$\actiont{t}$ which maximizes $\UCB_t(\actiongeneric)$ (breaking ties deterministically).  After observing~$\obsreward_t \sim \rewarddist(\actiont{t})$, we update our counts $n_{t+1}(a)$ and estimates $\mubar_{t+1}(a)$---but, critically, we only update $\UCB_{t+1}(\actiongeneric)$ if the new UCB estimate is less than the prior UCB estimate $\UCB_{t}(\actiongeneric)$. 
The full pseudocode is given in Algorithm~\ref{alg:mon_ucb}.  

We start by highlighting the fact that these changes to UCB1 {\em do not} affect the regret.  Indeed, \MonUCB achieves the same instance-dependent regret bound as the standard UCB1 algorithm. 

\begin{theorem}
\label{thm:mon_ucb_regret}
The \MonUCB base algorithm has for $\Delta(\actiongeneric) = \max_{\actiongeneric^\prime} \mu(\actiongeneric^\prime) - \mu(\actiongeneric)$: 
\begin{align*}
    & \Exp{\regret \left( T, \pi^{\MonUCB} \right)} = O\left( \sum_{\actiongeneric \neq \aopt} \log(T) / \Delta(\actiongeneric) \right) \ . 
\end{align*}
\end{theorem}

\begin{rproof} See \cref{sec:k_armed_proofs}.
\end{rproof}
\noindent We note that the monotonicity modification can be applied to more general $\psi$-UCB algorithms from~\citet{bubeck2012regret} (but for expository purposes we defer this discussion to \cref{sec:ucb_algorithms_appendix}).

\paragraph{Incorporating all historical data upfront.} As written, \MonUCB does not directly incorporate the historical data $\history$.  We first consider a straightforward modification, resulting in a policy that we denote as $\pi^{\Full(\MonUCB)}$, which utilizes the entire historical data upfront.  The only modification for $\Full(\MonUCB)$ is that at $t = 1$ (Line 1 of \cref{alg:mon_ucb}) we initialize:
\textit{(i)}~$\mubar_1(\actiongeneric)$ for the estimated mean reward of action~$\actiongeneric$ taken over the $H_\actiongeneric$ samples in $\history$, \textit{(ii)}~$n_1(\actiongeneric)$ for the number of times the action~$\actiongeneric$ was selected in $\history$, i.e. $H_\actiongeneric$.  After the first round, the remaining steps are identical, where new data is used to update the {\em already pre-initialized} $\mubar_1(\actiongeneric)$ and $n_1(\actiongeneric)$ values.  Here we see the importance of the first modification of \MonUCB: replacing $\log(t)$ with $\log(T)$ in the confidence radius ensures that the $\UCB_t(\actiongeneric)$ values are properly calibrated with a union bound~\citep{shivaswamy2012multi}.

Unfortunately, $\pi^{\Full(\MonUCB)}$ has several drawbacks as hinted at in the introduction.  Consider a sub-optimal action $\actiongeneric \in [K]$.  Then by padding the historical dataset with additional samples of $\actiongeneric$, $\Full(\MonUCB)$ will read in the entire dataset for $H_\actiongeneric$, {\em even though} the celebrated result by \citet{lai1985asymptotically} shows that $O(\log(T) / \Delta(\actiongeneric)^2)$ samples are sufficient to discern that $\actiongeneric$ is suboptimal.  This simple example highlights the data inefficiency of the \Full approach.

\paragraph{A more natural and adaptive way to incorporate historical data.} Based on this drawback to $\Full(\MonUCB)$, the obvious question is how can we adaptively incorporate the historical data to ensure that at most $O( \log(T) / \Delta(\actiongeneric)^2)$ samples from each suboptimal action are used?  The issue with $\Full(\MonUCB)$ is that the historical data is used independent of its quality.  A better strategy would be to only use the historical data for an action $\actiongeneric$ when \MonUCB suggests to play that action.  This automatically ensures that historical data for high-performing actions is used, since by design, \MonUCB plays near-optimal actions.  We formalize this through the following steps.  Let $\H_t$ be the set of {\em online data} observed by the algorithm, and $\Hon_t$ as the subset of $\history$ containing the used historical data, where initially $\Hon_1 = \emptyset$. We also abuse notation and let $\mubar(\H, \actiongeneric)$ and $n(\H, \actiongeneric)$ to be the empirical mean reward and number of appearances for any $\actiongeneric$ in $\actionset$.

We construct a policy, which we denote as $\pi^{\ArtReplay(\MonUCB)}$ as follows.  Initially, at $t = 1$ we have that $\mubar_t(\actiongeneric) = 1$, $n_t(\actiongeneric) = 0$, and $\UCB_t(\actiongeneric) = 1$.  Over rounds $t$, we let $\tilde{\actiont{t}} = \argmax_{\actiongeneric \in [K]} \UCB_t(a)$ be the {\em proposed action}.  In order to only use historical data from {\em good} actions, since the algorithm proposes to play $\tilde{\actiont{t}}$ we first check if the set of {\em unused} historical data $\history \setminus \Hon_t$ contains any {\em additional samples} from~$\tilde{\actiont{t}}$.  If so, we update $\mubar_t(\tilde{\actiont{t}})$, $n_t(\tilde{\actiont{t}})$, and $\UCB_t(\tilde{\actiont{t}})$ with this additional sample, otherwise we play $\tilde{\actiont{t}}$ and continue to the next timestep.
\begin{itemize}
    \item \textbf{No historical data available}: If $\history \setminus \Hon_t$ does not contain any additional samples from $\tilde{\actiont{t}}$, then the \emph{selected action} is $\actiont{t} = \tilde{\actiont{t}}.$ We keep $\Hon_{t+1} = \Hon_t$, set $\H_{t+1} = \H_t \cup \{(\actiont{t}, \rewarddist(\actiont{t})\}$, and advance to timestep~$t+1$ by updating:
    \begin{align*}
    n_{t+1}(\actiont{t}) = n \left( \Hon_{t+1} \cup \H_{t+1}, \actiont{t} \right), \quad\quad
    \mubar_{t+1}(\actiont{t}) = \mubar \left( \Hon_{t+1} \cup \H_{t+1}, \actiont{t} \right), \\
    \UCB_{t+1}(\actiont{t}) = \min \left\{ \UCB_t(\actiont{t}), \mubar_{t+1}(\actiont{t}) + \sqrt{2 \log(T) / n_{t+1}(\actiont{t})} \right\} \ .
    \end{align*}
    \item \textbf{Historical data available}: If $\tilde{\actiont{t}}$ is contained in $\history \setminus \Hon_t$, then we add that data point to $\Hon_t = \Hon_t \cup \{(\tilde{\actiont{t}}, \rewarddist(\tilde{\actiont{t}})\}$.  We update the estimates as
    \begin{align*}
    n_{t}(\actiont{t}) = n \left( \Hon_{t} \cup \H_{t}, \actiont{t} \right), \quad\quad
    \mubar_{t}(\actiont{t}) = \mubar \left( \Hon_{t} \cup \H_{t}, \actiont{t} \right), \\
    \UCB_{t}(\actiont{t}) = \min \left\{ \UCB_t(\actiont{t}), \mubar_{t}( \actiont{t}) + \sqrt{2 \log(T) / n_{t}(\actiont{t})} \right\} \ ,
    \end{align*}
    and repeat by picking another \emph{proposed action} $\tilde{\actiont{t}} = \argmax_{\actiongeneric \in [K]} \UCB_t(\actiongeneric)$.  We remain at timestep~$t$.
\end{itemize}
By design, $\ArtReplay(\MonUCB)$ only incorporates historical data when $\MonUCB$ itself decides it wants to play an action.  We will later see in \cref{thm:comp_gains} that this avoids the issues of $\Full(\MonUCB)$ and only reads in a subset of the historical data necessary to ensure that an action is sub-optimal.  
While this simple idea seems {\em almost} obvious within the context of $K$-armed bandits, the appropriate extension to general bandit models can be quite complex. {For example, in a continuous bandit setting as described in \cref{sec:def_metric_bandit}, one wouldn't expect the historical data to contain multiple data points for any single given action, and it may even be likely that the dataset contains no data at all for any of the discrete set of actions recommended by any given base bandit algorithm. However, this does not mean that the historical data contains no relevant information about the set of actions, as data from nearby actions provides weak information about the reward due to the Lipschitz assumption on the reward function. Formalizing the appropriate way to utilize this historical data will require care.} Next, we provide a principled method for incorporating historical data across a variety of bandit scenarios (including those in \cref{sec:bandit-models}).

\subsection{Incorporating historical data: \Ignorant and \Full approaches}

In order to generalize the previous discussion for $K$-armed bandits, we model algorithms for online stochastic bandits as a function mapping histories (collections of observed $(\actiongeneric, \obsreward)$ pairs) to a distribution over actions in $\actionset$.  We let $\basealgo : \D \rightarrow \Delta(\actionset)$ denote an arbitrary {\bf base algorithm} where $\D$ denotes the collection of possible unordered histories (i.e., $\D = \cup_{i \geq 0} (\actionset \times \mathbb{R}_+)$).  This definition ensures that the algorithm is designed based on a sufficient statistic of the data (for example, the empirical mean reward estimate for the case of \MonUCB).  It also importantly assumes that the algorithm's action selection is identical regardless of the {\em order of observations}.  This also requires that the action selection does not depend on $t$.  Again, in \MonUCB, the order of observations does not impact the final decision, since the decision is determined based on the empirical average reward per action and number of times each action was played; those two aggregate values are sufficient statistics.  While this restricts the set of algorithms considered, in \cref{sec:algorithms} we will provide algorithms that satisfy the required assumptions across the bandit models discussed in \cref{sec:bandit-models}.


With these definitions in hand, we now define the \textbf{ignorant} algorithm, which ignores the historical data, and the \textbf{full warm start} algorithm, which includes all the historical data upfront.

\begin{definition}[Ignorant algorithm]
The \textbf{ignorant} algorithm (\Ignorant) obtained by a base algorithm~$\basealgo$ that does not incorporate historical data takes actions according to the policy:
\[
\pi^{\Ignorant(\basealgo)}_t = \basealgo(\mathcal{H}_t)
\]
where $\H_t \in \D$ is the online data observed by timestep~$t$.
\end{definition}

\begin{definition}[Full warm start]
The \textbf{full warm start} algorithm (\Full), which uses the full historical data upfront, takes actions according to the policy:
\[
\pi^{\Full(\basealgo)}_t = \basealgo(\history \cup \H_t).
\]
\end{definition}

Here we finally see a formal definition of the \Full approach, which incorporates the full historical data upfront, and recovers the example highlighted in \cref{sec:ar_case_study_mon_ucb} for the $\Full(\MonUCB)$ algorithm.

\subsection{Equivalence Classes over Actions}
\label{sec:information_relation}

As we presented in \cref{sec:ar_case_study_mon_ucb}, a key step of \ArtReplay applied to \MonUCB is to check whether the historical data contains unused samples for the current proposed action $\tilde{\actiont{t}}$.  While this is simple in the $K$-armed bandit setup, with continuous or combinatorial actions it is unlikely that each individual action is multiply represented within the historical dataset.  However, ``similar'' actions can still be used in order to infer additional information on the reward distribution for $\tilde{\actiont{t}}$. We thus introduce the notion of {\em equivalence relations} over actions, which is key to establishing \ArtReplay for more general base bandit algorithms.

In \cref{sec:ar_case_study_mon_ucb} the equivalence is clear by taking $\actiongeneric \rel \actiongeneric'$ if and only if $\actiongeneric = \actiongeneric'$.  However, in more complicated bandit models, such as continuous or combinatorial bandits (\cref{sec:def_metric_bandit,sec:def_cmab,sec:def_cmab_cra}), we will need to use a coarser relation.  For example, consider the metric bandit model of \cref{sec:def_metric_bandit}.  Under the Lipschitz assumption on the mean reward we have that:
\[
\abs{\mu(\actiongeneric) - \mu(\actiongeneric')} \leq Ld(\actiongeneric, \actiongeneric'),
\]
where $d$ is the metric over $\actionset$. Then incorporating historical data from actions $\actiongeneric$ where $d(\actiongeneric, \tilde{\actiont{t}}) \leq \epsilon$ would allow us to infer some information about $\tilde{\actiont{t}}$, which can be exploited to learn the optimal action faster. We could safely ignore historical data from actions $\actiongeneric$ where $d(\actiongeneric, \tilde{\actiont{t}})$ is large, which would maintain that only a necessary subset of the historical data is used.  To formalize this idea, we assume that the action set $\actionset$ is endowed with a relation as defined below.
\begin{definition}
\label{def:action_relation}
We define the {\bf information relation} over $\actionset$ as a set of relations indexed over $\H \in \D$ for any set of histories, which we denote as $\rel_\H$.  
\end{definition}
\noindent When measuring the computational complexity of the algorithms, we will assume that checking the relation $\rel_\H$ can be done in constant time~\footnote{This will be possible for the algorithms we consider in this work, but in practice checking $\rel_\H$ will be algorithm dependent.}.

The action set $\actionset$ is endowed with several vacuous relations.  For example, the trivial relation is one where $\actiongeneric \rel \actiongeneric'$ for any $\actiongeneric, \actiongeneric' \in \actionset$.  
Another standard relation would be to assume that all actions are only related to themselves (i.e. $\actiongeneric \rel_\H \actiongeneric'$ if and only if $\actiongeneric = \actiongeneric'$).  This captures the model from \cref{sec:ar_case_study_mon_ucb}, but does not incorporate the Lipschitz assumption from the metric bandits in \cref{sec:def_metric_bandit}.  We also allow the information relation to depend on the history $\H$.  This need arises since the choice of relation should be algorithm (and potentially {\em data}) dependent, as we will later see in \cref{sec:algorithms}.  When there is no dependence on $\H$ we will abuse notation and omit it.

While we will give concrete instantiations of base algorithms alongside their information relation later in \cref{sec:algorithms}, we briefly give some relations under different action spaces as follows.

\paragraph{Continuous Actions.} Typical algorithms for continuous action spaces work by constructing a discretization over $\actionset$ according to a fixed bandwidth parameter $\gamma$, and running a standard UCB algorithm over the (now-discrete) approximate action set~\citep{kleinberg2019bandits}.  
A natural information relation would be to set $\actiongeneric \rel \actiongeneric'$ whenever $d(\actiongeneric, \actiongeneric') \leq \gamma$.  However, an adaptive discretization algorithm requires a history-dependent information relation that encodes the discretization at the time of the proposed action.  We will see this in \cref{sec:algorithms}.

\paragraph{Semi-Bandit Feedback.} In \cref{sec:def_cmab,sec:def_cmab_cra} we introduced bandit models which observe semi-bandit feedback.  Suppose that actions can be written as $\actiongeneric = (\actiongeneric_1, \ldots, \actiongeneric_N)$ and the reward function decomposes additively with independent rewards for each subcomponent~$\actiongeneric_i$ (sometimes referred to as a ``subarm'' in combinatorial bandits).  A natural information relation would be to set $\actiongeneric = (\actiongeneric_1, \ldots, \actiongeneric_N) \sim \actiongeneric' = (\actiongeneric'_1, \ldots, \actiongeneric'_N)$ whenever there exists an index $j$ such that $\actiongeneric_j \rel \actiongeneric'_j$.  


\begin{algorithm}[tb]
   \caption{\ArtReplay} \label{alg:online-wrapper}
\begin{algorithmic}[1]
   \State {\bfseries Input:} Historical dataset $\history$, base algorithm~$\basealgo$, relation over $\actionset$ $\rel_\H$
   \State Initialize set of used historical data points~$\Hon_1 = \emptyset$ and set of online data $\H_1 = \emptyset$
   \For{$t = \{1, 2, \ldots\}$}
   \State Initialize $\texttt{flag}$ to be $\texttt{True}$
   \While{\texttt{flag} is \texttt{True}}
   \State Pick action $\tilde{\actiont{t}} \sim \basealgo(\Hon_t \cup \H_t)$
   \State \textit{$\triangleright$ If an unused {and relevant} sample {does not} exist in the historical dataset}
   \If{$(\history \setminus \Hon_t) \cap \{\actiongeneric \in \actionset : \actiongeneric \rel_{\Hon_t \cup \H_t} \tilde{\actiont{t}}\} = \emptyset$}
   \State Set online action $\actiont{t} = \tilde{\actiont{t}}$
   \State Execute $\actiont{t}$ and observe reward $\obsrewardt{t} \sim \rewarddist(\actiont{t})$ 
   \State Update $\H_{t+1} = \H_t \cup \{(\actiont{t}, \obsrewardt{t})\}$ and $\Hon_{t+1} = \Hon_t$
   \State Update $\texttt{flag}$ to be $\texttt{False}$ 
   \Else
  \State Update $\Hon_t$ to include a sample for any $\actiongeneric$ in $(\history \setminus \Hon_t) \cap \{\actiongeneric \in \actionset : \actiongeneric \rel_{\Hon_t \cup \H_t} \tilde{\actiont{t}}\}$ 
  \EndIf
   \EndWhile
   \EndFor
\end{algorithmic}
\end{algorithm}

\subsection{\ArtReplay}
The \ArtReplay meta-algorithm takes as input an equivalence relation $\rel$ over $\actionset$ and \emph{simulates} the base algorithm using historical data as a replay buffer, resulting in a policy which we denote $\pi^{\ArtReplay(\basealgo)}$. The algorithm keeps track of a set~$\Hon_t$ of historical data points used by the start of time~$t$.  Initially, $\Hon_1 = \emptyset$.  
For an arbitrary base algorithm $\basealgo$, let $\tilde{\actiont{t}} \sim \basealgo(\Hon_t \cup \H_t)$ be the {\em proposed action}. There are two possible cases: whether or not the current set of unused historical data points $\history \setminus \Hon_t$ contains any {\em additional samples} from actions related to the proposed action~$\tilde{\actiont{t}}$. Specifically, we test if $(\history \setminus \Hon_t) \cap \{\actiongeneric \in \actionset \mid \actiongeneric ~\rel_{\Hon_t \cup \H_t} \tilde{\actiont{t}}\} = \emptyset$.  If so, the historical data is {\em uninformative} of the current proposed action, otherwise some action in the historical data can be used to update the information about $\tilde{\actiont{t}}$.
\begin{itemize}
    \item \textbf{No historical data available}: If there does not exist an $\actiongeneric \rel_{\Hon_t \cup \H_t} \tilde{\actiont{t}}$ contained in $\history \setminus \Hon_t$, then the \emph{selected action} is: $$\pi^{\ArtReplay(\basealgo)}_t(\Hon_t \cup \H_t) = \actiont{t} = \tilde{\actiont{t}}.$$ We also keep $\Hon_{t+1} = \Hon_t$ and advance to timestep~$t+1$.
    \item \textbf{Historical data available}: If there exists an $\actiongeneric \rel_{\Hon_t \cup \H_t} \tilde{\actiont{t}}$ contained in $\history \setminus \Hon_t$, add that data point to $\Hon_t = \Hon_t \cup \{(\actiongeneric, \rewarddist(\actiongeneric))\}$ and repeat by picking another \emph{proposed action}:
    \[
    \tilde{\actiont{t}} = \pi^{\ArtReplay(\basealgo)}_t(\Hon_t \cup \H_t).
    \]
    We remain at time~$t$.
\end{itemize}
Each step is guaranteed to result in a {\em selected action} $A_t$ since the historical data is finite.  We provide pseudocode in \cref{alg:online-wrapper} and use $\pi^{\ArtReplay(\basealgo)}$ to denote the resulting policy, omitting the dependence on the information relation $\rel$.

An important step of \ArtReplay is to check whether the unused historical data contains additional samples relevant to the proposed action $\tilde{\actiont{t}}$.
Since this sampling aligns with the base algorithm $\basealgo$, it inherently focuses on high-performing actions. Under the trivial relation where $\actiongeneric \rel \actiongeneric’$ for all $\actiongeneric, \actiongeneric’ \in \actionset^2$, the \ArtReplay algorithm becomes equivalent to \Full, reading the entire dataset before the first action. The data efficiency benefits of \ArtReplay hinge on balancing the relation: it must be fine enough to limit data usage, but coarse enough to maintain meaningful informativeness.

\section{Theoretical Analysis of \ArtReplay}
\label{sec:benefits}

In this section, we highlight the theoretical benefits of \ArtReplay.  We start by proving that for the broad class of base algorithms that satisfy \emph{independence of irrelevant data} (\iidata), a novel property we propose, \ArtReplay incurs identical regret to \Full---thereby improving data efficiency without compromising performance.  This emphasizes that \ArtReplay Pareto-dominates \Full across the two metrics of interest.  We then highlight a regret improvement, emphasizing that \ArtReplay improves the regret over \Ignorant based on the amount of {\em used} historical data.  
Finally, we offer a case study of \ArtReplay applied to the \MonUCB algorithm from \cref{sec:ar_case_study_mon_ucb} to $(i)$ establish that \ArtReplay has arbitrarily better data efficiency than \Full, and $(ii)$ quantify the value of the historical data.  However, we believe these results extend to a variety of bandit models under arbitrary base algorithms.

\subsection{\iidata and Regret Couplings}
\label{sec:iidata_coupling}

As defined, it is not immediately clear how to analyze the regret of \ArtReplay, or even compare its performance to \Full.  To enable regret analysis, we introduce a new property for bandit algorithms, \emph{independence of irrelevant data} (\iidata), which states that when an algorithm is about to take an action, providing additional data about \emph{other} unrelated actions will not change the algorithm's decision.

\begin{definition}[Independence of irrelevant data]
\label{def:iidata}
A deterministic algorithm $\basealgo$ together with an information relation $\rel$ satisfies the {\bf independence of irrelevant data} (\iidata) property if whenever action~$\action$ is proposed by the algorithm~$\basealgo$ based on the current data $(\action = \basealgo(\H))$ then
\begin{align}
    \basealgo(\H) = \basealgo(\H \cup \H^\prime)
\end{align}
for any dataset $\H^\prime$ satisfying $\H^\prime \cap \{\actiongeneric \in \actionset : \actiongeneric \rel_{\H} \basealgo(\H) \} = \emptyset$, that is, containing data from actions which are not related to $\basealgo(\H)$. 
\end{definition}

While it is not a priori clear whether \iidata is satisfied by existing algorithms in the literature, in \cref{sec:algorithms} we highlight regret-optimal algorithms under a variety of domains which satisfy this property (including \MonUCB from \cref{sec:ar_case_study_mon_ucb}).  However, it is easy to see that any base algorithm satisfies \iidata under the trivial relation where $\actiongeneric \rel \actiongeneric'$ for any two actions $\actiongeneric, \actiongeneric'$.

More generally, \iidata is a natural robustness property for an algorithm to satisfy and is conceptually analogous to the independence of irrelevant alternatives (IIA) axiom in computational social choice, often cited as a desiderata used to evaluate voting rules \citep{arrow1951social}.  In the existing bandit literature, there has been a narrow focus on only finding regret-optimal algorithms. We propose that \iidata is another desirable property that implies ease and robustness for optimally and efficiently incorporating historical data.  We moreover conjecture that \iidata is {\em necessary} for ensuring that the regret gains from incorporating historical data is monotone increasing, since without the property the adversary can, in an algorithm-dependent way, augment $\history$ to include samples and influence the algorithm to select a sub-optimal action.  We show that \iidata avoids these issues by ensuring \ArtReplay and \Full have {\em identical regret}.  


\begin{theorem}[Regret Coupling of \ArtReplay to \Full]
\label{thm:coupling}
Suppose that base algorithm~$\basealgo$ with information relation $\rel$ satisfies \iidata. Then for any problem instance, horizon~$T$, and historical dataset~$\history$ we have the following: 
\begin{align*}
    \pi^{\ArtReplay(\basealgo)}_t & \eqdist \pi^{\Full(\basealgo)}_t\\
    \regret \left( T, \pi^{\ArtReplay(\basealgo)}, \history \right) & \eqdist \regret \left( T, \pi^{\Full(\basealgo)}, \history \right) \ .
\end{align*}
\end{theorem}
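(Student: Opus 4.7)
The plan is a sample-path coupling between $\pi^{\ArtReplay(\basealgo)}$ and $\pi^{\Full(\basealgo)}$: on a probability space with a common realization of the online reward draws (and of any internal randomness of $\basealgo$, coupled across the two runs), I would show that the two policies select identical sequences of online actions. Since the regret $\regret(T,\pi,\history)$ in \eqref{eq:regret} is a deterministic function of the action sequence, pointwise equality of trajectories immediately yields $\pi^{\ArtReplay(\basealgo)}_t \eqdist \pi^{\Full(\basealgo)}_t$ and equality in distribution of the regret.

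I would proceed by induction on the online timestep $t$, with inductive hypothesis that the online histories $\H_t$ and the time-$t$ actions $\actiont{t}$ agree across the two runs. Equality of $\H_t$ follows from the inductive equality of past actions combined with the coupled reward realizations, so the real content is showing that the action $\actiont{t}$ chosen at time $t$ by \ArtReplay equals the one chosen by \Full.

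At time $t$, \ArtReplay runs an inner loop that at iteration $k$ proposes $\tilde A_t^{(k)} = \basealgo\bigl(U_t^{(k-1)} \cup \H_t\bigr)$, where $U_t^{(k-1)} \subseteq \history$ is the used-historical set at the start of iteration $k$; because $|\history|=H$ is finite and each inner iteration consumes one historical datapoint, the loop terminates at some iteration $k^\star$ at which no sample of $\tilde A_t^{(k^\star)}$ remains in $\history \setminus U_t^{(k^\star-1)}$. Equivalently, every historical sample of $\tilde A_t^{(k^\star)}$ is already contained in $U_t^{(k^\star-1)}$, so $\history \setminus U_t^{(k^\star-1)}$ consists entirely of data from actions other than $\tilde A_t^{(k^\star)}$. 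A single application of IIData then gives
\[
\basealgo\bigl(U_t^{(k^\star-1)} \cup \H_t\bigr) \;=\; \basealgo\bigl(\history \cup \H_t\bigr),
\]
and the right-hand side is exactly the action chosen by $\Full(\basealgo)$ at time $t$ under the coupled history, closing the induction.

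The main conceptual point---and the one place where care is warranted---is lining up the termination condition of \ArtReplay's inner loop with the hypothesis of IIData; once one sees that exhausting the historical samples of the proposed action makes the remaining historical data genuinely ``irrelevant'' in the IIData sense, the rest of the argument is bookkeeping. A secondary subtlety is that IIData as stated governs deterministic algorithms: for randomized $\basealgo$ the same argument goes through after conditioning on the internal seed, provided the per-seed deterministic algorithm is itself IIData, which is how I would interpret the hypothesis when it is invoked for a randomized base algorithm.
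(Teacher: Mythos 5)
Your proposal is correct and follows essentially the same route as the paper's proof: a sample-path coupling via pre-sampled rewards (the paper invokes the reward-stack model of Lattimore and Szepes\'ari), induction on the online timestep, and a single application of \iidata at the termination of the inner loop, taking $\H' = \history \setminus \Hon_t$ as the irrelevant data to conclude $\basealgo(\Hon_t \cup \H_t) = \basealgo(\history \cup \H_t)$. Your added remarks on why the inner loop terminates and on handling internal randomness of $\basealgo$ are sound refinements of details the paper leaves implicit.
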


\noindent {\em Proof Sketch.}
The proof of this result uses the reward stack model for analyzing the performance of bandit algorithms developed in \citet{lattimore2020bandit} and follows the spirit of \citet{wilson1996generating}. We provide a sample path coupling over the reward observations for $\pi^{\ArtReplay(\basealgo)}$ and $\pi^{\Full(\basealgo)}$ to show that $\pi^{\ArtReplay(\basealgo)}_t = \pi^{\Full(\basealgo)}$ for all $t$, and hence have the same distribution. 

We establish this result by induction over $t$, where we explicitly use the \iidata property to prove that although \ArtReplay uses a subset of the historical data, its selected action is identical to that of \Full. 
 Indeed, consider the case when $t = 1$, then \ArtReplay builds up a set $\Hon_1$ from the historical data such that $(\history \setminus \Hon_1) \cap \{\actiongeneric \rel_{\Hon_1} \basealgo(\Hon_1)\} = \emptyset$.  However, by the fact that $\basealgo$ satisfies \iidata it must be that $\pi^{\Full(\basealgo)}_1 = \basealgo(\history) = \basealgo(\Hon_1) = \pi^{\ArtReplay(\basealgo)}_1$. 
 The remainder of the proof proceeds similarly and is delegated to \cref{app:proof_coupling}.

Next we show a {\em regret improvement} for using \ArtReplay.  By design, \ArtReplay takes as input a base algorithm and information relation and \emph{simulates} the historical data in a replay buffer.  As such, it seems natural that we can bound the regret of \ArtReplay (as a random variable), to a counterfactual simulation of the algorithm applied to a {\em longer time horizon} (where the length corresponds to the amount of used historical data).  Indeed, we can show the following:
\begin{theorem}[Regret Improvement of \ArtReplay]
\label{thm:generic_regret_gain}
Let $\basealgo$ be any base algorithm, and denote by $\Delta_{\min} = \min_{\actiongeneric \in \history, a \neq \aopt} \OPT - \mu(\actiongeneric)$ the smallest suboptimality gap in the historical dataset.  Let $\Hon_T$ be a random variable corresponding to the subset of $\history$ used by $\pi^{\ArtReplay(\basealgo)}$ after the $T$ timesteps.  Further let $\Hon_T(\dagger)$ correspond to the subset of $\Hon_T$ containing data for actions which are not optimal.  Then we have that:
\begin{equation}
\label{eq:generic_regret_gain}
    \Exp{\regret \left( T, \pi^{\ArtReplay(\basealgo)}, \history \right)} \leq \Exp{\regret \left( T + \left| \Hon_T \right|, \pi^{\Ignorant(\basealgo)} \right) - \Delta_{\min} \left| \Hon_T(\dagger) \right|}.
\end{equation}
\end{theorem}
The proof of \cref{thm:generic_regret_gain} again follows using a regret coupling and is detailed in \cref{app:proofs}. 

Note here that the base algorithm $\Pi$ needs to be identical on the $T$-length and $T + |\Hon_T|$ problems, but this is avoided in UCB-style algorithms through using $T+H$ in the logarithmic terms.
The expectation is taken over the randomness in the reward observations for both {\em offline} and {\em online} data. Note that \cref{eq:generic_regret_gain} depends on both $|\Hon_T|$ and $|\Hon_T(\dagger)|$, which are random variables and are a non-trivial function of the choice of $\basealgo$ alongside the information relation.  In \cref{thm:reg_gains} we instantiate this theorem for \MonUCB to obtain a more interpretable regret improvement bound.
However, \cref{thm:generic_regret_gain} already highlights the benefits of incorporating historical data into a regret-optimal bandit algorithm.  In a typical regret-optimal algorithm, $\regret(T, \pi^{\Ignorant(\basealgo)})$ is a sublinear function of~$T$ for {\em any} choice of $T > 0$.  Thus the first term in \cref{eq:generic_regret_gain} scales sublinearly with respect to $|\Hon_T|$, while the second term scales {\em linearly} with respect to $|\Hon_T|$.  This highlights the benefit of incorporating historical data for a general regret-optimal base algorithm.


\subsection{Case Study: \MonUCB}
\label{sec:mon_ucb_benefits}

We now offer a case study of \ArtReplay applied to \MonUCB from \cref{sec:ar_case_study_mon_ucb}. We show that \MonUCB satisfies \iidata and hence \ArtReplay has identical regret guarantees to \Full, and also that \ArtReplay has arbitrarily better data efficiency than \Full. We then quantify the value of the historical data in terms of regret improvement.  Many of the results will extend to arbitrary base algorithms in more general settings.

For the rest of this section, we consider the relation where $\actiongeneric \rel \actiongeneric'$ if and only if $\actiongeneric = \actiongeneric'$.  We start by recalling \cref{thm:coupling} which establishes that \Full and \ArtReplay have identical regret for \iidata algorithms.  Indeed, by showing \MonUCB satisfies \iidata, we obtain the following:

\begin{theorem}
\label{thm:mon_ucb_iid}
\MonUCB satisfies \iidata.  Hence
\[
\regret \left( T, \pi^{\ArtReplay(\MonUCB)}, \history \right) \eqdist \regret \left( T, \pi^{\Full(\MonUCB)}, \history \right) \ .
\]
\end{theorem}
This is the {\em finest} relation such that \MonUCB satisfies \iidata.  Not all $K$-armed bandit algorithms satisfy \iidata under this relation. For example, Thompson Sampling (TS)~\citep{russo2018tutorial} samples arms according to the posterior probability that they are optimal. TS does not satisfy \iidata: data from actions other than the one chosen will adjust the posterior distribution, and hence will adjust the selection probabilities as well. However, as we show in \cref{fig:k-armed-ts-ids}, \ArtReplay still achieves empirical gains over \Full with Thompson sampling, despite not satisfying \iidata.


We next highlight that \ArtReplay is robust to {\em spurious data}, where the historical data has excess samples coming from poor-performing actions.  Spurious data imposes computational challenges since the \Full approach will pre-process the full historical dataset regardless of the observed rewards or the inherent value of the historical data.  In contrast, \ArtReplay will only use the amount of data useful for learning.  \textbf{This allows \ArtReplay to have {\em arbitrarily better data efficiency} than \Full!}

Note that \ArtReplay imposes minimal computational and storage overhead on top of existing algorithms, simply requiring a data structure to verify whether $(\history \setminus \Hon_t) \cap \{\actiongeneric \in \actionset : \actiongeneric \rel_{\Hon_t \cup \H_t} \Pi(\Hon_t \cup \H_t)\} = \emptyset$.  However, this can be done efficiently with hashing techniques.  We make this formal in the following theorem:
\begin{theorem}
\label{thm:comp_gains}
For every $H \in \mathbb{N}$ there exists a historical dataset $\history$ with $|\history| = H$ where the data efficiency of $\pi^{\Full(\MonUCB)}$ is linear in $\Omega(H+T)$ whereas the data efficiency of $\pi^{\ArtReplay(\MonUCB)}$ is only $O(T+\min\{\sqrt{T},\log(T) / \min_a \Delta(a)^2\})$.
\end{theorem}
\begin{rproof}
    \cref{thm:comp_gains} is a corollary of \cref{thm:comp_gains_psi} (appendix) under the selection of $\psi(\lambda) = \lambda^2 / 8$.
\end{rproof}


For the $K$-armed bandit, \Full requires $O(K)$ storage to maintain estimates for each arm, while a naive implementation of \ArtReplay requires $O(K+H)$ storage since the entire historical dataset needs to be stored.  However, effective hashing can address the extra $H$ factor.  
We also note that most practical bandit applications (including content recommendation systems and the poaching prevention setting discussed) incorporate historical data obtained from database systems. This historical data will be stored regardless of the algorithm being employed, and so the key consideration is computational requirements.  

Lastly, to complement the data efficiency improvements of \ArtReplay applied to \MonUCB, we also characterize the regret improvement due to incorporating historical data relative to a data ignorant algorithm. As the regret coupling implies that \ArtReplay achieves the same regret as \Full, we can simply analyze the regret improvement gained by \Full to characterize the regret of \ArtReplay.

\begin{theorem}
\label{thm:reg_gains}
Let $H_\actiongeneric$ be the number of data points in $\history$ for each action $\actiongeneric \in [K]$.  Then the regret of Monotone UCB with historical dataset~$\history$ is:
\begin{align*}
        & \Exp{\regret \left( T, \pi^{\ArtReplay(\MonUCB)}, \history \right) \mid (H_a)_{a \in \actionset} } \leq O \Big( \sum\limits_{\actiongeneric \in [K] : \Delta_\actiongeneric \neq 0} \max\big\{0, \tfrac{\log(T)}{\Delta(a)} - H_a \Delta(a) \big\} \Big) \ .
\end{align*}
\end{theorem}
\begin{rproof}
    \cref{thm:reg_gains} is a corollary of \cref{thm:reg_gains_psi} under the choice of $\psi(\lambda) = \lambda^2 / 8$.
\end{rproof}

\cref{thm:comp_gains} together with \cref{thm:reg_gains} helps demonstrate the advantage of using \ArtReplay over \Full: we improve data efficiency (and hence, computational complexity) while maintaining an equally improved regret guarantee. 
Moreover, it highlights the impact historical data can have on the regret.  If $|H_a| \geq \log(T) / \Delta(a)^2$ for each action~$a$, then the regret of the algorithm will be logarithmic with $T$ (similar insights have been seen in the context of online pricing~\citep{bu2020online}).  This result also reduces to the standard \MonUCB guarantee (\cref{thm:mon_ucb_regret}) when $\history = \emptyset$.  
We remark that there are no existing regret lower bounds for incorporating historical data in bandit algorithms. Our regret guarantees seem likely to be optimal, as $\log(T)/\Delta(a)$ has been shown to be the minimax regret for gathering sufficient information to eliminate a suboptimal arm $a$, and $H_a \Delta(a)$ naturally represents a reduction in regret of $\Delta(a)$ for each pull of arm $a$ in the dataset. 
 \cref{thm:reg_gains} also highlights interesting insights into a measure of {\em usefulness} of the historical data. Historical data for the optimal action has no impact on regret (since playing that action online does not incur any regret).  Moreover, it is better to have {\em more} data on {\em more suboptimal} actions.

It remains an interesting direction for future work to understand the optimality of \Full as well as how to optimally incorporate historical data in settings when \Full may not be optimal, for example, due to contaminated or stale historical data~\citep{cheung2024leveraging}.  Our key contribution is to highlight a simple, intuitive, and implementable approach through \ArtReplay which matches the performance of \Full while simultaneously achieving better data efficiency.

\section{\iidata Algorithms}
\label{sec:algorithms}

\cref{thm:coupling,thm:generic_regret_gain} show that for \iidata algorithms, \ArtReplay achieves identical regret as \Full and offers a regret improvement with respect to the amount of used historical data (the data efficiency).  The regret performance of \ArtReplay therefore hinges on the choice of the base algorithm and information relation; as a result, we suggest using a regret-optimal base algorithm and relation that satisfies \iidata.

In addition to \ArtReplay being optimal for the \MonUCB algorithm as we proved in \cref{sec:mon_ucb_benefits}, here we show that the widely used {\em regret-optimal} bandit algorithms which use the {\em upper confidence bound} approach (i.e., choosing greedily from optimistic estimates) can be adapted to satisfy \iidata while retaining their regret guarantees.  We provide example regret-optimal algorithms for metric bandits and \CMABCRA (\cref{sec:def_metric_bandit,sec:def_cmab_cra}).  Other algorithms, such as for linear response bandits~\citep{bastani2020online,goldenshluger2013linear}, can be similarly modified to satisfy \iidata.  Additional details are in \cref{sec:app_algorithms}.

\subsection{\iidata in Metric Bandits}
\label{sec:metric_bandits_iidata}

We first consider the standard metric bandit set-up of \cref{sec:def_metric_bandit}. We assume that $\actionset$ is a compact metric space with metric~$d$, and that the reward function~$\mu$ is $L$-Lipschitz continuous with respect to the metric, i.e.:
\[
\abs{\mu(\actiongeneric) - \mu(\actiongeneric')} \leq Ld(\actiongeneric, \actiongeneric').
\]
\noindent Incorporating historical data efficiently is difficult in continuous action spaces. The key issues are that the computation and storage costs grow with the size of the historical dataset, and the estimation and discretization are done independently of the quality of the reward.  Two natural approaches to address continuous actions are to $(i)$~discretize the action space based on the data using nearest-neighbor estimates, or $(ii)$~learn a regression of the mean reward using available data. When using \Full, both approaches are data inefficient in light of {\em spurious data} --- when excessive data is collected from poor-performing actions. Fixed discretization-based algorithms will unnecessarily process and store a large number of discretizations in low-performing regions of the space (\cref{fig:visualize_discretization}); regression-based methods require additional compute resources to learn an accurate predictor of the mean reward in irrelevant regions.

We consider fixed and adaptive discretization algorithms, establishing that a straightforward modification of existing UCB algorithms are regret-optimal and satisfy \iidata under an appropriate information relation~\citep{kleinberg2019bandits}.  Here we use the natural information relation, where at a high level $\actiongeneric \rel_\H \actiongeneric'$ so long as they both fall within the same discretized region.

The algorithm maintains a collection of regions~$\P_t$ of $\actionset$ which covers~$\actionset$. For fixed discretization, $\P_t$ is fixed at the start of learning as an $\gamma-$covering of the action set~$\actionset$; with adaptive discretization, it is refined over the course of learning based on observed data. We use $\reg \in \P_t$ to denote a {\em region} of the action space.  The monotone discretization algorithms track the following: \textit{(i)} $\mubar_t(\reg)$ for the estimated mean reward of region~$\reg$, \textit{(ii)} $n_t(\reg)$ for the number of times $\reg$ has been selected, and $\textit{(iii)}$~$\UCB_t(\reg)$ for an upper confidence bound estimate of the reward.  
At each timestep~$t$, our algorithm performs three steps.  First for the {\em action selection} we select the region which maximizes $\UCB_t(\reg)$ and pick any $\actiongeneric \in \reg$ to play arbitrarily.  Afterwards, we {\em update parameters} by incrementing $n_t(\reg)$ by one, update $\mubar_t(\reg)$ based on observed data, and set \[\UCB_{t+1}(\reg) = \min \left\{ \UCB_t(\reg), \mubar_t(\reg) + b(n_t(\reg)) \right\}\] for some appropriate bonus term $b(\cdot)$.  The UCB update enforces monotonicity in the $\UCB$ values similar to $\MonUCB$ and is required to preserve \iidata.  Lastly, in the case of adaptive-discretization, we potentially {\em re-partition} the space.  We split a region when the confidence in its estimate $b(n_t(\reg))$ is smaller than the diameter of the region and replace it with new regions of half the diameter. This condition may seem independent of the {\em quality} of a region, but since it is incorporated into a learning algorithm, the number of samples in a region is correlated with its reward.

Under appropriately defined bonus term $b(\cdot)$ and selection of $\gamma$ (see \citet{kleinberg2019bandits,sinclair2021adaptive} and \cref{sec:app_algorithms_metric_bandits} for further details), it can be shown that these algorithms are regret-optimal.  However, we also have:
\begin{theorem}
\label{thm:iid_metric_bandits}
The fixed discretization algorithm under the relation where $\actiongeneric \rel \actiongeneric'$ whenever $d(\actiongeneric, \actiongeneric') \leq \gamma$ is \iidata.  The adaptive discretization algorithm under the relation where $\actiongeneric \rel_\H \actiongeneric'$ whenever $\actiongeneric$ and $\actiongeneric'$ belong to the same region $\reg$ over data $\H$ satisfies \iidata.
\end{theorem}

\cref{thm:iid_metric_bandits} allows us to match the regret performance of \Full while simultaneously avoiding reading in the entire historical dataset to improve data efficiency.

\subsection{\iidata in \CMABCRA}
\label{sec:cmab_iid_example}

Now we extend the algorithms from \cref{sec:metric_bandits_iidata} to the \CMABCRA set-up (see \cref{sec:def_cmab_cra}) which are used in the computational results in \cref{sec:experiments}.  First, we outline two assumptions on the underlying problem.  The first is the standard nonparametric assumption, highlighting that the resource space~$\pointset$ is a compact metric space with metric $d_\pointset$, diameter $\dmax$, and that the true underlying reward function~$\reward$ is Lipschitz with respect to~$d_\pointset$.  Indeed we have:
\[
\abs{\reward(\point, \effortgeneric) - \reward(\point', \effortgeneric)} \leq L d_\pointset(\point, \point').
\]
\noindent The next assumption assumes access to an oracle for solving optimization problems of the form of \cref{eq:cmab_opt} for arbitrary choice of reward functions $\reward(\point, \effortgeneric)$.  We can relax this assumption to instead assume that there exists a randomized approximation oracle by appropriately shifting the regret benchmark.  See \citet{zuo2021combinatorial} which considers this in a discrete setting.

Our algorithms directly modify those in \cref{sec:metric_bandits_iidata} to maintain a separate discretization for each allocation $\effortgeneric \in \effortset$.  Indeed, the algorithms are UCB style, where the selection rule optimizes over the combinatorial action space (\cref{eq:action_set}) through a discretization of~$\pointset$.  For each allocation $\effortgeneric \in \effortset$, the algorithm maintains a collection of regions $\P_t^\effortgeneric$ of $\pointset$ which covers $\pointset$. For fixed discretization, $\P_t^\effortgeneric$ is fixed at the start of learning; with adaptive discretization it is refined throughout learning based on observed data. 

The algorithm tracks the following: \textit{(i)} $\mubar_t(\reg, \effortgeneric)$ for the estimated mean reward of region~$\reg$ at allocation~$\effortgeneric$, \textit{(ii)} $n_t(\reg, \effortgeneric)$ for the number of times $\reg$ has been selected at allocation~$\effortgeneric$, and $\textit{(iii)}$~$\UCB_t(\reg, \effortgeneric)$ for an upper confidence bound estimate of the reward.  
At each timestep~$t$, our algorithm performs three steps:
\begin{enumerate}
\item {\bf Action selection}: Greedily select at most $N$~regions in $\P_t^\effortgeneric$ to maximize $\UCB_t(\reg, \effortgeneric)$ subject to the budget constraint (see \cref{eq:selection} in the appendix). Note that we must ensure that each region is selected at only a \emph{single} allocation value.  This is solved using a standard knapsack formulation.
\item {\bf Update parameters}: For each of the selected regions, increment $n_t(\reg, \effortgeneric)$ by one, update $\mubar_t(\reg, \effortgeneric)$ based on observed data, and set \[\UCB_{t+1}(\reg, \effortgeneric) = \min \left\{ \UCB_t(\reg, \effortgeneric), \mubar_t(\reg, \effortgeneric) + b(n_t(\reg, \effortgeneric)) \right\}\] for some appropriate bonus term $b(\cdot)$.  The UCB update enforces monotonicity in the $\UCB$ estimates similar to $\MonUCB$ and is required to preserve \iidata.
\item {\bf Re-partition}: We split a region when the confidence in its estimate $b(n_t(\reg, \effortgeneric))$ is smaller than the diameter of the region. In \cref{fig:discretization_visual} (appendix) we highlight how the adaptive discretization algorithm hones in on high-reward regions without knowing the reward function before learning.
\end{enumerate}
Due to space, we describe the algorithms at a high level here and defer details to \cref{sec:app_algorithms_cmab_cra}. These algorithms modify existing approaches applied to \CMABCRA in the bandit and reinforcement learning literature, which have been shown to be regret-optimal~\citep{xu2021dual,sinclair2021adaptive}. Additionally, these approaches are \iidata under the natural discretization-based information relation (where we additionally account for the semi-bandit feedback in the style of \cref{sec:information_relation}).
\begin{theorem}
\label{thm:iid_example}
The fixed and adaptive discretization algorithms when using a ``greedy'' solution to solve the action selection rule have the \iidata property.
\end{theorem}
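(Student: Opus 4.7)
The plan is to verify \iidata at the level of the (region, allocation) subarms that parameterize the greedy selection. Suppose that at time~$t$, after processing history $\H$, the algorithm proposes the combinatorial action $\actiongeneric = \{(\reg_i, \efforti{i})\}_{i \in [N]}$. Let $\H'$ be an arbitrary dataset whose samples come only from subarms $(\reg', \effortgeneric') \notin \{(\reg_i, \efforti{i})\}_{i \in [N]}$. I want to show that feeding $\H \cup \H'$ to the algorithm yields the same proposed action. I first verify that the internal state at each chosen subarm is invariant to $\H'$. Because $n_t(\reg_i, \efforti{i})$ and $\mubar_t(\reg_i, \efforti{i})$ update only when the processed datapoint lies in that specific subarm, neither changes after incorporating $\H'$. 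The monotone UCB update therefore leaves $\UCB_t(\reg_i, \efforti{i})$ unchanged. In the adaptive variant, the splitting criterion for $\reg_i$ is triggered when $b(n_t(\reg_i, \efforti{i}))$ falls below the diameter of $\reg_i$; since $n_t(\reg_i, \efforti{i})$ is unaltered, $\reg_i$ is not split and remains an element of $\P_t^{\efforti{i}}$.

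Next I analyze the effect of $\H'$ on the subarms not in $\actiongeneric$. The same monotone update used in the proof of \cref{thm:mon_ucb_iid} guarantees that the UCB of any subarm can only (weakly) decrease as additional data is processed. For the adaptive algorithm, if an unchosen region $\reg'$ splits into children during the processing of $\H'$, the children either inherit the parent's running statistics or begin with the prior bound; either way, each child's UCB is upper-bounded by the value of $\UCB_t(\reg', \effortgeneric')$ in the original run. Consequently, no subarm appearing in the partition after processing $\H \cup \H'$ has a strictly larger UCB than the UCB of the corresponding element (or ancestor) in the original run.

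Finally I conclude by appealing to the structure of the greedy selection rule. With deterministic tie-breaking, the rule iteratively picks the feasible (region, allocation) pair of highest UCB subject to the budget and $\epsilon$-distinctness constraints. Since the UCBs of the $N$ chosen subarms are unchanged while the UCBs of all alternatives are weakly smaller, at every step of the greedy procedure the originally chosen subarm is still feasible and still attains the maximum UCB among feasible candidates. Therefore the greedy solution on $\H \cup \H'$ coincides with that on $\H$, establishing \iidata for both the fixed and adaptive discretization algorithms. The main obstacle I anticipate is a clean bookkeeping of the adaptive re-partitioning step: I must ensure that a cascade of splits in unchosen regions cannot manufacture a new subregion whose UCB exceeds that of an originally chosen subarm, and that the deterministic tie-breaking rule applied to the possibly richer partition still favors the same subarms---both issues that the monotonicity and inheritance properties above are designed to handle.
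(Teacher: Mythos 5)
Your proof is correct and follows essentially the same route as the paper's: invariance of the selected subarms' statistics, monotone (weakly decreasing) UCB indices for all other subarms, and the observation that the greedy selection rule is then unaffected. The only point the paper makes explicit that you leave implicit is that \iidata must first be re-interpreted over algorithm- and time-dependent equivalence classes of (region, allocation) pairs (since exact action matching is ill-posed for continuous actions with semi-bandit feedback); your choice to argue directly at the subarm level amounts to the same thing, and your bookkeeping of cascading splits in the adaptive case is in fact more detailed than the paper's.
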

\noindent {\em Proof Sketch.} In this proof, we require the algorithm to use the standard ``greedy approximation'', which is a knapsack problem in the \CMABCRA set-up~\citep{williamson2011design}.  In general, this introduces additional approximation ratio limitations.  However, under additional assumptions on the mean reward function $\mu(\point, \effortgeneric)$, the greedy solution is provably optimal.  For example, optimality of the greedy approximation holds when $\mu(\point, \effortgeneric)$ is piecewise linear and monotone, or more broadly when $\mu(\actiongeneric)$ is submodular.  See \cref{sec:app_algorithms_cmab_cra} for more discussion.  

\noindent Note that these assumptions hold for our application of green security to prevent wildlife poaching that we present in \cref{sec:experiments}.

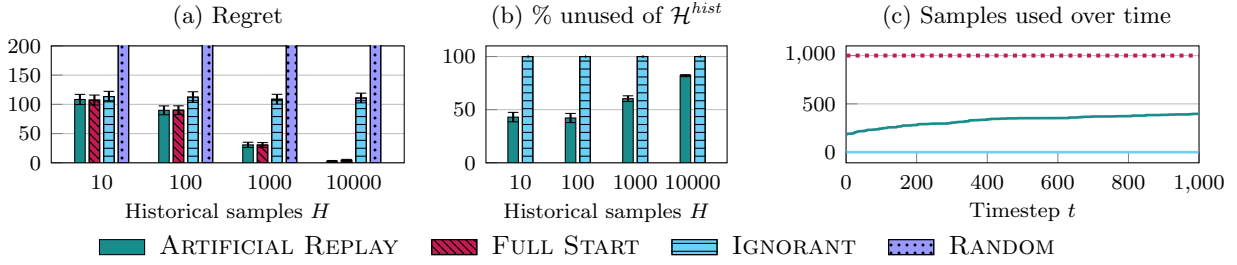
\begin{figure*}[!t]
    \centering
    \begin{tikzpicture}
\pgfplotsset{
  width=0.38\linewidth,
  height=0.23\linewidth,
  xtick pos=left,
  ytick pos=left,
  tick label style={font=\scriptsize},
  ymajorgrids=true,
  xlabel shift=-2pt,
  title style={yshift=-.7ex},
}

\begin{axis}[
  at={(.34\linewidth, 0\linewidth)}, 
  width=0.31\linewidth,
  table/col sep=comma,
  ybar=1.5pt,
  bar width=.14cm,
  symbolic x coords={10, 100, 1000, 10000},
  xticklabels={10, 100, {1,000}, {10,000}},
  xtick style={draw=none},
  x tick label style={yshift=.7ex}, 
  xtick={data},
  xlabel={\footnotesize{Historical samples~$H$}},
  title={\small{(b)~\% unused of $\history$}},
  ylabel style={align=center},
  ymin=0, 
  enlarge x limits={0.2},
  error bars/y dir=both, 
  error bars/y explicit,  
  error bars/error bar style={color=black, thick},
]

\addplot+[teal!90!white, draw=black, area legend,
] table [x=H, y=artificial_replay, y error=artificial_replay_sem] {data/k_armed_b1_unused_h_percentage.csv}; 
\addplot+[cyan!50!white, draw=black, area legend,
postaction={pattern=horizontal lines,}
] table [x=H, y=ignorant, y error=ignorant_sem] {data/k_armed_b1_unused_h_percentage.csv}; 
\end{axis}

\begin{axis}[
  at={(0\linewidth, 0\linewidth)}, 
  table/col sep=comma,
  ybar=1.5pt,
  bar width=.14cm,
  symbolic x coords={10, 100, 1000, 10000},
  xticklabels={10, 100, {1,000}, {10,000}},
  xtick style={draw=none},
  x tick label style={yshift=.7ex}, 
  xtick={data},
  xlabel={\footnotesize{Historical samples~$H$}},
  title={\small{(a)~Regret}},
  ylabel style={align=center},
  legend style={
    at={(0.8\linewidth, -.07\linewidth)},
    legend columns=4,
    font=\small,draw=none,fill=none,
    /tikz/every even column/.append style={column sep=10pt}},
  ymin=0, ymax=200,
  enlarge x limits={0.2},
  error bars/y dir=both, 
  error bars/y explicit,  
  error bars/error bar style={color=black, thick},
]

\addplot+[teal!90!white, draw=black, area legend,
] table [x=H, y=artificial_replay, y error=artificial_replay_sem] {data/k_armed_b1_regret.csv}; \addlegendentry{~\ArtReplay~~~~};
\addplot+[purple!90!white, draw=black, area legend,
postaction={pattern=north west lines,}
] table [x=H, y=historical, y error=historical_sem] {data/k_armed_b1_regret.csv}; \addlegendentry{~\Full~~~~};
\addplot+[cyan!50!white, draw=black, area legend,
postaction={pattern=horizontal lines,}
] table [x=H, y=ignorant, y error=ignorant_sem] {data/k_armed_b1_regret.csv}; \addlegendentry{~\Ignorant~~~~};
\addplot+[blue!40!white, draw=black, area legend,
postaction={pattern=dots,}
] table [x=H, y=random, y error=random_sem] {data/k_armed_b1_regret.csv}; \addlegendentry{~\Random~~~~};
\end{axis}

\begin{axis}[ 
  at={(.64\textwidth, 0\textwidth)},
  title={\small{~(c)~Samples used over time}},
  xlabel={\footnotesize{Timestep~$t$}},
  xlabel shift=-4pt,
  x tick label style={yshift=-.2ex}, 
  ymax=1100,
  xmin=0, xmax=1000,
]

\addplot [teal!90!white, line width=1pt, mark=none] table [x=t, y=artificial_replay, col sep=comma] {data/k_armed_b1_avg_used_h_k10_H1000.csv}; 

\addplot [purple!90!white, line width=1.5pt, dotted, mark=none] table [x=t, y=historical, col sep=comma] {data/k_armed_b1_avg_used_h_k10_H1000.csv}; 

\addplot [cyan!50!white, line width=1pt, mark=none] table [x=t, y=ignorant, col sep=comma] {data/k_armed_b1_avg_used_h_k10_H1000.csv}; 

\end{axis}

\end{tikzpicture}
    \caption{(\textit{$K$-armed)} Increasing the number of historical samples~$H$ leads \Full to use unnecessary data, particularly as $H$ gets very large. \ArtReplay achieves equal performance in terms of regret (plot~a) while using less than half the historical data (plot~b). In plot~(c) we see that with $H=1{,}000$ historical samples, \ArtReplay uses (on average) 117 historical samples before taking its first online action. The number of historical samples used increases at a decreasing rate, using only 396 of $1{,}000$ total samples by the horizon~$T$.
    Results are shown on the $K$-armed bandit setting with $K=10$ and horizon $T=1{,}000$.
    }
    \label{fig:k-armed-b1-spurious-data}
\end{figure*}

\begin{figure*}[!t]
    \centering
     \begin{tikzpicture}
\pgfplotsset{
  width=0.38\linewidth,
  height=0.23\linewidth,
  xtick pos=left,
  ytick pos=left,
  xtick style={draw=none},
  title style={yshift=-.7ex},
  ylabel style={align=center},
  tick label style={font=\scriptsize},
  x tick label style={yshift=.7ex}, 
  ymajorgrids=true,
  xlabel shift=-2pt,
  enlarge x limits={0.2},
}

\begin{axis}[
  at={(0.34\linewidth, 0\linewidth)}, 
  width=0.31\linewidth,
  table/col sep=comma,
  ybar,
  bar width=.14cm,
  symbolic x coords={0.1, 0.5, 0.7, 0.9},
  xtick={data},
  title={\small{(b)~\% unused of $\history$}},
  xlabel={\footnotesize{Fraction bad arms}},
  legend style={
    at={(1.7,1.4)},
    legend columns=4,
    font=\footnotesize,draw=none,fill=none},
  ymin=0, 
  error bars/y dir=both, 
  error bars/y explicit,  
  error bars/error bar style={color=black, thick},
]

\addplot+[teal!90!white, draw=black, area legend,
] table [x=frac_vals, y=adaptive_artificial_replay, y error=adaptive_artificial_replay_sem] {data/adaptive_unused_h_frac.csv}; 
\addplot+[cyan!50!white, draw=black, area legend,
postaction={pattern=horizontal lines,}
] table [x=frac_vals, y=adaptive_ignorant, y error=adaptive_ignorant_sem] {data/adaptive_unused_h_frac.csv}; 
\end{axis}

\begin{axis}[
  at={(.64\linewidth, 0\linewidth)}, 
  table/col sep=comma,
  ybar,
  bar width=.14cm,
  symbolic x coords={0.1, 0.5, 0.7, 0.9},
  xtick={data},
  title={\small{(c)~Number of regions~$\reg$}},
  xlabel={\footnotesize{Fraction bad arms}},
  legend style={
    at={(0.5\linewidth,.25\linewidth)},
    legend columns=3,
    font=\small,draw=none,fill=none},
  ymin=0, 
  error bars/y dir=both, 
  error bars/y explicit,  
  error bars/error bar style={color=black, thick},
]

\addplot+[teal!90!white, draw=black, area legend,
] table [x=frac_vals, y=adaptive_artificial_replay, y error=adaptive_artificial_replay_sem] {data/adaptive_n_regions.csv}; 
\addplot+[purple!90!white, draw=black, area legend,
postaction={pattern=north west lines,}
] table [x=frac_vals, y=adaptive_historical, y error=adaptive_historical_sem] {data/adaptive_n_regions.csv}; 
\addplot+[cyan!50!white, draw=black, area legend,
postaction={pattern=horizontal lines,}
] table [x=frac_vals, y=adaptive_ignorant, y error=adaptive_ignorant_sem] {data/adaptive_n_regions.csv}; 
\end{axis}

\begin{axis}[
  at={(0\linewidth, 0\linewidth)}, 
  table/col sep=comma,
  ybar=1.5pt,
  bar width=.14cm,
  symbolic x coords={0.1, 0.5, 0.7, 0.9},
  xtick={data},
  title={\small{(a)~Regret}},
  xlabel={\footnotesize{Fraction bad arms}},
  legend style={
    at={(0.8\linewidth, -.07\linewidth)},
    legend columns=4,
    font=\small,draw=none,fill=none,
    /tikz/every even column/.append style={column sep=10pt}},
  ymin=0, ymax=600,
  error bars/y dir=both, 
  error bars/y explicit,  
  error bars/error bar style={color=black, thick},
]

\addplot+[teal!90!white, draw=black, area legend,
] table [x=frac_vals, y=adaptive_artificial_replay, y error=adaptive_artificial_replay_sem] {data/adaptive_regret.csv}; \addlegendentry{~\ArtReplay~~~~};
\addplot+[purple!90!white, draw=black, area legend,
postaction={pattern=north west lines,}
] table [x=frac_vals, y=adaptive_historical, y error=adaptive_historical_sem] {data/adaptive_regret.csv}; \addlegendentry{~\Full~~~~};
\addplot+[cyan!50!white, draw=black, area legend,
postaction={pattern=horizontal lines,}
] table [x=frac_vals, y=adaptive_ignorant, y error=adaptive_ignorant_sem] {data/adaptive_regret.csv}; \addlegendentry{~\Ignorant~~~~};
\addplot+[yellow!90!white, draw=black, area legend,
postaction={pattern=crosshatch dots,}
] table [x=frac_vals, y=regression, y error=regression_sem] {data/adaptive_regret.csv}; \addlegendentry{~\Regressor~~~~};
\end{axis}

\end{tikzpicture}
    \caption{\textit{(\CMABCRA)} Holding $H=10{,}000$ constant, we increase the fraction of historical data samples on bad arms (bottom 20\% of rewards). The plots show (a)~regret, (b)~$\%$ of unused historical data, and (c)~number of discretized regions in partition~$\P$. \ArtReplay enables significantly improved runtime and reduced storage while matching the performance of \Full. Results on the \CMABCRA setting with adaptive discretization on the quadratic domain.}
    \label{fig:imbalanced-data-adaptive}
\end{figure*}
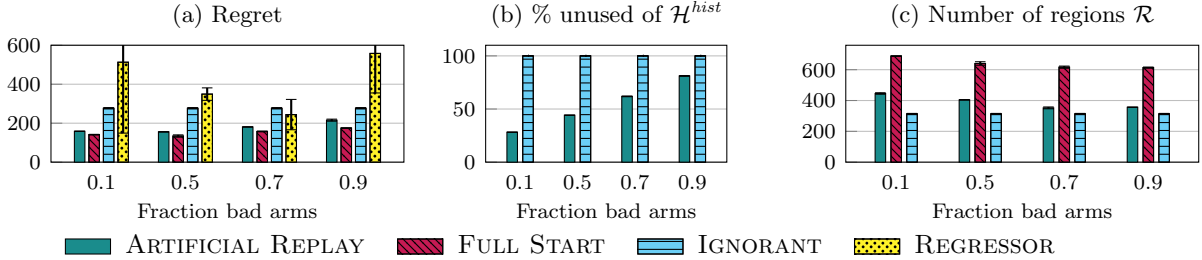
\section{Empirical Benefits of \ArtReplay}
\label{sec:experiments}

\begin{figure*}[!t]
    \centering
    
    \input{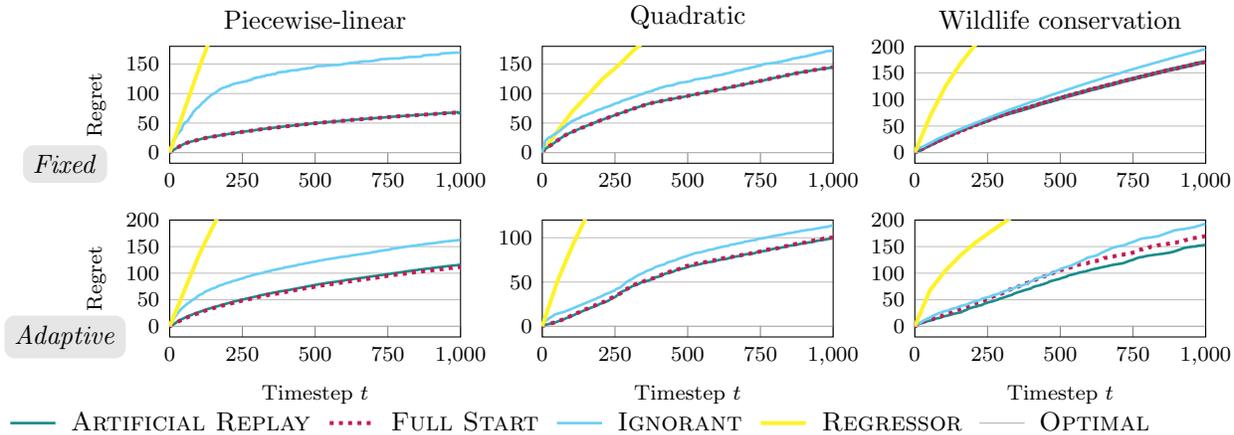}
    \caption{\textit{(\CMABCRA)} Cumulative regret ($y$-axis; lower is better) across time~$t \in [T]$. \ArtReplay performs equally as well as \Full across all domain settings, including both fixed discretization (top row) and adaptive discretization (bottom). \Regressor performs quite poorly.}
    \label{fig:performance}
\end{figure*}

We complement the theoretical advantages of \ArtReplay by showing that, in practice, our meta-algorithm achieves identical performance to \Full while significantly improving data efficiency. 
Experiment details and additional results are available in \cref{sec:experiments_appendix}.
\footnote{Code to reproduce the experiments is available at \mbox{\url{https://github.com/lily-x/artificial-replay}}.}
We conduct experiments on two of the bandit models described in \cref{sec:bandit-models}: finite $K$-armed bandits and \CMABCRA, using both fixed and adaptive discretization. For the continuous combinatorial setting, we provide two stylized domains, a piecewise-linear and a quadratic reward function. 

\paragraph{Green Security Domain.}  To emphasize the practical benefit of \ArtReplay, we evaluate on a real-world resource allocation setting for biodiversity conservation. Here, we introduce a new model for green security games with continuous actions by using adaptive discretization. The \CMABCRA model can be used to specify the green security setting from \citet{xu2021dual}, where the space~$\pointset$ represents a protected area and $\effortset$ is the discrete set of patrol resources to allocate, such as number of ranger hours per week with a budget~$B$ of 40~hours. This formulation generalizes to a continuous-space model of the landscape, instead of the artificial fixed discretization that was considered in prior work consisting of $1 \times 1$ sq.~km regions of the park. The reward function $\reward(\point, \effortgeneric)$ then models the probability of observing a snare at location $\point$ at effort level $\effortgeneric$.

We study real ranger patrol data from Murchison Falls National Park, shared as part of a collaboration with the Uganda Wildlife Authority and the Wildlife Conservation Society. We use historical patrol observations to build the history~$\history$; we analyze these historical observations in detail in \cref{sec:experiments_appendix} to show that this dataset exhibits both spurious data and imbalanced coverage.

\paragraph{Baselines.} We compare \ArtReplay against \Ignorant and \Full approaches.  In the $K$-armed model, we use \MonUCB as the base algorithm.  In \CMABCRA we use fixed and adaptive discretization as well as \Regressor, a neural network learner that is a regression-based approach analogue to \Full. \Regressor is initially trained on the entire historical dataset, then iteratively retrained after $128$ new samples are collected. We compute for each setting the performance of an \Optimal action based on the true rewards and a \Random baseline that acts randomly while satisfying the budget constraint.

\paragraph{Results.} 
The results in \cref{fig:performance} empirically validate our theoretical result from \cref{thm:coupling}: the performance of \ArtReplay is identical to that of \Full, and reduces regret considerably compared to the naive \Ignorant approach. 
We evaluate the regret (compared to \Optimal) of each approach across time $t \in [T]$. Concretely, we consider the three domains of piecewise-linear reward, quadratic reward, and green security with continuous space~$\pointset = [0,1]^2$, $N=5$ possible action components, a budget $B=2$, and $3$ levels of effort. We include $H=300$ historical data points. See \cref{fig:k-armed-b3-spurious-data} (appendix) for regret and analysis of historical data use on the $K$-armed bandit. Results are averaged over 60 iterations with random seeds, with standard error plotted. 


Not only does \ArtReplay achieve equal performance, but its data efficiency improvements over \Full are clear even on practical problem sizes. As we increase historical data from $H = \{10; 100; 1{,}000; 10{,}000 \}$ in \cref{fig:k-armed-b1-spurious-data}, the proportion of irrelevant data increases. Our method achieves equal performance, overcoming the previously unresolved challenge of \emph{spurious data}, while \Full suffers from arbitrarily worse data efficiency (\cref{thm:comp_gains}). With $10{,}000$ historical samples and a time horizon of $1{,}000$, we see that 58.2\% of historical samples are irrelevant to producing the most effective policy.



When faced with \emph{imbalanced data coverage}, the benefits of \ArtReplay become clear---most notably in the continuous action setting with adaptive discretization. In \cref{fig:imbalanced-data-adaptive}, as we increase the number of historical samples on bad regions (bottom 20th percentile of reward), the additional data require finer discretization, leading to arbitrarily worse storage and computational complexity for \Full with equal regret. 
In \cref{fig:imbalanced-data-adaptive}(c), we see that with 10\% of data on bad arms, \ArtReplay requires only 446 regions~$\reg$ compared to 688 used by \Full; as we get more spurious data and that fraction increases to 90\%, then \ArtReplay requires only 356 regions while \Full still stores 614 regions.

\section{Conclusion}

We present \ArtReplay, a meta-algorithm that modifies {\em any base bandit algorithm} to efficiently harness historical data. As we show, under a widely applicable \iidata condition, the regret of \ArtReplay is distributionally identical to that of a full warm-start approach, while also guaranteeing significantly better data efficiency. 

Our experimental results highlight the advantage of using \ArtReplay over \Full on a variety of base algorithms, applied to $K$-armed and continuous combinatorial bandit models. These advantages even hold for base algorithms such as Thompson sampling and Information Directed Sampling (IDS) that do not exhibit \iidata.  

Directions for future work include \textit{(i)}~find \iidata algorithms in other bandit domains such as linear contextual bandits, \textit{(ii)}~incorporate the \ArtReplay approach into reinforcement learning, \textit{(iii)}~motivate \iidata by highlighting it is robust to adversarial data generation, and \textit{(iii)}~provide theoretical bounds showing that \ArtReplay has \emph{optimal} data efficiency when incorporating historical data.

\bibliographystyle{informs2014} 
\bibliography{ref} 


\newpage
\crefalias{section}{appendix}

\begin{APPENDICES}


\renewcommand{\arraystretch}{0.75} 

\begin{table}[!ht]
\caption{List of common notations}
    \centering
    \begin{tabular}{ll}
\toprule
\textbf{Symbol} & \textbf{Definition} \\ \midrule
\multicolumn{2}{c}{\textit{Problem setting specifications}} \\
\midrule
$\actionset$ & Feasible action space \\
$\rewarddist$ & Reward distribution, i.e., $\rewarddist : \actionset \rightarrow \Delta([0,1])$ \\
$T$ & Time horizon \\
$\D$ & An arbitrary set of $(\actiongeneric, r)$ pairs \\
$\history, H$ & Historical data available to algorithm and number of historical datapoints \\
$\regret(\pi, T, \history)$ & Cumulative regret for an algorithm~$\pi$ on $T$~timesteps with historical data $\history$ \\
$\actiongeneric$ & Generic action $\actiongeneric \in \actionset$ \\
$\actionhistt{j}$ & Historical action at index~$j$ in history~$\history$ \\
$\actiont{t}$ & Selected action chosen at timestep~$t$ \\
$\obsreward_t$ & Reward observed at timestep~$t$ from action~$\actiont{t}$ \\
$\basealgo$ & Base algorithm that maps ordered $(\actiongeneric, \obsreward)$ pairs to a distribution over $\actionset$ \\
$\rel_{\H}$ & Information relation over $\actionset$ based on the data $\H$ \\
$\pi^{\ArtReplay}$ & \ArtReplay framework and its resulting policy \\
$\pi^{\Full}$ & \Full: Full warm starting operation and its resulting policy\\
$\pi^{\Ignorant}$ & \Ignorant: Original policy ignoring historical data\\
\midrule
\multicolumn{2}{c}{\textit{\CMABCRA specification}}\\
\midrule
\CMABCRA & Combinatorial Multi-Armed Bandit for Continuous Resource Allocation \\
$\pointset, \effortset$ & The continuous resource space, and (discrete) space of allocation values \\
$d_\pointset, \dmax$ & Metric over $\pointset$ and the diameter of $\pointset$ \\
$N, \epsilon$ & Maximum number of regions which can be selected, and the minimum distance \\
$\rewarddist(\point, \effort)$ & Reward distribution for a particular allocation $(\point, \effort)$ \\
$\reward(\point, \effort)$ & Mean reward for a particular allocation $(\point, \effort)$ \\
\midrule
\multicolumn{2}{c}{\textit{Monotone UCB}}\\
\midrule
$\mubar_t(\actiongeneric), n_t(\actiongeneric)$ & Mean reward estimates and number of samples for action $\actiongeneric \in [K]$ \\
$\UCB_t(\actiongeneric)$ & Upper confidence bound estimate of action~$\actiongeneric$ \\
\midrule
\multicolumn{2}{c}{\textit{Fixed Discretization}}\\
\midrule
$\P$ & Fixed $\epsilon$ covering of $\pointset$ \\
$\mubar_t(\reg, \effortgeneric), n_t(\reg, \effortgeneric)$ & Mean reward estimates and number of samples for region $\reg \in \P$ \\
$\UCB_t(\reg, \effortgeneric)$ & Upper confidence bound estimate of $\reward(\reg, \effortgeneric)$ \\
$b(t)$ & Bonus term (confidence radius) for a region which has been selected $t$ times \\
\midrule
\multicolumn{2}{c}{\textit{Adaptive Discretization}}\\
\midrule
$\P_t^\effortgeneric$ & Partition of space~$\pointset$ at timestep~$t$ for allocation $\effortgeneric \in \effortset$ \\
$\mubar_t(\reg, \effortgeneric), n_t(\reg, \effortgeneric)$ & Mean reward estimates and number of samples for region $\reg \in \P_t^\effortgeneric$ \\
$\UCB_t(\reg, \effortgeneric)$ & Upper confidence bound estimate of $\reward(\reg, \effortgeneric)$ \\
$r(\reg)$ & Diameter of a region~$\reg$ \\
$b(t)$ & Bonus term for a region which has been selected $t$ times \\
\bottomrule
    \end{tabular}
\end{table}

\section{IIData Algorithms}
\label{sec:app_algorithms}

\subsection{\MonUCB Algorithms for $K$-Armed Bandits}
\label{sec:ucb_algorithms_appendix}

In this section we detail the Monotone UCB (denoted \MonUCB) style algorithms for the $K$-Armed Bandit problem of \cref{sec:example_finite}, which are regret-optimal and satisfy the \iidata property under the relation where $\actiongeneric \rel \actiongeneric'$ if and only if $\actiongeneric = \actiongeneric'$.  These algorithms are derived from the $\psi$-UCB based algorithms from~\citet{bubeck2012regret}. For full pseudocode of the algorithm see \cref{alg:psi_ucb_algorithm}.  We describe the algorithm without incorporating historical data (where its counterpart involving historical data are derived by treating this as the base algorithm~$\basealgo$ and appealing to \ArtReplay or \Full described in \cref{sec:online_wrapper}).

\begin{algorithm}[tb]
   \caption{Monotone $\psi$-UCB (\MonUCB)}\label{alg:psi_ucb_algorithm}
\begin{algorithmic}[1]
    \State \textbf{Input}: Convex function $\psi$
   \State Initialize $n_1(\actiongeneric) = 0$, $\mubar_1(\actiongeneric) = 1$, and $\UCB_1(\actiongeneric) = 1$ for each $\actiongeneric \in [K]$
   \For{$t = \{1, 2, \ldots\}$}
   \State Let $\actiont{t} = \argmax_{\actiongeneric \in [K]} \UCB_t(\actiongeneric)$
   \State Receive reward~$\obsrewardt{t}$ sampled from $\rewarddist(\actiont{t})$
   \For{all $\actiongeneric \neq \actiont{t}$}
   \State $n_{t+1}(\actiongeneric) = n_t(\actiongeneric)$
   \State $\mubar_{t+1}(\actiongeneric) = \mubar_t(\actiongeneric)$
   \State $\UCB_{t+1}(\actiongeneric) = \UCB_t(\actiongeneric)$
   \EndFor
   \State $n_{t+1}(\actiont{t}) = n_t(\actiont{t}) + 1$
   \State $\mubar_{t+1}(\actiont{t}) = (n_t(\actiont{t}) \mubar_t(\actiont{t}) + \obsrewardt{t}) / n_{t+1}(\actiont{t})$
   \State $\UCB_{t+1}(\actiont{t}) = $ \mbox{$\min \left\{\UCB_t(\actiont{t}), \mubar_{t+1}(\actiont{t}) + (\psi^*)^{-1}\left(\frac{2\log(TK)}{n_{t+1}(A_t)}\right) \right\}$}
   \EndFor
\end{algorithmic}
\end{algorithm}

Concretely, \MonUCB tracks the following, akin to standard UCB approaches: \textit{(i)}~$\mubar_t(\actiongeneric)$ for the estimated mean reward of action~$\actiongeneric \in [K]$, \textit{(ii)}~$n_t(\actiongeneric)$ for the number of times the action~$\actiongeneric$ has been selected by the algorithm prior to timestep~$t$, and \textit{(iii)}~$\UCB_t(\actiongeneric)$ for an upper confidence bound estimate for the reward of action~$\actiongeneric$. 
At every timestep~$t$, the algorithm picks the action~$\actiont{t}$ which maximizes $\UCB_t(\actiongeneric)$ (breaking ties deterministically).  After observing~$\obsreward_t$, we update our counts $n_{t+1}(a)$ and estimates $\mubar_{t+1}(a)$. 

Similar to \citet{bubeck2012regret}, we assume the algorithm has access to a convex function $\psi$ satisfying the following assumption:
\begin{assumption}
\label{ass:convex_psi}
For all $\lambda \geq 0$ and $a \in \A$ we have that:
\begin{align*}
    \log\left( \Exp{e^{\lambda(\rewarddist(a) - \mu(a))}}\right) & \leq \psi(\lambda) \\
     \log\left( \Exp{e^{\lambda(\mu(a) - \rewarddist(a))}}\right) & \leq \psi(\lambda).
\end{align*}
\end{assumption}
\noindent Note that this is trivially satisfied when $\psi(\lambda) = \lambda^2 / 8$ using Hoeffding's lemma and the assumption that the rewards are bounded in $[0,1]$.

We further denote by $\psi^*$ as the Legendre-Fenchel transform of $\psi$, defined via:
\[
    \psi^*(\epsilon) = \sup_{\lambda} \lambda \epsilon - \psi(\epsilon).
\]
With this, we are now ready to define how the confidence bounds are computed as follows:
\[
\UCB_{t+1}(\actiont{t}) = \UCB_t(\actiont{t}), \quad \mubar_{t+1}(\actiont{t}) + (\psi^*)^{-1}\left(\frac{2\log(T)}{n_{t+1}(A_t)}\right)
\]
Note that again when $\psi(\lambda) = \lambda^2 / 8$ then we again recover the standard UCB1 algorithm (as presented in \cref{sec:algorithms}).  Next we state the more general version of the theorems from \cref{sec:ar_case_study_mon_ucb} and \cref{sec:benefits} applied to \MonUCB with arbitrary convex function $\psi$.  We omit all proofs from the results here, and defer them to \cref{app:proofs}.

First we highlight that \MonUCB for an arbitrary convex function $\psi$ satisfying \cref{ass:convex_psi} satisfies the \iidata property under the relation where $\actiongeneric \rel \actiongeneric'$ if and only if $\actiongeneric = \actiongeneric'$.  We similarly establish that the base algorithm is regret-optimal, recovering the results established in \citet{bubeck2012regret}.

\begin{theorem}
\label{thm:mon_ucb_iid_psi}
The \MonUCB base algorithm with arbitrary convex function $\psi$ satisfying \cref{ass:convex_psi} and under the relation where $\actiongeneric \sim \actiongeneric'$ if and only if $\actiongeneric = \actiongeneric'$ 
satisfies the \iidata property.  Moreover, for $\Delta(\actiongeneric) = \max_{\actiongeneric^\prime} \mu(\actiongeneric^\prime) - \mu(\actiongeneric)$ then
\begin{align*}
    \Exp{\regret(T, \pi^{\Ignorant(\MonUCB)}, \history)} = O(\sum\nolimits_{\actiongeneric} \Delta(a) \log(T) / \psi^*(\Delta(\actiongeneric))). 
\end{align*}
\end{theorem}
\noindent Note that \cref{thm:mon_ucb_iid,thm:mon_ucb_regret} are corollaries of \cref{thm:mon_ucb_iid_psi} under the selection of $\psi(\lambda) = \lambda^2 / 8$.  The next result highlights that there exists a historical dataset such that \Full has unbounded computational complexity (as the number of datapoints goes to infinity), whereas \ArtReplay has bounded computational complexity.  Combined with the regret coupling, this highlights that \ArtReplay Pareto-dominates \Full in terms of regret and computational complexity.

\begin{theorem}
\label{thm:comp_gains_psi}
Consider the relation where $\actiongeneric \sim \actiongeneric'$ if and only if $\actiongeneric = \actiongeneric'$.  
For every $H \in \mathbb{N}$ there exists a historical dataset $\history$ with $|\history| = H$ where the runtime of $\pi^{\Full(\MonUCB)} = \Omega(H+T)$ whereas the runtime of $\pi^{\ArtReplay(\MonUCB)} = O(T+\min\{\sqrt{T},\log(T) / \min_a \Delta(a)^2\})$.
\end{theorem}
\noindent We again note that \cref{thm:comp_gains} is a corollary of \cref{thm:comp_gains_psi} under the selection of $\psi(\lambda) = \lambda^2 / 8$.  The final result establishes a regret improvement for \ArtReplay scaling as the size of the historical data used.

\begin{theorem}
\label{thm:reg_gains_psi}
Consider the relation where $\actiongeneric \sim \actiongeneric'$ if and only if $\actiongeneric = \actiongeneric'$. 
Let $H_\actiongeneric$ be the number of datapoints in $\history$ for each action $\actiongeneric \in [K]$.  Then the regret of Monotone UCB with historical dataset~$\history$ is:
\begin{align*}
        & \Exp{\regret(T, \pi^{\ArtReplay(\MonUCB)}, \history)} \\
         & \leq O \Big( \sum\limits_{\actiongeneric \in [K] : \Delta_\actiongeneric \neq 0} \max\big\{0, \frac{\Delta(a)\log(T)}{\psi^*(\Delta(a))} - H_a \Delta(a) \big\} \Big).
\end{align*}
\end{theorem}
\noindent We again note that \cref{thm:reg_gains} is a corollary of \cref{thm:reg_gains_psi} under the choice of $\psi(\lambda) = \lambda^2 / 8$.

\subsection{Discretization Algorithms for Metric Bandits}
\label{sec:app_algorithms_metric_bandits}

In this section we detail a fixed and adaptive discretization algorithm for metric bandits from \cref{sec:metric_bandits_iidata} which satisfies the \iidata property.  These serve as a straightforward modification to the algorithms from \citet{kleinberg2019bandits}, where similar to \MonUCB we adjust the confidence terms to be monotone decreasing.  We start by summarizing the main algorithm sketch, before highlighting the key details and differences between the two.  Full pseudocode is omitted, since in \cref{sec:app_algorithms_cmab_cra} we will extend the approach to the more complicated \CMABCRA set-up.  We here describe the algorithm without incorporating historical data, since its counterpart involving historical data can be used by treating this as the base algorithm~$\basealgo$ and appealing to \ArtReplay or \Full described in \cref{sec:online_wrapper}.

Our algorithms are Upper Confidence Bound (UCB) style as the selection rule picks an action $\actiongeneric \in \actionset$ over a discretization of $\actionset$.  Both algorithms are parameterized by the time horizon~$T$ and a value $\delta \in (0,1)$.  The algorithm maintains a collection of regions $\P_t$ of $\actionset$.  Each element $\reg \in \P_t$ is a region (i.e. subset of $\actionset$) with diameter~$r(\reg)$.  For the fixed discretization variant, $\P_t$ is fixed at the start of $t=1$.  In the adaptive discretization algorithm, this partitioning is refined over the course of learning in a {\em data-driven manner}.

For each time period $t$ the algorithm maintains three tables linear with respect to the number of regions in the partition.  This includes an \emph{upper confidence value} $\UCB_t(\reg)$ for the true $\mu(\actiongeneric)$ value for points $\actiongeneric$ in~$\reg$ (which is initialized to be one), determined based on an estimated mean $\mubar_t(\reg)$ and $n_t(\reg)$ for the number of times $\reg$ has been selected by the algorithm in timesteps up to~$t$.  The latter is incremented every time $\reg$ is played and is used to construct the bonus term.   At a high level, our algorithms perform two steps in each iteration~$t$: select an action via the {\em selection rule} and then {\em update parameters}.  In addition, the adaptive discretization algorithm will {\em re-partition} the space.  In order to define the steps, we first introduce some definitions and notation.

The \emph{confidence radius} (or bonus) of region~$\reg$ is defined:
\begin{align}
    b(n_t(\reg)) = 2 \sqrt{\frac{2 \log(T / \pfail) + 2L}{n_t(\reg)}}
\end{align}
corresponding to the uncertainty in estimates due to the stochastic nature of the rewards.  Then the steps of the algorithm are defined as follows:
\begin{enumerate}
    \item \textbf{Selection rule}: Pick the action $\reg_t = \argmax_{\reg \in \P_t} \UCB_t(\reg)$.  Play any action $\actiont{t} \in \reg_t$.
    \item \textbf{Update parameters}: Increment $n_t(\reg_t)$ by~$1$, update $\mubar_t(\reg_t)$ based on observed data, and update $\UCB_t(\reg_t)$ while ensuring monotonicity, i.e.:
    \[
    \UCB_{t+1}(\reg_t) = \min(\UCB_{t}(\reg_t), \mu_{t+1}(\reg_t) + b(n_{t+1}(\reg_t)).
    \]
\end{enumerate}
In the fixed discretization algorithm we set $\P_1$ to be a $\gamma$-covering of $\actionset$.  For the adaptive discretization, the algorithm additionally decides whether to update the partition.  This is done via:
\begin{enumerate}
    \item[3] \textbf{Re-partition the space}: Let $\reg_t$ denote the selected region and $r(\reg_t)$ denote its radius. We split when $n_t(\reg_t) \geq \left(1/r(\reg_t)\right)^2$.  We then cover $\reg$ with new regions $\reg^1, \ldots, \reg^n$ which form an $\frac{1}{2}r(\reg_t)$-Net of $\reg_t$.  We call $\reg_t$ the \textit{parent} of these new balls and each child ball inherits all values from its parent.  We then add the new balls $\reg^1, \ldots, \reg^n$ to $\P_t$ to form the partition for the next timestep $\P_{t+1}$.
\end{enumerate}

\paragraph{Information relations.}  Under the fixed discretization algorithm we take the natural information relation where $\actiongeneric \rel \actiongeneric'$ whenever $d(\actiongeneric, \actiongeneric') \leq \gamma$, where $\gamma$ is bandwidth parameter for the discretization $\P_t$.  For the adaptive discretization algorithm, we use a {\em history}-dependent information ordering.  Here, $\actiongeneric \rel_\H \actiongeneric'$ whenever $\actiongeneric$ and $\actiongeneric'$ fall into the same region in the adaptive partition of $\actionset$ determined based on $\H$.  Here we emphasize that the adaptive discretization generated is only a function of the sequence of selected actions (with no dependence on $t$ or the order for which they are read).

\subsection{Discretization Algorithms for \CMABCRA}
\label{sec:app_algorithms_cmab_cra}

\begin{figure}[!t]
        \centering
            \scalebox{1}{
            \tikzset{every picture/.style={line width=0.75pt}} 
  
\tikzset {_holmkmsfn/.code = {\pgfsetadditionalshadetransform{ \pgftransformshift{\pgfpoint{0 bp } { 0 bp }  }  \pgftransformrotate{-45 }  \pgftransformscale{2 }  }}}
\pgfdeclarehorizontalshading{_07soqxhih}{150bp}{rgb(0bp)=(0,0.35,0.71);
rgb(40.089285714285715bp)=(0,0.35,0.71);
rgb(62.5bp)=(0.86,0.2,0.13);
rgb(100bp)=(0.86,0.2,0.13)}

  
\tikzset {_tte39it5o/.code = {\pgfsetadditionalshadetransform{ \pgftransformshift{\pgfpoint{0 bp } { 0 bp }  }  \pgftransformrotate{-45 }  \pgftransformscale{2 }  }}}
\pgfdeclarehorizontalshading{_o64v3fts8}{150bp}{rgb(0bp)=(0,0.35,0.71);
rgb(40.089285714285715bp)=(0,0.35,0.71);
rgb(62.5bp)=(0.86,0.2,0.13);
rgb(100bp)=(0.86,0.2,0.13)}

  
\tikzset {_wtpe7f1s9/.code = {\pgfsetadditionalshadetransform{ \pgftransformshift{\pgfpoint{0 bp } { 0 bp }  }  \pgftransformrotate{-270 }  \pgftransformscale{2 }  }}}
\pgfdeclarehorizontalshading{_owa0zuidu}{150bp}{rgb(0bp)=(0,0.35,0.71);
rgb(37.5bp)=(0,0.35,0.71);
rgb(62.5bp)=(0.86,0.2,0.13);
rgb(100bp)=(0.86,0.2,0.13)}
\tikzset{every picture/.style={line width=0.75pt}} 

\begin{tikzpicture}[x=0.75pt,y=0.75pt,yscale=-1,xscale=1]

\path  [shading=_07soqxhih,_holmkmsfn] (232,5) -- (372,5) -- (372,145) -- (232,145) -- cycle ; 
 \draw   (232,5) -- (372,5) -- (372,145) -- (232,145) -- cycle ; 

\path  [shading=_o64v3fts8,_tte39it5o] (61,5) -- (201,5) -- (201,145) -- (61,145) -- cycle ; 
 \draw   (61,5) -- (201,5) -- (201,145) -- (61,145) -- cycle ; 

\draw  [draw opacity=0] (61,5) -- (201,5) -- (201,145) -- (61,145) -- cycle ; \draw   (81,5) -- (81,145)(101,5) -- (101,145)(121,5) -- (121,145)(141,5) -- (141,145)(161,5) -- (161,145)(181,5) -- (181,145) ; \draw   (61,25) -- (201,25)(61,45) -- (201,45)(61,65) -- (201,65)(61,85) -- (201,85)(61,105) -- (201,105)(61,125) -- (201,125) ; \draw   (61,5) -- (201,5) -- (201,145) -- (61,145) -- cycle ;
\path  [shading=_owa0zuidu,_wtpe7f1s9] (4,5) -- (24,5) -- (24,145) -- (4,145) -- cycle ; 
 \draw   (4,5) -- (24,5) -- (24,145) -- (4,145) -- cycle ; 

\draw   (232,5) -- (372,5) -- (372,145) -- (232,145) -- cycle ;
\draw   (232,5) -- (302,5) -- (302,75) -- (232,75) -- cycle ;
\draw   (232,75) -- (302,75) -- (302,145) -- (232,145) -- cycle ;
\draw   (302,75) -- (372,75) -- (372,145) -- (302,145) -- cycle ;
\draw   (302,75) -- (337,75) -- (337,110) -- (302,110) -- cycle ;
\draw   (302,110) -- (337,110) -- (337,145) -- (302,145) -- cycle ;
\draw   (337,75) -- (372,75) -- (372,110) -- (337,110) -- cycle ;
\draw   (337,110) -- (355,110) -- (355,126) -- (337,126) -- cycle ;
\draw   (355,110) -- (372,110) -- (372,126) -- (355,126) -- cycle ;
\draw   (355,126) -- (372,126) -- (372,145) -- (355,145) -- cycle ;
\draw   (337,126) -- (355,126) -- (355,145) -- (337,145) -- cycle ;
\draw   (337,110) -- (346,110) -- (346,118) -- (337,118) -- cycle ;
\draw   (346,110) -- (355,110) -- (355,118) -- (346,118) -- cycle ;
\draw   (337,118) -- (346,118) -- (346,126) -- (337,126) -- cycle ;

\draw (27,135) node [anchor=north west][inner sep=0.75pt]   [align=left] {0};
\draw (27,1) node [anchor=north west][inner sep=0.75pt]   [align=left] {1};

\end{tikzpicture}}
        \caption{Comparison of a fixed (middle) and adaptive (right) discretization on a two-dimensional resource set~$\pointset$ for a fixed allocation level~$\effortgeneric$.  The underlying color gradient corresponds to the mean reward $\mu(\point, \effortgeneric)$ with red corresponding to higher value and blue to lower value (see figure on left for legend).  The fixed discretization algorithm is forced to {\em explore uniformly} across the entire resource space.  In contrast, the adaptive discretization algorithm is able to maintain a {\em data efficient} representation, even without knowing the underlying mean reward function a priori.}        
\label{fig:discretization_visual}
\end{figure}
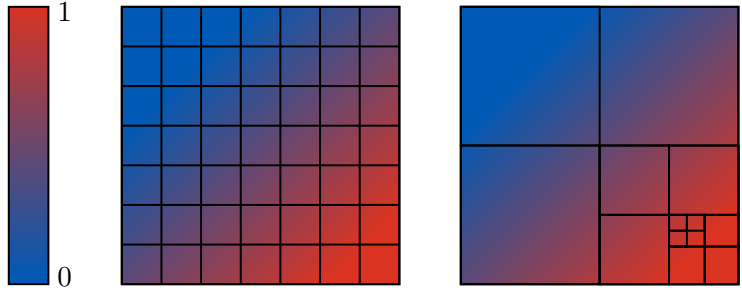

In this section we detail the fixed and adaptive discretization algorithms for \CMABCRA which satisfy the \iidata property, and serve as a combinatorial extension to the algorithms developed in \cref{sec:app_algorithms_metric_bandits}.  We start off by summarizing the main algorithm sketch, before highlighting the key details and differences between the two.  For pseudocode of the adaptive discretization algorithm see \cref{alg:adaptive_discretization}.  We describe the algorithm without incorporating historical data (where its counterpart involving historical data can be used by treating this as the base algorithm~$\basealgo$ and appealing to \ArtReplay or \Full described in \cref{sec:online_wrapper}).

Our algorithms are Upper Confidence Bound (UCB) style as the selection rule maximizes \cref{eq:cmab_opt} approximately over a discretization of $\pointset$.  Both algorithms are parameterized by the time horizon~$T$ and a value $\delta \in (0,1)$.
For each allocation $\effortgeneric \in \effortset$ the algorithm maintains a collection of regions $\P_t^\effortgeneric$ of~$\pointset$.  Each element $\reg \in \P_t^\effortgeneric$ is a region with diameter~$r(\reg)$.  For the fixed discretization variant, $\P_t^\effortgeneric$~is fixed at the start of learning.  In the adaptive discretization algorithm, this partitioning is refined over the course of learning in an {\em data-driven manner}.

For each time period~$t$, the algorithm maintains three tables linear with respect to the number of regions in the partitions~$\P_t^\effortgeneric$ and size of $\effortset$.  For every region $\reg \in \P_t^\effortgeneric$ we maintain an \emph{upper confidence value} $\UCB_t(\reg, \effortgeneric)$ for the true $\mu(\reg, \effortgeneric)$ value for points in~$\reg$ (which is initialized to be one), determined based on an estimated mean $\mubar_t(\reg, \effortgeneric)$ and $n_t(\reg, \effortgeneric)$ for the number of times $\reg$ has been selected by the algorithm in timesteps up to~$t$.  The latter is incremented every time $\reg$ is played, and is used to construct the bonus term.   At a high level, our algorithms perform two steps in each iteration~$t$: select an action via the {\em selection rule} and then {\em update parameters}.  In addition, the adaptive discretization algorithm will {\em re-partition} the space.  In order to define the steps, we first introduce some definitions and notation.

Let $t_\reg = n_t(\reg, \effortgeneric)$ be the number of times the algorithm has selected region~$\reg \in \P$ at allocation~$\effortgeneric$ by time~$t$.  The \emph{confidence radius} (or bonus) of region~$\reg$ is defined:
\begin{align}
    b(t_\reg) = 2 \sqrt{\frac{2 \log(T / \pfail)}{t_\reg}}
\end{align}
corresponding to the uncertainty in estimates due to stochastic nature of the rewards.  Lastly, the UCB value for a region~$\reg$ is computed as $\UCB_t(\reg, \effortgeneric) = \min\{\UCB_{t-1}(\reg, \effortgeneric), \mubar_t(\reg, \effortgeneric) + b(n_t(\reg, \effortgeneric)\}$.   This enforces monotonicity in the UCB
estimates, similar to \MonUCB, and is required for the \iidata property to hold.

At each timestep~$t$, the algorithm \emph{selects regions} according to the following optimization procedure:
\begin{align}
    \max_{z(\reg, \effortgeneric) \in \{0,1\}} & \sum_{\effortgeneric \in \effortset} \sum_{\reg \in \P_t^\effortgeneric} \UCB_t(\reg, \effortgeneric) \cdot z(\reg, \effortgeneric) \label{eq:selection} \\
    \text{s.t. } & \sum_{\effortgeneric \in \effortset} \sum_{\reg \in \P_t^\effortgeneric} \effortgeneric \cdot  z(\reg, \effortgeneric) \leq B \nonumber \\
    & \sum_{\effortgeneric \in \effortset} \sum_{\reg \in \P_t^\effortgeneric} z(\reg, \effortgeneric) \leq N \nonumber \\ 
    & z(\reg, \effortgeneric) + \sum_{\effortgeneric' \neq \effortgeneric} \sum_{\tilde{\reg} \in \P_t^{\effortgeneric'}, \reg \subset \tilde{\reg}} z(\tilde{\reg},\effortgeneric') \leq 1 \quad \forall \effortgeneric, \reg \in \P_t^\effortgeneric \nonumber
\end{align}
The objective encodes the goal of maximizing the upper confidence bound terms.  The first constraint encodes the budget limitation, and the second that at most $N$ regions can be selected.  The final constraint is a technical one, essentially requiring that for each region $\reg \in \pointset$, the same region is not selected at different allocation amounts.  In the supplementary code base we provide an efficient implementation which avoids this step by ``merging'' the trees appropriately so that each region contains a vector of estimates of $\mubar_t(\reg, \effortgeneric)$ for each $\effortgeneric \in \effortset$.  Based on the optimal solution, the final action~$\action$ is taken by picking $(\point, \effortgeneric)$ for each~$\reg$ such that $\point \in \reg$ and $z(\reg, \effortgeneric) = 1$.  We lastly note that this optimization problem is a well-known ``knapsack'' problem with efficient polynomial-time approximation guarantees.  It also has a simple ``greedy'' solution scheme, which iteratively selects the regions with largest $\UCB_t(\reg, \beta) / \beta$ ratio (i.e. the so-called ``bang-per-buck'').  See \citet{williamson2011design} for more discussion.

After subsequently observing the rewards for the selected regions, we increment $t_\reg = n_t(\reg, \effortgeneric)$ by one for each selected region, update $\mubar_t(\reg, \effortgeneric)$ accordingly with the additional datapoint, and compute $\UCB_t(\reg, \effortgeneric)$.
Then the two rules are defined as follows:
\begin{enumerate}
    \item \textbf{Selection rule}: Greedily select at most $N$ regions subject to the budgetary constraints which maximizes $\UCB_t(\reg, \effortgeneric)$ following \cref{eq:selection}. 
    \item \textbf{Update parameters}: For each of the selected regions~$\reg$, increment $n_t(\reg, \effortgeneric)$ by~$1$, update $\mubar_t(\reg, \effortgeneric)$ based on observed data, and update $\UCB_t(\reg, \effortgeneric)$ while ensuring monotonicity.
\end{enumerate}

\subsubsection{Fixed Discretization}
\label{sec:fixed_disc}

This algorithm is additionally parameterized by a discretization level~$\gamma$.  The algorithm starts by maintaining a $\gamma$-covering of~$\pointset$, which we denote as $\P_t^\effortgeneric = \P$ for all $t \in [T]$ and $\effortgeneric \in \effortset$.  For the information relation, we define:
\[
((\point_1, \effortgeneric_1), \ldots, (\point_N, \effortgeneric_N)) \rel ((\point'_1, \effortgeneric'_1), \ldots, (\point_N', \effortgeneric_N'))
\]
if there exists an index $j$ with $\effortgeneric_j = \effortgeneric_j'$ and $d_\pointset(\point_j, \point'_j) \leq \gamma$.  Note that this combines the continuous and semi-bandit relations described in \cref{sec:information_relation}.

\paragraph{Information relation.} 
For the information relation, we define:
\[
((\point_1, \effortgeneric_1), \ldots, (\point_N, \effortgeneric_N)) \rel ((\point'_1, \effortgeneric'_1), \ldots, (\point_N', \effortgeneric_N'))
\]
if there exists an index $j$ with $\effortgeneric_j = \effortgeneric_j'$ and $d_\pointset(\point_j,\point'_j) \leq \gamma$.  This combines the continuous and semi-bandit relation models outlined in \cref{sec:information_relation}.

\subsubsection{Adaptive Discretization}
\label{sec:ada_discretization}

\begin{algorithm}[tb]
   \caption{Adaptive Discretization for \CMABCRA (\adaalgo)} \label{alg:adaptive_discretization}
\begin{algorithmic}[1]
\State \textbf{Input:} Resource set $\pointset$, effort set $\effortset$, timesteps $T$, and probability of failure $\pfail$
   \State Initiate $|\effortset|$ partitions $\P_1^\effortgeneric$ for each $\effortgeneric \in \effortset$, each containing a single region with radius $\dmax$ and $\mubar_1^\effortgeneric$ estimate equal to $1$
   
   \For{each timestep $\{t \gets 1, \ldots, T\}$}
   \State Select the regions by the selection rule \cref{eq:selection}
   \State For each selected region~$\reg$ (regions where $z(\reg, \effortgeneric) = 1$), add $(\actiongeneric, \effortgeneric)$ to $\actiont{t}$ for any $a \in \reg$
   \State Play action $\actiont{t}$ in the environment
   \State Update parameters: $t = n_{t+1}(\reg_{\text{sel}}, \effortgeneric) \gets n_t(\reg_{\text{sel}}, \effortgeneric) + 1$ for each selected region $\reg_{\text{sel}}$ with $z(\reg_{\text{sel}}, \effortgeneric) = 1$, and update $\mubar_t(\reg_{\text{sel}}, \effortgeneric)$ accordingly with observed data

   \If{$n_{t+1}(\reg, \effortgeneric) \geq \left( \frac{\dmax}{r(\reg)}\right)^2$ and $r(\reg) \geq 2 \epsilon$}
   
   \State \textsc{Split Region}$(\reg, \effortgeneric, t)$
   \EndIf
   \EndFor

\end{algorithmic}
\end{algorithm}
\begin{algorithm}[!t]
   \caption{\textsc{Split Region} (Sub-Routine for \adaalgo)}
\begin{algorithmic}[1]
\State \textbf{Input:} Region $\reg$, allocation amount $\effortgeneric$, timestep $t$
    \State Set $\reg_1, \ldots, \reg_n$ to be an $\frac{1}{2}r(\reg)$-packing of $\reg$, and add each region to the partition $\P_{t+1}^{\effortgeneric}$
    \State Initialize parameters $\mubar_t(\reg_i, \effortgeneric)$ and $n_t(\reg_i, \effortgeneric)$ for each new region $\reg_i$ to inherent values from the parent region $\reg$
\end{algorithmic}
\end{algorithm}

For each effort level $\effortgeneric \in \effortset$ the algorithm maintains a collection of regions $\P_t^\effortgeneric$ of $\pointset$ which is refined over the course of learning for each timestep~$t$. Initially, when $t=1$, there is only one region in each partition $\P_1^\effortgeneric$ which has radius~$1$ containing~$\pointset$.

The key differences from the fixed discretization are two-fold.  First, the \emph{confidence radius} or bonus of region~$\reg$ is defined via:
\[
    b(t) = 2 \sqrt{\frac{2 \log(T / \pfail)}{t}} + \frac{2L\dmax}{\sqrt{t}} \ .
\]
The first term corresponds to uncertainty in estimates due to stochastic nature of the rewards, and the second is the discretization error by expanding estimates to all points in the region.  

Second, after selecting an action and updating the estimates for the selected regions, the algorithm additionally decides whether to update the partition.  This is done via:
\begin{enumerate}
    \item[3] \textbf{Re-partition the space}: Let $\reg$ denote any selected ball and $r(\reg)$ denote its radius. We split when $r(\reg) \geq 2 \epsilon$ and $n_t(\reg, \effortgeneric) \geq \left(\dmax/r(\reg)\right)^2$.  We then cover $\reg$ with new regions $\reg_1, \ldots, \reg_n$ which form an $\frac{1}{2}r(\reg)$-Net of $\reg$.  We call $\reg$ the \textit{parent} of these new balls and each child ball inherits all values from its parent.  We then add the new balls $\reg_1, \ldots, \reg_n$ to $\P_t^\effortgeneric$ to form the partition for the next timestep $\P_{t+1}^\effortgeneric$.
\end{enumerate}

\paragraph{Information relation.} For the information relation, we define:
\[
((\point_1, \effortgeneric_1), \ldots, (\point_N, \effortgeneric_N)) \rel_\H ((\point'_1, \effortgeneric'_1), \ldots, (\point_N', \effortgeneric_N'))
\]
if there exists an index $j$ with $\effortgeneric_j = \effortgeneric_j'$ and $\point_j$ and $\point'_j$ both belong to the same region in the adaptive discretization dictated by $\H$.  Note that this combines the continuous and semi-bandit relations described in \cref{sec:information_relation}.

\paragraph{Benefits of Adaptive Discretization.}  The fixed discretization algorithm cannot adapt to the underlying structure in the problem since the discretization is fixed prior to learning.  This causes increased computational, storage, and sample complexity since each individual region must be explored in order to obtain optimal regret guarantees.  Instead, our adaptive discretization algorithm adapts the discretization in a data-driven manner to reduce unnecessary exploration.  The algorithms keep a fine discretization across important parts of the space, and a coarser discretization across unimportant regions.  See \cref{fig:discretization_visual} for a sample adaptive discretization observed, and \cite{kleinberg2010regret,sinclair2021adaptive} for more discussion on the benefits of adaptive discretization over fixed.

\paragraph{Implementation of Adaptive Discretization.} In the attached code base we provide an efficient implementation of \adaalgo, the adaptive discretization algorithm for the \CMABCRA domain. Pseudocode for \adaalgo is in \cref{alg:adaptive_discretization}. 

We represent the partition~$\P_t^\effortgeneric$ as a tree with leaf nodes corresponding to \emph{active balls} (i.e. ones which have not yet been split).  Each node in the tree keeps track of $n_t(\reg, \effortgeneric)$, $\mubar_t(\reg, \effortgeneric)$, and $\UCB_t(\reg, \effortgeneric)$.  While the partitioning works for any compact metric space, we implement it in $\pointset = [0,1]^2$ under the infinity norm metric.  With this metric, the high level implementation of the three steps is as follows:
\begin{itemize}
\item \textbf{Selection rule}: In this step we start by ``merging'' all of the trees for each allocation level~$\effortgeneric$ (i.e. $\P_t^\effortgeneric$ for $\effortgeneric \in \effortset$) into a single tree, with estimates $\UCB_t(\reg, \cdot)$ represented as a vector for each $\effortgeneric \in \effortset$ rather than a scalar.  On this merged partition of $\pointset$ we solve \cref{eq:selection} \emph{over the leaves} to avoid modelling the constraint which ensures that each region is selected at a single allocation amount. 
\item \textbf{Update estimates}: Updating the estimates simply updates the stored $n_t(\reg, \effortgeneric), \mubar_t(\reg, \effortgeneric)$, and $\UCB_t(\reg, \effortgeneric)$ for each of the selected regions based on observed data.
\item \textbf{Re-partition the space}:  In order to split a region~$\reg$, we create four new subregions corresponding to splitting the two dimensions in half.  For example, the region $[0,1]^2$ will be decomposed to
\begin{align*}[0, \tfrac{1}{2}] \times [0,\tfrac{1}{2}] \quad [0, \tfrac{1}{2}] \times [\tfrac{1}{2}, 1] \quad  [\tfrac{1}{2}, \tfrac{1}{2}] \times [0, \tfrac{1}{2}] \quad [\tfrac{1}{2}, \tfrac{1}{2}] \times [\tfrac{1}{2}, \tfrac{1}{2}] \ .
\end{align*}
We add on the children to the tree with links to its parent node, and initialize all estimates to that of its parent.
\end{itemize}

Lastly, we comment that in order to implement the $\Full$ and $\ArtReplay$ versions of the adaptive discretization algorithm we pre-processed the entire historical data into a tree structure.  This allowed us to check whether the given action has historical data available by simply checking the corresponding node in the pre-processed historical data tree.
\section{Experiment Details}
\label{sec:experiments_appendix}

We provide additional details about the experimental domains, algorithm implementation, and additional results. The additional results include experiments to evaluate the performance of \ArtReplay on algorithms for combinatorial finite-armed bandit \citep{chen13a} as well as the standard $K$-armed bandit.  For the later we include simulations with Thompson sampling \citep{russo2018tutorial}, and information-directed sampling  \citep{russo2018learning}, which do not satisfy the \iidata property, but still experience empirical improvements when using \ArtReplay against \Full.

\subsection{Domain Details}

\subsubsection{Finite $K$-armed bandit}

For the $K$-armed bandit, we generate mean rewards for each arm $\actiongeneric \in [K]$ uniformly at random. 

\subsubsection{\CMABCRA}

\paragraph{Piecewise-linear}
This synthetic domain has a piecewise-linear reward function to ensure that the greedy approximation solution is optimal, as discussed in \cref{sec:cmab_iid_example}. As a stylized setting, this reward function uses a very simple construction:
\begin{align}
    \mu(\point, \effortgeneric) = \effortgeneric \cdot \left( \frac{p_{1}}{2} + \frac{p_{2}}{2} \right) \ .
\end{align}
We visualize this reward in \cref{fig:reward-pwl}. The optimal reward is at $(1, 1)$.

\paragraph{Quadratic}
The quadratic environment is a synthetic domain with well-behaved polynomial reward function of the form:
    \begin{align}
        \mu(\point, \effortgeneric) = \effortgeneric \left( 1 - (p_{1} - 0.5)^2 + (p_{2} - 0.5)^2 \right)
    \end{align}
which we visualize in \cref{fig:reward-quadratic}. The optimal reward at is achieved at $(0.5, 0.5)$.

\begin{figure}
\centering
\begin{minipage}{.45\textwidth}
  \centering
    \includegraphics[height=.7\linewidth]{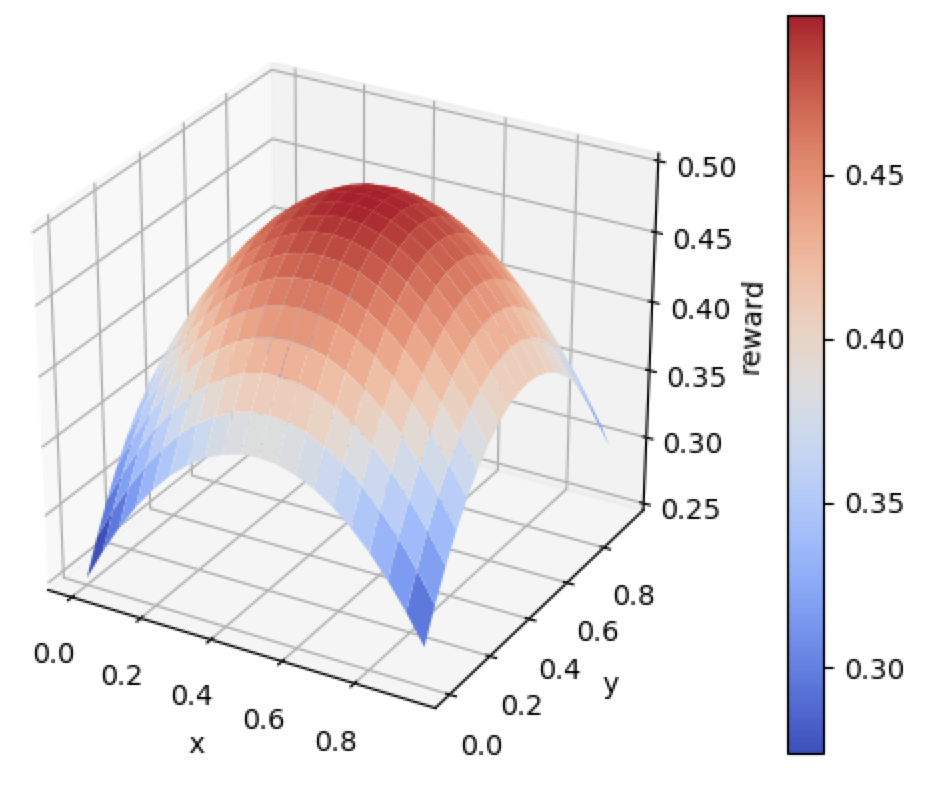}
    \caption{Reward function for the quadratic environment.}
    \label{fig:reward-quadratic}
\end{minipage}%
\qquad
\begin{minipage}{.45\textwidth}
  \centering
  \includegraphics[height=.7\linewidth]{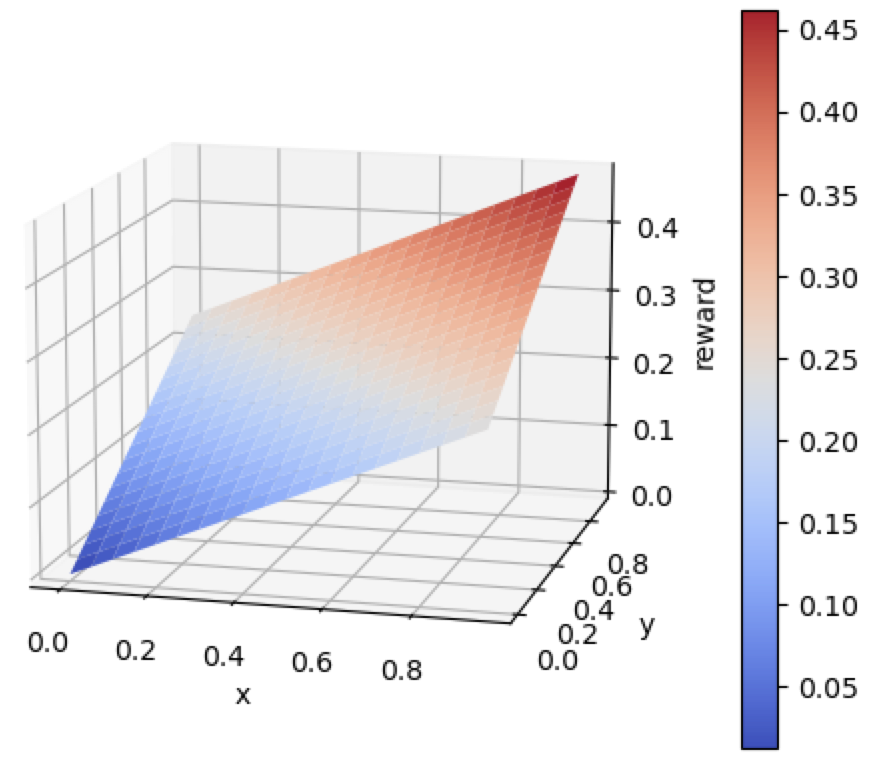}
  \caption{Reward function for the piecewise-linear environment.}
  \label{fig:reward-pwl}
\end{minipage}
\end{figure}



\paragraph{Green Security Domain}
For the green security domain, we wish to predict the probability that poachers place snares throughout a large protected area using ranger patrol observations. 
We use real-world historical patrol data from Murchison Falls National Park. The historical data are continuous-valued GPS coordinates (longitude, latitude) marking trajectories with locations automatically recorded every 30 minutes. Between the years 2015 and 2017, we have 180{,}677 unique GPS waypoints. 

We normalize the space of the park boundary to the range $[0, 1]$ for both dimensions. For each point $\point \in [0, 1]^2$ in this historical data, we compute ``effort'' or allocation by calculating straight-line \emph{trajectories} between the individual waypoints to compute the distance patrolled, allocating to each waypoint one half the sum of the line segments to which it is connected. We then associate with each point a binary label $\{0, 1\}$ representing the observation. Direct observations of poaching are rather rare, so to overcome strong class imbalance for the purposes of these experiments, we augment the set of instances we consider a positive label to include any human-related or wildlife observation. Everything else (e.g., position waypoint) gets a negative label. 

To generate a continuous-action reward function, we build a neural network to learn the reward function across the park and use that as a simulator for reward. The neural network takes three inputs, $\actiongeneric = (\point_1, \point_2, \effortgeneric)$, and outputs a value $[0, 1]$ to indicate probability of an observation. This probability represents $\mu(\actiongeneric)$ for the given point and allocation.







\subsection{Adaptive Discretization}

We offer a visual demonstration of the adaptive discretization process in  \cref{fig:visualize_discretization}, using 10{,}000 real samples of historical patrol observations from Murchison Falls National Park. This discretization is used to iteratively build the dataset tree used to initialize the \Full algorithm with adaptive discretization. 

\begin{figure}
    \centering
    \hspace{3em}
    $\effortgeneric = 0$ \hspace{7.4em} $\effortgeneric = 0.5$  \hspace{7.4em} $\effortgeneric = 1$
    \includegraphics[width=.8\linewidth]{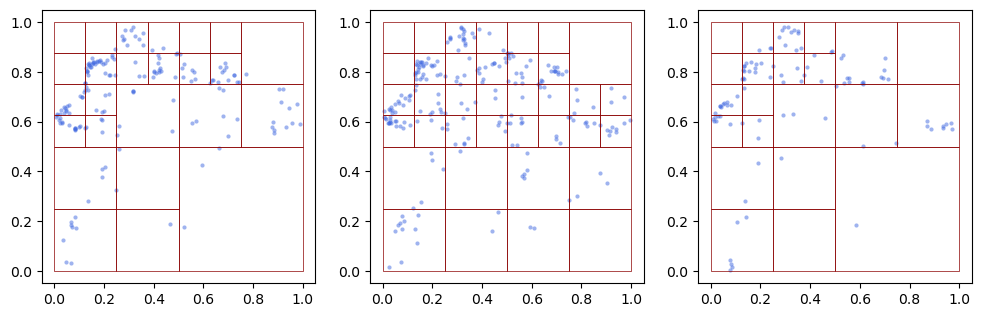}
    
    \vspace{3ex}
    \includegraphics[width=.8\linewidth]{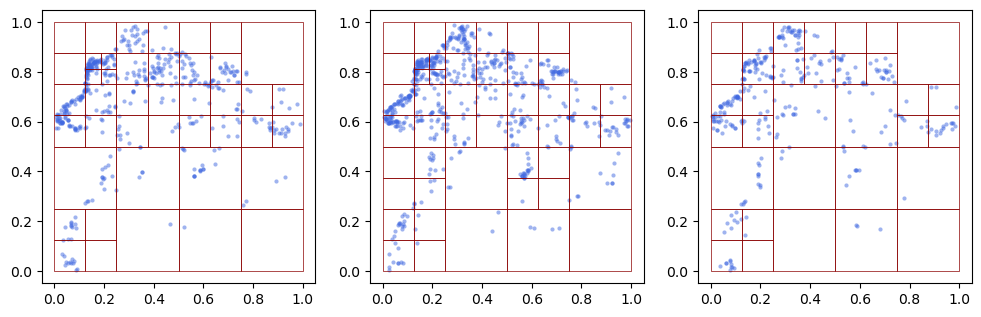}
    
    \vspace{3ex}
    \includegraphics[width=.8\linewidth]{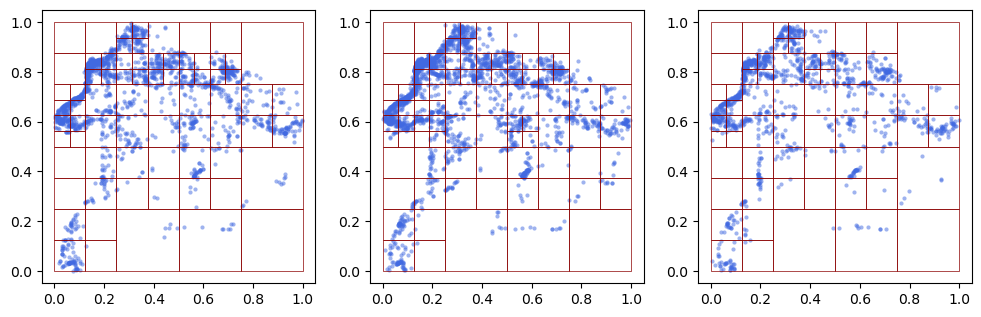}
    
    \vspace{3ex}
    \includegraphics[width=.8\linewidth]{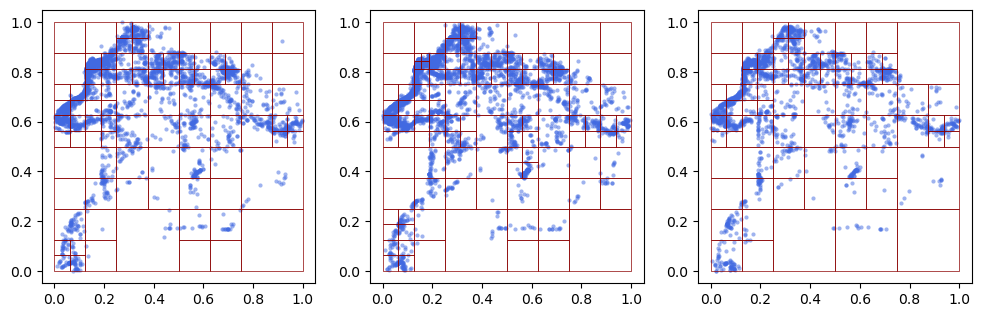}
    \caption{Adaptive discretization in the $\pointset = [0, 1]^2$ space using $10{,}000$ samples of real historical patrol observations from Murchison Falls National Park. Each row depicts the distribution of historical samples and the space partition after 500, 1{,}500, 6{,}000, and 10{,}000 samples are added to the dataset tree. Each column visualizes the dataset tree for each of three levels of effort $\effortgeneric \in \B = \{0, 0.5, 1\}$. As shown, these real-world historical samples exhibit strong \emph{imbalanced data coverage}, leading to significantly fine discretizations in areas with many samples and very coarse discretization in other regions.}
    \label{fig:visualize_discretization}
\end{figure}

\subsection{Additional Experimental Results}

In \cref{fig:k-armed-ts-ids} we evaluate the performance of \ArtReplay compared to \Full and \Ignorant using two multi-armed bandit algorithms that do not have the \iidata property, Thompson Sampling (TS) and Information-Directed Sampling (IDS). Although our theoretical regret guarantees do not apply to Thompson sampling or IDS as base algorithms, these results demonstrate that empirically \ArtReplay still performs remarkably well, matching the performance of \Full with Thompson sampling and avoiding the exploding regret that \Full suffers with IDS. 

We note that with imbalanced data, \Full is converging on a suboptimal action with more historical data.  This is because the IDS algorithm, when warm-started with historical data, maintains a near-`zero entropy' posterior distribution over a sub-optimal action which is over-represented in the historical dataset.  Since the selection procedure takes the action that maximizes expected return divided by posterior variance, the algorithm continuously picks this sub-optimal action at each timestep.

\begin{figure}
    \centering
    \begin{tikzpicture}
\pgfplotsset{
  width=0.4\linewidth,
  height=0.25\linewidth,
  xtick pos=left,
  ytick pos=left,
  tick label style={font=\scriptsize},
  ytick style={draw=none},  
  ymajorgrids=true,
}




\begin{axis}[ 
  at={(.06\linewidth, .19\linewidth)},
  title style={yshift=-.7ex},
  title={\small{\textbf{Thompson sampling}}},
  ylabel={\small{Regret}},
  ylabel near ticks,
  xlabel near ticks,
  xlabel style={shift=-.7ex},
  xtick={0, 250, 500, 750, 1000},
  legend style={
    align=left,
    at={(0.6\linewidth, -.27\linewidth)},
legend columns=3,
    font=\small, 
    draw=none,
    fill=none,
    /tikz/every even column/.append style={column sep=10pt}},
  ymin=0,
  xmin=0, xmax=1000,
]

\addplot [teal!90!white, line width=1pt, mark=none] table [x=t, y=ts_artificial_replay, col sep=comma] {data/regret_ts_ids_k_armed_k10_H100.csv}; \addlegendentry{~\ArtReplay}

\addplot [purple!90!white, line width=1.5pt, dotted, mark=none] table [x=t, y=ts_historical, col sep=comma] {data/regret_ts_ids_k_armed_k10_H100.csv}; \addlegendentry{~\Full}

\addplot [cyan!50!white, line width=1pt, mark=none] table [x=t, y=ts_ignorant, col sep=comma] {data/regret_ts_ids_k_armed_k10_H100.csv}; \addlegendentry{~\Ignorant}

\end{axis}

\begin{axis}[ 
  at={(.45\linewidth, .19\linewidth)},
  title style={yshift=-.7ex},
  title={\small{\textbf{Information-directed sampling}}},
  xlabel near ticks,
  xlabel style={shift=-.7ex},
  xtick={0, 250, 500, 750, 1000},
  ymin=0,
  ymax=22,
  xmin=0, xmax=1000,
]

\addplot [teal!90!white, line width=1pt, mark=none] table [x=t, y=ids_artificial_replay, col sep=comma] {data/regret_ts_ids_k_armed_k10_H100.csv};

\addplot [purple!90!white, line width=1.5pt, dotted, mark=none] table [x=t, y=ids_historical, col sep=comma] {data/regret_ts_ids_k_armed_k10_H100.csv}; 

\addplot [cyan!50!white, line width=1pt, mark=none] table [x=t, y=ids_ignorant, col sep=comma] {data/regret_ts_ids_k_armed_k10_H100.csv}; 

\end{axis}


\begin{axis}[ 
  at={(.06\linewidth, 0\linewidth)},
  xlabel={\footnotesize{Timestep~$t$}},
  ylabel={\small{Regret}},
  xlabel near ticks,
  ylabel near ticks,
  xtick={0, 250, 500, 750, 1000},
  ymin=0,
  xmin=0, xmax=1000,
]

\addplot [teal!90!white, line width=1pt, mark=none] table [x=t, y=ts_artificial_replay, col sep=comma] {data/regret_ts_ids_k_armed_k10_H1000.csv}; 

\addplot [purple!90!white, line width=1.5pt, dotted, mark=none] table [x=t, y=ts_historical, col sep=comma] {data/regret_ts_ids_k_armed_k10_H1000.csv}; 

\addplot [cyan!50!white, line width=1pt, mark=none] table [x=t, y=ts_ignorant, col sep=comma] {data/regret_ts_ids_k_armed_k10_H1000.csv}; 

\end{axis}

\begin{axis}[ 
  at={(.45\linewidth, 0\linewidth)},
  xlabel={\footnotesize{Timestep~$t$}},
  xlabel near ticks,
  xtick={0, 250, 500, 750, 1000},
  ymin=0,
  ymax=22,
  xmin=0, xmax=1000,
]

\addplot [teal!90!white, line width=1pt, mark=none] table [x=t, y=ids_artificial_replay, col sep=comma] {data/regret_ts_ids_k_armed_k10_H1000.csv}; 

\addplot [purple!90!white, line width=1.5pt, dotted, mark=none] table [x=t, y=ids_historical, col sep=comma] {data/regret_ts_ids_k_armed_k10_H1000.csv}; 

\addplot [cyan!50!white, line width=1pt, mark=none] table [x=t, y=ids_ignorant, col sep=comma] {data/regret_ts_ids_k_armed_k10_H1000.csv}; 

\end{axis}


\node[rounded corners, fill=black!10, align=left] at (-.03\linewidth, .19\linewidth) {\small{$H=100$}};
\node[rounded corners, fill=black!10, align=left] at (-.03\linewidth, -.02\linewidth) {\small{$H=1{,}000$}};

\end{tikzpicture}
    \caption{Cumulative regret ($y$-axis; lower is better) across time~$t \in [T]$. \ArtReplay performs competitively across all domain settings, with both Thompson sampling~\citep{russo2018tutorial} (left) and information-directed sampling~\citep{russo2018learning} (right).  In \Full applied to information-directed sampling with $H = 1{,}000$ the algorithm converges on a sub-optimal arm since its posterior variance is low (due to more data), resulting in poor regret performance due to {\em spurious data}.}
    \label{fig:k-armed-ts-ids}
\end{figure}

In \cref{fig:k-armed-b3-spurious-data} we consider the combinatorial bandit setting with a set of $K=10$ discrete arms and a budget $B=3$ over $T=1{,}000$ timesteps. This setting is similar to \cref{fig:k-armed-b1-spurious-data} but instead here we consider a combinatorial setting (where multiple arms can be pulled at each timestep) rather than a standard stochastic $K$-armed bandit. Across different values of $H$, \ArtReplay matches the performance of \Full (\cref{fig:k-armed-b3-spurious-data}(a)) despite using an increasing smaller fraction of the historical dataset $\history$ (\cref{fig:k-armed-b3-spurious-data}(b)).
The regret plot in \cref{fig:k-armed-b3-spurious-data}(c) shows that the regret of our method is coupled with that of \Full across time. \cref{fig:k-armed-b3-spurious-data}(d) tracks the number of samples from history~$\history$, with $H=1{,}000$, used over time: \ArtReplay uses 471 historical samples before taking its first online action. The number of historical samples used increases at a decreasing rate and ends with using 740 samples.

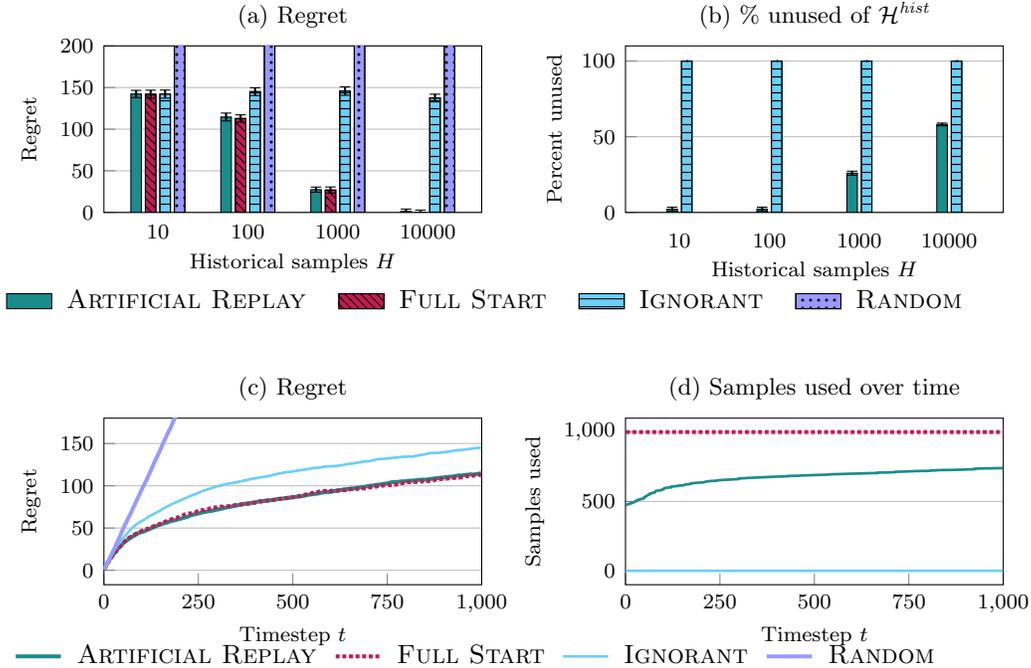
\begin{figure}[!t]
    \centering
    \begin{tikzpicture}
\pgfplotsset{
  width=0.4\linewidth,
  height=0.23\linewidth,
  xtick pos=left,
  ytick pos=left,
  tick label style={font=\scriptsize},
  ymajorgrids=true,
  xlabel shift=-2pt,
  title style={yshift=-.7ex},
}

\begin{axis}[
  at={(.42\linewidth, .3\linewidth)}, 
  table/col sep=comma,
  ybar=1.5pt,
  bar width=.14cm,
  symbolic x coords={10, 100, 1000, 10000},
  xticklabels={10, 100, {1,000}, {10,000}},
  xtick style={draw=none},
  xtick={data},
  xlabel={\footnotesize{Historical samples~$H$}},
  title={\small{\textbf{(b)~\% unused of $\history$}}},
  ylabel style={align=center},
  ylabel={\small{Percent unused}},
  ymin=0, 
  enlarge x limits={0.2},
  error bars/y dir=both, 
  error bars/y explicit,  
  error bars/error bar style={color=black, thick},
]

\addplot+[teal!90!white, draw=black, area legend,
] table [x=H, y=artificial_replay, y error=artificial_replay_sem] {data/k_armed_b3_unused_h_percentage.csv}; 
\addplot+[cyan!50!white, draw=black, area legend,
postaction={pattern=horizontal lines,}
] table [x=H, y=ignorant, y error=ignorant_sem] {data/k_armed_b3_unused_h_percentage.csv}; 
\end{axis}

\begin{axis}[
  at={(0\linewidth, .30\linewidth)}, 
  table/col sep=comma,
  ybar=1.5pt,
  bar width=.14cm,
  symbolic x coords={10, 100, 1000, 10000},
  xticklabels={10, 100, {1,000}, {10,000}},
  xtick style={draw=none},
  x tick label style={yshift=.7ex}, 
  xtick={data},
  xlabel={\footnotesize{Historical samples~$H$}},
  title={\small{\textbf{(a)~Regret}}},
  ylabel style={align=center},
  ylabel={\small{Regret}},
  legend style={
    at={(0.68\linewidth, -.06\linewidth)},
    legend columns=4,
    font=\small,draw=none,fill=none,
    /tikz/every even column/.append style={column sep=10pt}},
  ymin=0, ymax=200,
  enlarge x limits={0.2},
  error bars/y dir=both, 
  error bars/y explicit,  
  error bars/error bar style={color=black, thick},
]

\addplot+[teal!90!white, draw=black, area legend,
] table [x=H, y=artificial_replay, y error=artificial_replay_sem] {data/k_armed_b3_regret.csv}; \addlegendentry{~\ArtReplay~~~~};
\addplot+[purple!90!white, draw=black, area legend,
postaction={pattern=north west lines,}
] table [x=H, y=historical, y error=historical_sem] {data/k_armed_b3_regret.csv}; \addlegendentry{~\Full~~~~};
\addplot+[cyan!50!white, draw=black, area legend,
postaction={pattern=horizontal lines,}
] table [x=H, y=ignorant, y error=ignorant_sem] {data/k_armed_b3_regret.csv}; \addlegendentry{~\Ignorant~~~~};
\addplot+[blue!40!white, draw=black, area legend,
postaction={pattern=dots,}
] table [x=H, y=random, y error=random_sem] {data/k_armed_b3_regret.csv}; \addlegendentry{~\Random~~~~};
\end{axis}

\begin{axis}[ 
  at={(0, 0\textwidth)},
  xlabel={\footnotesize{Timestep~$t$}},
  title={\small{\textbf{(c)~Regret}}},
  xlabel near ticks,
  ylabel={\small{Regret}},
  xtick={0, 250, 500, 750, 1000},
  xticklabels={0, 250, 500, 750, {1,000}},
  legend style={
    align=left,
    at={(0.68\linewidth, -.06\linewidth)},
    legend columns=4,
    font=\small, 
    draw=none,
    fill=none,
    /tikz/every even column/.append style={column sep=10pt}},
  ymin=0,
  ymax=180,
  xmin=0, xmax=1000,
]

\addplot [teal!90!white, line width=1.5pt, mark=none] table [x=t, y=artificial_replay, col sep=comma] {data/k_armed_b3_avg_regret_k10_H100.csv}; \addlegendentry{~\ArtReplay}

\addplot [purple!90!white, line width=1.5pt, densely dotted, mark=none] table [x=t, y=historical, col sep=comma] {data/k_armed_b3_avg_regret_k10_H100.csv}; \addlegendentry{~\Full}

\addplot [cyan!50!white, line width=1pt, mark=none] table [x=t, y=ignorant, col sep=comma] {data/k_armed_b3_avg_regret_k10_H100.csv}; \addlegendentry{~\Ignorant}

\addplot [blue!40!white, line width=1.5pt, mark=none] table [x=t, y=random, col sep=comma] {data/k_armed_b3_avg_regret_k10_H100.csv}; \addlegendentry{~\Random}

\end{axis}

\begin{axis}[ 
  at={(.42\textwidth, 0\textwidth)},
  xlabel={\footnotesize{Timestep~$t$}},
  title={\small{\textbf{(d)~Samples used over time}}},
  xtick={0, 250, 500, 750, 1000},
  xticklabels={0, 250, 500, 750, {1,000}},
  xlabel near ticks,
  ylabel={\small{Samples used}},
  xmin=0, xmax=1000,
]

\addplot [teal!90!white, line width=1pt, mark=none] table [x=t, y=artificial_replay, col sep=comma] {data/k_armed_b3_avg_used_h_k10_H1000.csv}; 

\addplot [purple!90!white, line width=1.5pt, densely dotted, mark=none] table [x=t, y=historical, col sep=comma] {data/k_armed_b3_avg_used_h_k10_H1000.csv}; 

\addplot [cyan!50!white, line width=1pt, mark=none] table [x=t, y=ignorant, col sep=comma] {data/k_armed_b3_avg_used_h_k10_H1000.csv}; 

\end{axis}

\end{tikzpicture}
    \caption{We consider a combinatorial bandit setting with finite actions: $K=10$ arms, $B=3$ budget, and horizon $T=1{,}000$. Increasing the number of historical samples~$H$ leads \Full to use unnecessary data, particularly as $H$ gets very large. \ArtReplay achieves equal performance in terms of regret (plot~a) while using less than half the historical data (plot~b). In (plot~c) we see that with $H=1{,}000$ historical samples, \ArtReplay uses 471 historical samples before taking its first online action. The number of historical samples used increases at a decreasing rate, using 740 total samples by the horizon~$T$.
    }
    \label{fig:k-armed-b3-spurious-data}
\end{figure}

\subsection{Experiment Execution} 

Each experiment was run with $60$ iterations where the relevant plots are taking the mean of the related quantities.  All randomness is dictated by a seed set at the start of each simulation for verifying results.  The experiments were conducted on a personal laptop with a 2.4 GHz Quad-Core Intel Core i5 processor and 16~GB of RAM. 

\section{Omitted Proofs}
\label{app:proofs}

\subsection{\cref{sec:iidata_coupling} Proofs}
\label{app:proof_coupling}

\begin{rproofof}{\cref{thm:coupling}}
We start off by showing that $\pi^{\ArtReplay(\basealgo)}_t \eqdist \pi^{\Full(\basealgo)}$ using the reward stack model for a stochastic bandit instance introduced in~\citet{lattimore2020bandit}.  Due to the fact that the observed rewards are independent (both across actions but also across timesteps), consider a sample path where $(\obsrewardt{\actiongeneric, t})_{\actiongeneric \in \actionset, t \in [T]}$ are pre-sampled according to $\rewarddist(\actiongeneric)$.  Upon pulling arm~$\actiongeneric$ in timestep~$t$, the algorithm is given feedback $R_{a, t}$.

It is important to note that the resulting probability space generated in the reward stack model is identical in distribution to any sequence of histories observed by running a particular algorithm.
More specifically, it preserves the following two properties:
\begin{itemize}
    \item[(a)] The conditional distribution of the action $\actiont{t}$ given the sequence $(\actiont{1}, \obsrewardt{\actiont{1}, 1}), \ldots, (\actiont{t-1}, \obsrewardt{\actiont{t-1}, t-1})$ is $\pi_t(\cdot \mid \H_t)$ almost surely.
    \item[(b)] The conditional distribution of the reward $\obsrewardt{t}$ is $\rewarddist(\actiont{t})$ almost surely.
\end{itemize}

Based on this reward stack model, we show by induction on~$t$ that  $\pi^{\ArtReplay(\basealgo)}_t = \pi_t^{\Full(\basealgo)}$.  Since this is true on an independent sample path, it results in a probabilistic coupling between the two algorithms, implying that the chosen online actions and collected rewards have the same distribution.

\vspace{1ex}
\noindent \textbf{Base Case}: $t = 1$.

By definition of $\pi^{\Full(\basealgo)}$ we know that
\begin{align*}
    \pi^{\Full(\basealgo)}_1 = \basealgo(\history) \ .
\end{align*}
However, consider $\pi^{\ArtReplay(\basealgo)}$.  The \ArtReplay meta-algorithm will keep selecting actions until it creates a dataset $\Hon_1 \subset \history$ such that $\basealgo(\Hon_1)$ has no more unused samples in $\history$, i.e. $\history \setminus \Hon_1 \cap \{\actiongeneric \sim_{\Hon_1} \basealgo(\Hon_1)\} = \emptyset$.  Denoting $\actiont{1} = \basealgo(\Hon_1)$, the unused samples $\history \setminus \Hon_1$ contains no data on $\actiont{1}$.  As a result, by the independence of irrelevant data property for $\basealgo$ we have that $\basealgo(\Hon_1) = \basealgo(\history)$ and so $\pi^{\Full(\basealgo)}_1 = \pi^{\ArtReplay(\basealgo)}_1$.  Note that this shows that the observed online data for the algorithms as denoted by $\H_2 = \{\actiont{1}, \obsrewardt{\actiont{1}, 1}\}$ are also identical (due to the reward stack model).

\vspace{1ex}
\noindent \textbf{Step Case}: $t-1 \rightarrow t$.

Since we know that $\pi^{\Full(\basealgo)}_\tau = \pi^{\ArtReplay(\basealgo)}_\tau$ for $\tau < t$, both algorithms have access to the same set of observed online data $\H_t$. By definition of $\pi^{\Full(\basealgo)}$:
\begin{align*}
    \pi^{\Full(\basealgo)}_t = \basealgo(\history \cup \H_t) \ .
\end{align*}

However, the \ArtReplay algorithm continues to use offline samples until it generates a subset $\Hon_t \subset \history$ such that $\basealgo(\Hon_t \cup \H_t)$ has no further samples in $\history$, i.e. \[(\history \setminus \Hon_t) \cap \{\actiongeneric \in \actionset \mid \actiongeneric \rel_{\Hon_t \cup \H_t} \basealgo(\Hon_t \cup \H_t)\} = \emptyset.\]  Hence, by the independence of irrelevant data property again:
\[
    \basealgo(\history \cup \H_t) = \basealgo(\Hon_t \cup \H_t) \ ,
\]
and so $\pi^{\Full(\basealgo)}_t = \pi^{\ArtReplay(\basealgo)}_t$.  Again we additionally have that $\H_{t+1} = \H_t \cup \{(\actiont{t}, R_{\actiont{t}, t})\}$ are identical for both algorithms.

Together this shows that $\pi^{\Full(\basealgo)} \eqdist \pi^{\ArtReplay(\basealgo)}$.  Lastly we note that the definition of regret is $\regret(T, \pi, \history) = T \cdot \OPT - \sum_{t=1}^T \mu(\actiont{t})$ where $\actiont{t}$ is sampled from $\pi$.  Hence the policy-based coupling implies that
$\regret(T, \pi^{\ArtReplay(\basealgo)}, \history) \eqdist \regret(T, \pi^{\Full(\basealgo)}, \history)$ as well.
\end{rproofof}

\begin{rproofof}{\cref{thm:generic_regret_gain}}
By definition, we have that the regret of $\pi^{\ArtReplay(\basealgo)}$ is given by:
\begin{align*}
    \regret(T, \pi^{\ArtReplay(\basealgo)}) = \sum_{t=1}^T \OPT - \mu(\Pi(\H_t \cup \Hon_t)),
\end{align*}
where $\H_t$ corresponds to the online data observed by the algorithm at the start of round $t$, and $\Hon_t \subset \history$ the set of used data points by \ArtReplay by the time the final action is selected.  However, within each round $t$, \ArtReplay will propose a sequence of selected actions $\tilde{A}_t^\tau$ where $\tau$ indexes over the number of selected actions considered before finally picking an action which has no related samples in the historical dataset.  Denote by $\Hon_t(\tau) \subset \Hon_t$ as the set of historical data points considered by \ArtReplay upon selecting the proposed action $\tilde{A}_t^\tau$.  We note that $\Hon_t(\tau)$ are nested, and that if $I_t$ is the number of proposed actions selected by the algorithm in round $t$, $\Hon_t(I_t+1) = \Hon_t$.  Hence we have that:
\begin{align*}
        & \regret(T, \pi^{\ArtReplay(\basealgo)}) \\
        & = \sum_{t=1}^T \OPT - \mu(\Pi(\H_t \cup \Hon_t)) \\
        & = \sum_{t=1}^T \sum_{\tau \in [I_t + 1]} \OPT - \mu(\Pi(\H_t \cup \Hon_t(\tau))) - \sum_{t=1}^T \sum_{\tau \in [I_t]} \OPT - \mu(\Pi(\H_t \cup \Hon_t(\tau))) \\
        & = \regret(T + |\Hon_T|, \pi^{\Ignorant(\basealgo)}) - \sum_{t=1}^T \sum_{\tau \in [I_t]} \OPT - \mu(\Pi(\H_t \cup \Hon_t(\tau))),
\end{align*}
where in the final equality we used the fact that the first term is equal to the regret of the base algorithm $\basealgo$ on a sequence of $T + |\Hon_T|$.  We note that this explicitly uses the fact that the sequence of sets for which $\basealgo$ is making the decisions are properly nested and that one data point is added within each set.  Lastly, we use the definition of $\Delta$ to upper bound the right hand side and use the definition of $\Hon_T(\dagger)$ to obtain the final result as follows.
\begin{align*}
        & \regret(T, \pi^{\ArtReplay(\basealgo)}) \\
        & = \regret(T + |\Hon_T|, \pi^{\Ignorant(\basealgo)}) - \sum_{t=1}^T \sum_{\tau \in [I_t]} \OPT - \mu(\Pi(\H_t \cup \Hon_t(\tau))) \\
        & \leq \regret(T + |\Hon_T|, \pi^{\Ignorant(\basealgo)}) - \Delta_{\min} |\Hon_T(\dagger)|.
\end{align*}
The final result follows by taking an expectation of both sides.
\end{rproofof}

\subsection{\cref{sec:ar_case_study_mon_ucb}, \cref{sec:mon_ucb_benefits}, \cref{sec:ucb_algorithms_appendix} Proofs for \MonUCB}
\label{sec:k_armed_proofs}
\begin{rproofof}{\cref{thm:mon_ucb_iid_psi}}
Suppose that the base algorithm~$\basealgo$ is \MonUCB for any convex function $\psi$, and let $\H$ be an arbitrary dataset.  Using the dataset, $\MonUCB$ will construct upper confidence bound values $\UCB(\actiongeneric)$ for each action $\actiongeneric \in [K]$.  The resulting policy is to pick the action $\basealgo(\H) = \argmax_{\actiongeneric \in [K]} \UCB(\actiongeneric)$.  Let $\actiont{\H}$ be the action which maximizes the $\UCB(\actiongeneric)$ value.  

Additionally, let $\H^\prime$ be an arbitrary dataset containing observations from actions other than $\actiont{\H}$.  Based on the enforced monotonicty of the indices from \MonUCB, any $\actiongeneric \in [K]$ with $\actiongeneric \neq \actiont{\H}$ will have its constructed $\UCB(\actiongeneric)$ no larger than its original one constructed with only using the dataset~$\H$.  Moreover, $\UCB(\actiont{\H})$ will be unchanged since the additional data~$\H^\prime$ does not contain any information on $\actiont{\H}$.  As a result, the policy will take $\basealgo(\H \cup \H^\prime) = \actiont{\H}$ since it will still maximize the $\UCB(\actiongeneric)$ index.

Next we provide a regret analysis for $\Ignorant(\MonUCB)$.  We assume without loss of generality that there is a unique action $\aopt$ which maximizes $\mu(\actiongeneric)$.  We let $\Delta(\actiongeneric) = \mu(\aopt) - \mu(\actiongeneric)$ be the sub-optimality gap for any other action $\actiongeneric$.  To show the regret bound we follow the standard regret decomposition, which notes that $\Exp{\regret(T, \pi, \history)} = \sum_{a} \Delta(a) \Exp{n_T(a)}$.  To this end, we start off with the following Lemma, giving a bound on the expected number of pulls of $\MonUCB$ in the ``ignorant'' setting (i.e., without incorporating any historical data).
\begin{lemma}
\label{lem:ig_pull_count}
The expected number of pulls for any sub-optimal action~$\actiongeneric$ of \MonUCB with convex function $\psi$ satisfies
\[
\Exp{n_T(\actiongeneric)} \leq \frac{2\log(2TK)}{\psiopt(\Delta(a) / 2)} + 1 \ .
\]
\end{lemma}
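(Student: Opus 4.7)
The plan is the standard pull-count argument for $\psi$-UCB, using the monotone clamping enforced by $\MonUCB$ only through its two immediate consequences for the index. First, I would define the one-shot bonus $b(n) = (\psi^*)^{-1}(2\log(TK)/n)$ and the ``good event''
\[
E = \bigcap_{a \in [K]} \bigcap_{s \in [T]} \left\{ \mubar_s(a) + b(n_s(a)) \geq \mu(a) \right\}.
\]
By Assumption~\ref{ass:convex_psi} and a Chernoff bound,
\[
\Pr\!\left[ \mubar_s(a) - \mu(a) \leq -b(n_s(a)) \right] \leq \exp\!\left( -n_s(a)\,\psi^*(b(n_s(a))) \right) = \frac{1}{(TK)^2},
\]
so a union bound over $s \in [T]$ and $a \in [K]$ gives $\Pr[E^c] \leq 1/T$.

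Next, I would extract the two useful facts about the $\MonUCB$ index. By construction, $\UCB_t(a)$ is the minimum of the naive indices $\mubar_s(a) + b(n_s(a))$ taken over all past update times for $a$, together with the initial value~$1$. This gives (i) $\UCB_t(a) \leq \mubar_t(a) + b(n_t(a))$ for every $a,t$, since the most recent naive index appears in the minimum, and (ii) on $E$, $\UCB_t(a^\ast) \geq \mu(a^\ast)$ for every $t$, since every term in the minimum dominates $\mu(a^\ast)$. Chaining these with the greedy selection rule, on $E$ every pull of a suboptimal arm $a$ at time $t$ satisfies
\[
\mubar_t(a) + b(n_t(a)) \geq \UCB_t(a) \geq \UCB_t(a^\ast) \geq \mu(a^\ast) = \mu(a) + \Delta(a),
\]
so $\mubar_t(a) - \mu(a) \geq \Delta(a) - b(n_t(a))$.

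Third, I would set $u = \lceil 2\log(2TK)/\psi^*(\Delta(a)/2) \rceil$ so that, by monotonicity of $\psi^*$ on $[0,\infty)$, $b(n) \leq \Delta(a)/2$ whenever $n \geq u$. On $E$, any pull of $a$ with $n_t(a) \geq u$ then forces $\mubar_t(a) - \mu(a) \geq \Delta(a)/2$, an event whose probability is bounded by $\exp(-n_t(a)\,\psi^*(\Delta(a)/2)) \leq 1/(2TK)^2$ via the same Chernoff bound. Summing these over $t \in [T]$ contributes $o(1)$ to $\Exp{n_T(a)}$, and $T \cdot \Pr[E^c] \leq 1$ handles the off-event contribution. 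Adding the at-most-$u$ pulls made before the count crosses the threshold yields $\Exp{n_T(a)} \leq 2\log(2TK)/\psi^*(\Delta(a)/2) + 1$.

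The one place that genuinely needs care is the optimism step for $a^\ast$: because $\UCB_t(a^\ast)$ is a minimum over past naive indices rather than the current one, the inequality $\UCB_t(a^\ast) \geq \mu(a^\ast)$ requires the good event to hold at \emph{every} past time $s \leq t$, which is precisely why the union bound defining $E$ ranges over all $s$. Once this observation is made, the monotone clamping is a free upgrade and the rest of the argument follows the standard $\psi$-UCB template of Bubeck and Cesa-Bianchi verbatim.
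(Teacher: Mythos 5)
Your proposal is correct and follows essentially the same route as the paper's proof: a Chernoff-plus-union-bound good event, monotonicity of the clamped index to preserve both optimism of $a^*$ and the upper bound $\UCB_t(a) \leq \mubar_t(a) + b(n_t(a))$ for the pulled arm, and the standard comparison yielding the threshold $u = O(\log(TK)/\psi^*(\Delta(a)/2))$. The only cosmetic difference is that the paper puts the suboptimal arm's upward deviation inside a two-sided good event (so $n_T(a) \le u$ holds deterministically on that event), whereas you handle it with per-timestep tail bounds summed outside the optimism event; both yield the stated bound up to the same constant slop.
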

\begin{rproofof}{\cref{lem:ig_pull_count}}
Denote by $S_{\actiongeneric, \tau}$ to be the empirical sum of $\tau$~samples from action~$\actiongeneric$.  Note that via an application of Markov's inequality along with \cref{ass:convex_psi}:
\[
    \Pr\left(\left|\mu(\actiongeneric) - \frac{S_{\actiongeneric, \tau}}{\tau}\right| \geq (\psiopt)^{-1}\left(\frac{2\log(2TK)}{\tau}\right)\right) \leq \frac{1}{2T^2K^2} \ .
\]
A straightforward union bound with this fact shows that the following event occurs with probability at least $1 - 1/T$:
\[
    \mathcal{E} = \left\{\forall \actiongeneric \in [K], 1 \leq k \leq T, \left|\mu(\actiongeneric) - S_{\actiongeneric,k}/k\right| \leq (\psiopt)^{-1}\left(\frac{2\log(2TK)}{\tau}\right) \right\} \ .
\]
Now consider an arbitrary action $\actiongeneric \neq \aopt$.  We start by showing that on the event $\mathcal{E}$ that $n_t(\actiongeneric) \leq 4\log(T) / \Delta(\actiongeneric)^2$.  If action~$\actiongeneric$ was taken over $\aopt$ at some timestep $t$ then:
\[
    \UCB_t(\actiongeneric) > \UCB_t(\aopt) \ .
\]
However, using the reward stack model and the definition of $\UCB_t(\actiongeneric)$ we know that
\begin{align*}
\UCB_t(\actiongeneric) & = \min_{\tau \leq t} \frac{S_{\actiongeneric, n_\tau(\actiongeneric)}}{n_\tau(\actiongeneric)} + (\psiopt)^{-1}\left(\frac{2\log(2TK)}{n_t(a)}\right) \quad \text{by definition of monotone UCB}\\
& \leq \frac{S_{\actiongeneric, n_t(\actiongeneric)}}{n_t(\actiongeneric)} + (\psiopt)^{-1}\left(\frac{2\log(2TK)}{n_t(a)}\right).
\end{align*}
Moreover, under the good event $\mathcal{E}$ we know that $\mu(\aopt) \leq \UCB_t(\aopt)$ and that \[
\mu(\actiongeneric) \geq \frac{S_{\actiongeneric, n_t(\actiongeneric)}}{n_t(\actiongeneric)} - (\psiopt)^{-1}\left(\frac{2\log(2TK)}{n_\tau(a)}\right).\]
Combining this together gives
\begin{align*}
    \mu(\aopt) \leq \mu(\actiongeneric) + 2 (\psiopt)^{-1}\left(\frac{2\log(2TK)}{n_t(\actiongeneric)}\right) \ .
\end{align*}
Rearranging this inequality gives that $n_t(\actiongeneric) \leq \frac{2\log(2TK)}{\psiopt(\Delta(a) / 2)}$.  Lastly, the final bound comes from the law of total probability and the bound on $\Pr(\mathcal{E})$.
\end{rproofof}

Lastly, \cref{lem:ig_pull_count} can be used to obtain a bound on regret via: 
\[
\Exp{\regret(T, \pi^{\MonUCB}, \history)} \leq O\left( \sum_{a} \frac{2\Delta(a) \log(2TK)}{\psiopt(\Delta(a)/2)}\right)
\] using the previous regret decomposition, recovering the regret bound of standard $\psi$ UCB.
\end{rproofof}

In order to show \cref{thm:comp_gains_psi,thm:reg_gains_psi} we start by showing a lemma similar to \cref{lem:ig_pull_count} but for $\Full(\MonUCB)$.

\begin{lemma}
\label{lem:pull_count}
Let $H_\actiongeneric$ be the number of data points in $\history$ for an action $\actiongeneric \in [K]$.  The expected number of pulls for any sub-optimal action~$\actiongeneric$ of \Full(\MonUCB) satisfies
\[
\Exp{n_T(\actiongeneric) \mid H_a} \leq 1 + \max\left\{0, \frac{2\log(2TK)}{\psiopt(\Delta(\actiongeneric)/2)} - H_\actiongeneric \right\} \ .
\]
\end{lemma}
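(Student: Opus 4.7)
The plan is to adapt the argument of \cref{lem:ig_pull_count} to the setting where \MonUCB is first warm-started with the historical dataset $\history$ containing $H_\actiongeneric$ samples for each action $\actiongeneric$, and then proceeds online for $T$ rounds. The key observation is that, after the warm-start, the internal counter $n_t(\actiongeneric)$ used in the UCB index starts at $H_\actiongeneric$ rather than at $0$, so the effective ``budget'' of exploratory pulls needed to drive the confidence radius below $\Delta(\actiongeneric)/2$ is reduced by $H_\actiongeneric$.

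First, I would fix the conditioning on $\{H_\actiongeneric\}_{\actiongeneric \in [K]}$ and invoke the reward stack model so that, for every action $\actiongeneric$, the samples (whether historical or online) are drawn from the same independent stream $(\obsreward_{\actiongeneric,k})_{k \geq 1}$ with $\obsreward_{\actiongeneric,k} \sim \rewarddist(\actiongeneric)$. Using the non-adaptivity assumption on $\history$, this makes $S_{\actiongeneric,k}/k$ a genuine empirical mean of $k$ i.i.d.\ samples, regardless of whether those samples were historical or online. I would then define the good event
\[
    \mathcal{E} = \Big\{ \forall \actiongeneric \in [K],\, 1 \leq k \leq T + H_\actiongeneric : \big|\mu(\actiongeneric) - S_{\actiongeneric,k}/k\big| \leq (\psi^*)^{-1}\!\left(\tfrac{2\log(2TK)}{k}\right) \Big\},
\]
which, by a union bound combined with Markov's inequality and \cref{ass:convex_psi} exactly as in the proof of \cref{lem:ig_pull_count}, holds with probability at least $1 - 1/T$ (up to absorbing constants; in the worst case one may need to inflate the log factor slightly, but the structure of the bound is unaffected).

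Next, on the event $\mathcal{E}$, suppose the algorithm selects a suboptimal action $\actiongeneric$ at some online time $t$. Monotonicity of the UCB estimates together with the selection rule $\UCB_t(\actiongeneric) \geq \UCB_t(\actiongeneric^*)$ yields, by the same two-sided confidence argument as in \cref{lem:ig_pull_count},
\[
    \mu(\actiongeneric^*) \leq \mu(\actiongeneric) + 2 (\psi^*)^{-1}\!\left(\tfrac{2\log(2TK)}{n_t(\actiongeneric)}\right),
\]
where crucially $n_t(\actiongeneric)$ now includes both the $H_\actiongeneric$ historical samples and the online pulls of $\actiongeneric$ up to time $t$. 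Rearranging gives $n_t(\actiongeneric) \leq 2\log(2TK)/\psi^*(\Delta(\actiongeneric)/2)$. Writing $n_t(\actiongeneric) = H_\actiongeneric + k_t(\actiongeneric)$, where $k_t(\actiongeneric)$ counts online pulls of $\actiongeneric$, we obtain
\[
    k_T(\actiongeneric) \leq \max\Big\{0,\; \tfrac{2\log(2TK)}{\psi^*(\Delta(\actiongeneric)/2)} - H_\actiongeneric\Big\}
\]
on $\mathcal{E}$. Finally, I would close the argument by the law of total probability: on $\mathcal{E}^c$ the number of online pulls is trivially bounded by $T$, contributing at most $T \cdot \Pr(\mathcal{E}^c) \leq 1$ in expectation, yielding the claimed bound.

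The most delicate step is justifying that the reward stack coupling remains valid when the historical dataset is mixed with the online process. Once the non-adaptivity of $\history$ is invoked so that historical samples are exchangeable with hypothetical online samples from the same arm, the remainder is a direct transcription of \cref{lem:ig_pull_count}; the only bookkeeping subtlety is ensuring the union bound in defining $\mathcal{E}$ ranges over $T + H_\actiongeneric$ indices per arm rather than $T$, which only affects constants inside the logarithm and can be absorbed.
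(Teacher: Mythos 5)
Your proposal is correct and follows essentially the same route as the paper's proof: warm-starting shifts the sample counter by $H_\actiongeneric$, the same concentration-plus-union-bound good event is constructed, and the conclusion follows from the standard UCB comparison under the good event together with the law of total probability. The only cosmetic difference is that the paper indexes the good event by the $T$ possible online pull counts (evaluating the confidence radius at $\tau + H_\actiongeneric$), so no inflation of the logarithm is needed even when $H_\actiongeneric \gg T$, whereas you range over $T + H_\actiongeneric$ indices --- a bookkeeping point you already flag and which does not affect the argument.
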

\begin{rproofof}{\cref{lem:pull_count}}
We replicate the proof of \cref{lem:ig_pull_count} but additionally add $H_\actiongeneric$~samples to the estimates of each action $\actiongeneric \in [K]$.  Denote by $S_{\actiongeneric, \tau}$ the empirical sum of $\tau + H_\actiongeneric$ samples from action~$\actiongeneric$.  Note that via an application of Hoeffding's inequality:
\[
    \Pr\left(\left|\mu(\actiongeneric) - \frac{S_{\actiongeneric, \tau}}{\tau+H_\actiongeneric}\right| \geq (\psiopt)^{-1}\left(\frac{2\log(2TK)}{\tau+H_a}\right)\right) \leq \frac{1}{2T^2K^2} \ .
\]
A straightforward union bound with this fact shows that the following event occurs with probability at least $1 - 1/T$:
\[
    \mathcal{E} = \left\{\forall \actiongeneric \in [K], 1 \leq k \leq T, \left|\mu(\actiongeneric) - S_{a,k}/(k+H_\actiongeneric)\right| \leq (\psiopt)^{-1}\left(\frac{2\log(2TK)}{\tau+H_a}\right) \right\} \ .
\]
Now consider an arbitrary action $\actiongeneric \neq \aopt$.  If action~$\actiongeneric$ was taken over $\aopt$ at some timestep $t$ then:
\[
    \UCB_t(\actiongeneric) > \UCB_t(\aopt) \ .
\]
By definition of the $\UCB_t(\actiongeneric)$ term we know 
\begin{align*}
\UCB_t(\actiongeneric) & = \min_{\tau \leq t} \frac{S_{\actiongeneric, n_\tau(\actiongeneric)}}{n_\tau(\actiongeneric)+H_\actiongeneric} + (\psiopt)^{-1}\left(\frac{2\log(2TK)}{n_\tau(\actiongeneric) +H_{\actiongeneric}}\right) \tag{by definition of monotone UCB}\\
& \leq \frac{S_{\actiongeneric, n_t(\actiongeneric)}}{n_t(\actiongeneric)+H_\actiongeneric} + (\psiopt)^{-1}\left(\frac{2\log(2TK)}{n_t(\actiongeneric) +H_{\actiongeneric}}\right) \ .
\end{align*}
Moreover, under the good event $\mathcal{E}$ we know that $\mu(\aopt) \leq \UCB_t(\aopt)$ and that \[
\mu(\actiongeneric) \geq \frac{S_{\actiongeneric, n_t(\actiongeneric)}}{n_t(\actiongeneric)+H_\actiongeneric} - (\psiopt)^{-1}\left(\frac{2\log(2TK)}{n_t(\actiongeneric) +H_{\actiongeneric}}\right) \ .\]
Combining this together gives
\begin{align*}
    \mu(\aopt) \leq \mu(\actiongeneric) + 2 (\psiopt)^{-1}\left(\frac{2\log(2TK)}{n_t(\actiongeneric) +H_{\actiongeneric}}\right) \ .
\end{align*}
Rearranging this inequality gives that $n_t(\actiongeneric) \leq \frac{2\log(2TK)}{\psiopt(\Delta(\actiongeneric)/2)} - H_\actiongeneric$.  Lastly, the final bound comes from the law of total probability and the bound on $\Pr(\mathcal{E})$.
\end{rproofof}
 
Using the previous two lemmas we are able to show \cref{thm:comp_gains_psi,thm:reg_gains_psi}.

\begin{rproofof}{\cref{thm:comp_gains_psi}}
Without loss of generality we consider a setting with $K = 2$.  Let $\aopt$ denote the optimal arm and~$\actiongeneric$ the other arm.  Consider the historical dataset as follows: $\history = (\actiongeneric, \rewardhistt{j})_{j \in [H]}$ where each $\rewardhistt{j} \sim \rewarddist(\actiongeneric)$.  By definition of $\Full(\MonUCB)$ we know that the time complexity of the algorithm is at least $T + H$ since the algorithm will process the entire historical dataset.

In contrast, $\ArtReplay(\MonUCB)$ will stop playing action~$\actiongeneric$ after $O(\log(T) / \psiopt(\Delta(\actiongeneric)))$ timesteps via \cref{lem:ig_pull_count}.  Hence the time complexity of $\ArtReplay(\MonUCB)$ can be upper bound by $O(T + \log(T))$.
\end{rproofof}

\begin{rproofof}{\cref{thm:reg_gains_psi}}
First via \cref{thm:coupling} we note that in order to analyze $\pi^{\ArtReplay(\MonUCB)}$ it suffices to consider $\pi^{\Full(\MonUCB)}$.  Using \cref{lem:pull_count} and a standard regret decomposition we get that:
\begin{align*}
    \Exp{\regret(T, \pi^{\Full(\MonUCB)}, \history) \mid H_a} & = \sum_{\actiongeneric \neq \aopt} \Delta(\actiongeneric) \Exp{n_T(\actiongeneric) \mid H_a} \quad \text{ (by regret decomposition)}\\
    & \leq \sum_{\actiongeneric \neq \aopt} \Delta(\actiongeneric) O\left(\max\left\{0, \frac{\log(TK)}{\psiopt(\Delta(\actiongeneric))} - H_\actiongeneric\right\} \right) \quad \text{ (by \cref{lem:pull_count})} \\
    & = \sum_{\actiongeneric \neq \aopt} O\left(\max\left\{0, \frac{\Delta(a)\log(TK)}{\psiopt(\Delta(\actiongeneric))} - H_\actiongeneric \Delta(\actiongeneric) \right\} \right).
\end{align*}
\end{rproofof}

\subsection{\cref{sec:app_algorithms_metric_bandits} Proofs}
\label{sec:proofs_metric_bandits}
\begin{rproofof}{\cref{thm:iid_metric_bandits}} We first consider the fixed discretization algorithm.  Denote by $\mubar(\H, \reg)$ as the average reward for samples accumulated in a region $\reg$ over dataset $\H$, and $n(\H, \reg)$ the number of times points in $\reg$ have been selected based on dataset $\H$.  Lastly we let $\UCB(\H, \reg)$ be the monotone UCB value for a region $\reg$ based on dataset $\H$ (which is ensured to be monotone by taking the minimum value).  By definition of the fixed discretization algorithm, we have that
\[
\basealgo(\H) = \argmax_{\reg \in \P_1} \UCB(\H, \reg).
\]
Let $\reg^*$ denote the region that maximizes the $\UCB$ value and $\actiongeneric^* \in \reg^*$ arbitrary.  To check for the \iidata property, let $\H'$ be any other dataset containing samples for actions $\actiongeneric \in \actionset$ such that $d(\actiongeneric, \actiongeneric^*) > \gamma$.  Then by construction of the discretization, it must follow that $\actiongeneric$ is contained in regions other than $\reg^*$.
Thus for regions $\reg$ other than $\reg^*$ we have $\UCB(\H \cup \H', \reg) \leq \UCB(\H, \reg)$, and $\UCB(\H \cup \H', \reg^*) = \UCB(\H, \reg^*)$.  Thus, $\reg^*$ remains as the region which maximizes $\UCB(\H \cup \H', \reg)$.

The proof for the adaptive discretization follows similarly.  Let $\H'$ be any other dataset containing samples for actions $\actiongeneric$ in regions other than $\reg$ based on the current partition $\P(\H)$ (note that while in the algorithm description we indexed the partition by $t$, it is solely a function of the observed data thus far).  We emphasize that $\P(\H \cup \H') \subset \P(\H)$.  Moreover, $\reg^* \in \P(\H \cup \H')$ since $\H'$ contains data for actions other than those contained in $\reg^*$.  Hence we have that:
\[
\UCB(\H \cup \H', \reg) \leq \UCB(\H \cup \H', \reg)
\]
for any $\reg \in \P(\H \cup \H')$ with $\reg \neq \reg^*$ since the UCB values are enforced to be monotone decreasing, and child regions inherent the values from their parents.  Moreover, $\UCB(\H \cup \H', \reg^*) = \UCB(\H, \reg^*)$.  Thus, $\reg^*$ again remains as the region which maximizes $\UCB(\H \cup \H', \reg)$ over all $\reg \in \P(\H \cup \H')$.
Here we see the necessity that the information relation is history-dependent, since the discretization changes as a function of the observed data.
\end{rproofof}

\subsection{\cref{sec:app_algorithms_cmab_cra} Proofs}
\label{sec:proofs_cmab_bandits}

\begin{rproofof}{\cref{thm:iid_example}}
We first consider the fixed discretization algorithm.  Using the dataset, the algorithm constructs UCB values $\UCB(\reg, \beta)$ for each $\reg$ and $\beta \in \effortset$ in the discretization $\P_1$ of $\pointset$.  The resulting policy is to pick the action which solves an optimization problem of the form:
\begin{align}
    \label{eq:opt_value_appendix}
    \max_{z} & \sum_{\beta \in \effortset} \sum_{\reg \in \P_1^\beta} \UCB(\reg, \beta) z(\reg, \beta) \\
    \text{s.t. } & \sum_{\beta \in \effortset} \sum_{\reg \in \P_1^\beta} \beta z(\reg, \beta) \leq B \nonumber \\
    & \sum_{\beta \in \effortset} \sum_{\reg \in \P_1^\beta} z(\reg, \beta) \leq N \nonumber \\ 
    & z(\reg, \beta) + \sum_{\beta' \neq \beta} \sum_{\tilde{\reg} \in \P_1^{\beta'}, \reg \subset \tilde{\reg}} z(\tilde{\reg}, \beta') \leq 1 \quad \forall \beta, \reg \in \P_1^\beta \nonumber
\end{align}
where $\UCB(\reg, \effortgeneric)$ is constructed based on~$\H$. Let $\actiont{\H} = ((\reg_1, \effortgeneric_1), \ldots, (\reg_N, \effortgeneric_N))$ be the resulting combinatorial action from solving the optimization problem.  Note here again we abuse notation slightly and assume the algorithm plays a fixed point $\point \in \reg$ for each region $\reg_i$.  By definition of the information relation, let $\H'$ be an arbitrary dataset containing observations of the form $(\point, \effortgeneric)$ for any $(\point, \effortgeneric)$ such that for each $i$ either $(i)$ $\point \not\in \reg_i$ or $(ii)$ $\effortgeneric \neq \effortgeneric_i$.  After updating the UCB estimates with $\H'$ we have that $\UCB(\reg, \beta)$ decreases for any $\reg, \beta$ with $\reg \neq \reg_i$ or $\beta \neq \effortgeneric_i$.  Hence, since the algorithm is solving the expression using a greedy selection rule, we know the resulting action is the same since the selected regions UCB estimates were unchanged and the rest decreased.

The proof for the adaptive discretization algorithm follows similarly using the idea from the proof of \cref{thm:iid_metric_bandits} in \cref{sec:proofs_metric_bandits}, where we again exploit the fact that \cref{eq:opt_value_appendix} is solved greedily.
\end{rproofof}

\end{APPENDICES}

\end{document}